\theoremstyle{plain}
\newtheorem{theorem}{Theorem}[section]
\newtheorem{lemma}[theorem]{Lemma}
\newtheorem{corollary}[theorem]{Corollary}
\theoremstyle{definition}
\theoremstyle{remark}
\newtheorem{remark}[theorem]{Remark}
\newcommand{\p}[2][]{{\mathbb{P}}_{#1} \left( \, #2 \, \right) } 
\newcommand{\e}[2][]{{\mathbb{E}_{#1}\left[ \, #2 \,\right] }} 
\newcommand{\var}[2][]{{\mathrm{Var}_{#1}\left[ \, #2 \,\right] }} 
\newcommand{\ra }{ \rightarrow} 
\newcommand{\asra}{ \overset{\textit{{a.s.}}}{\longrightarrow}} 
\newcommand{\pra}{ \overset{\mathrm{p}}{\longrightarrow}} 
\newcommand{\wra}{ \Rightarrow} 
\newcommand{\skk}[1]{\left\{ #1 \right\}} 
\newcommand{\eps}{ \varepsilon}
\newcommand{\beq}{\begin{equation}}
\newcommand{\eeq}{\end{equation}}
\def\namedlabel#1#2{\begingroup
    #2%
    \def\@currentlabel{#2}%
    \phantomsection\label{#1}\endgroup
}
\icmltitlerunning{Convergence, Sticking and Escape: Stochastic Dynamics Near Critical Points in SGD}
\begin{document}

\onecolumn
\icmltitle{Convergence, Sticking and Escape: Stochastic Dynamics Near Critical Points in SGD}



\icmlsetsymbol{equal}{*}

\begin{icmlauthorlist}
\icmlauthor{Dmitry Dudukalov}{yyy}
\icmlauthor{Artem Logachov}{yyy}
\icmlauthor{Vladimir Lotov}{yyy}
\icmlauthor{Timofei Prasolov}{yyy}
\icmlauthor{Evgeny Prokopenko}{yyy}
\icmlauthor{Anton Tarasenko}{yyy}
\end{icmlauthorlist}

\icmlaffiliation{yyy}{Sobolev Institute of Mathematics,  Novosibirsk, Russia}

\icmlcorrespondingauthor{Dmitry Dudukalov}{d.v.dudukalov@math.nsc.ru}
\icmlcorrespondingauthor{Evgeny Prokopenko}{prokopenko@math.nsc.ru}

\icmlkeywords{SGD, sojourn time, nonconvex function, regularly varying}

\vskip 0.3in


\printAffiliationsAndNotice{} 

\begin{abstract}
    We study the convergence properties and escape dynamics of Stochastic Gradient Descent (SGD) in one-dimensional landscapes, separately considering infinite- and finite-variance noise. Our main focus is to identify the time scales on which SGD reliably moves from an initial point to the local minimum in the same ``basin''. Under suitable conditions on the noise distribution, we prove that SGD converges to the basin's minimum unless the initial point lies too close to a local maximum. In that near-maximum scenario, we show that SGD can linger for a long time in its neighborhood. 
    For initial points near a ``sharp'' maximum, we show that SGD does not remain stuck there, and we provide results to estimate the probability that it will reach each of the two neighboring minima. Overall, our findings present a nuanced view of SGD’s transitions between local maxima and minima, influenced by both noise characteristics and the underlying function geometry.
\end{abstract}

\section{Introduction}


The successful application of Stochastic Gradient Descent (SGD) to neural network training is often attributed to SGD's ability to avoid \textit{sharp local minima} and reach \textit{flat local minima} \cite{hochreiter1997flat, keskar2016large, li2018visualizing} within a reasonable timeframe, which improves performance of the resulting neural network on the test data. However, it was demonstrated that SGD with Gaussian noise exits the basin of any local minimum in exponentially long timescales \cite{freidlin1998random}. This has led to the belief that the empirical success of SGD is due to heavy-tailed noise. The presence of heavy tails has been shown in many deep learning tasks \cite{simsekli2019tail, csimcsekli2019heavy, nguyen2019non, mahoney2019traditional, garg2021proximal, srinivasan2021efficient}. \citet{simsekli2019tail} consider SGD as discrete approximations of L\'{e}vy driven Langevin equations and argue that the duration of time the SGD trajectory spends in each local minimum is proportional to the width of that minimum. Moreover, \citet{wang2021eliminating} analyze SGD with truncated heavy-tailed noise and show that under appropriate time scaling sharp local minima disappear from the trajectory, leaving only flat local minima.

Nevertheless, the initial location of SGD and the choice of time scale determine the system's convergence to various local statistical equilibria, a phenomenon known as metastability \cite{imkeller2008metastable}. Therefore, it is essential to select an appropriate amount of time for SGD to reach an effective solution to the optimization problem. The purpose of this paper is to rigorously study the two weaknesses of SGD: $(1)$ unsuitable time scaling and $(2)$ problematic starting points.

To outline our approach to the aforementioned problems and the resulting findings, we begin by formally defining the object of study. For a given loss function $f$ and a starting point $x \in \mathbb{R}$, SGD produces the following sequence:
\beq\label{sgdmc}
x_k^{\varepsilon}=x_{k-1}^{\varepsilon}-\varepsilon f'(x_{k-1}^{\varepsilon})+\varepsilon \xi_k, \ \ \ x_0^{\varepsilon}=x, \ \ \ k\in \mathbb{N},
\eeq
where $\varepsilon>0$ is the step-size of SGD (which eventually converges to zero) and $\xi_k$ represents random noise with zero mean.  We consider two different cases with regards to our noise (see conditions \ref{H1} and \ref{H2}): either the tail-distribution is regularly varying with parameter $\alpha \in (1, 2),$ which is clearly heavy-tailed,  or it has finite second moment, a scenario that includes both heavy and light tails. 
For a fixed $\varepsilon > 0$, equation~\eqref{sgdmc} defines a sequence in $n$ that either converges or oscillates, as illustrated in Figure~\ref{Himmelblau_function} for a two-dimensional case to provide visual intuition, although our analysis focuses on the one-dimensional setting. In this paper, we study the behavior of the number of iterations $n = n_{\varepsilon}$ as $\varepsilon \to 0$ in the framework of probabilistic limit theorems, under which key phenomena persist: convergence to a minimum, sticking to a critical point, and escaping from the neighborhood of a maximum. While each phenomenon is studied in a separate section, we highlight that their comparative analysis yields additional insights, and moreover, the proofs rely on a shared set of techniques.
\begin{figure}[H]
\includegraphics[width=17.3cm]{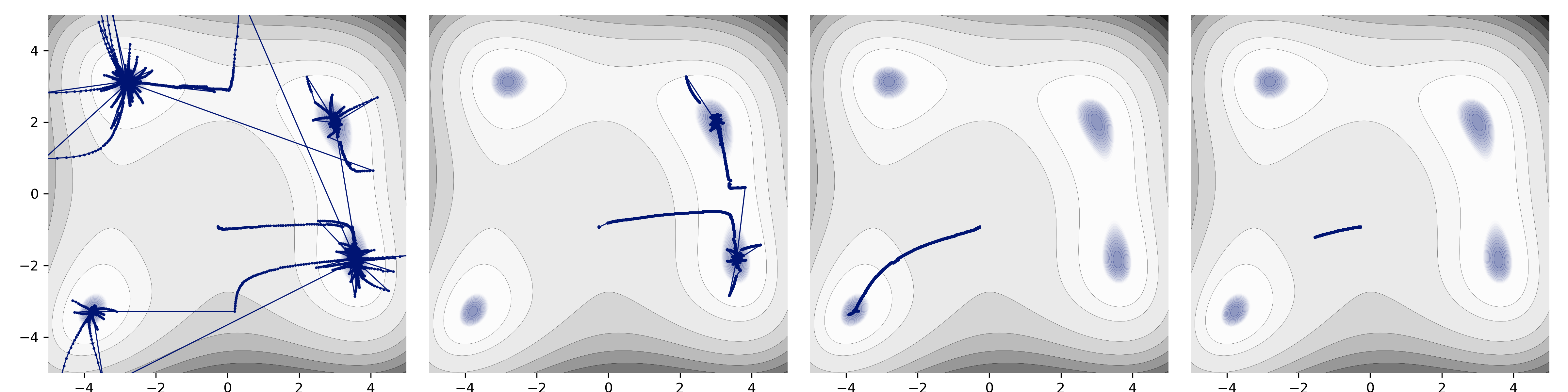}
\centering
\caption{Illustration of the behavior of the studied object. The Himmelblau function is considered as the objective function to be optimized. The noise $\xi_k$ is generated from an isotropic distribution, with the norm $\|\xi_k\|$ following a Pareto distribution with $\alpha=1.2$. We initialize the SGD at the point
$x_0^{\varepsilon} =(-0.27,-0.92)$, which is located in close proximity to a local maximum. The number of steps in the trajectory is the same across all four plots and equals $10^5$. The step size varies from left to right as follows: $\varepsilon=10^{-3},10^{-4},10^{-5},10^{-6}.$ }
\label{Himmelblau_function}
\end{figure}

Regarding the first phenomenon (i.e., convergence to a minimum) the condition that the number of steps is sufficiently small---characterized by $H(1/\varepsilon) n_\varepsilon \rightarrow 0$ in the regularly varying case (where $H$ denotes the tail function of the distribution of $|\xi_1|$) and by $\varepsilon^2 n_\varepsilon \rightarrow 0$ when the second moment is finite---ensures that the sequence remains within the initial basin of attraction.
However, if additionally $\varepsilon n_\varepsilon \rightarrow \infty$, then the sequence will have sufficient time to converge to a small neighborhood of the corresponding local minimum. We demonstrate that as $\varepsilon \rightarrow 0$, the outcome of SGD, halted at time $n_\varepsilon$, converges in probability to that local minimum. 
Furthermore, we establish that almost sure convergence holds provided that the growth rate of $n_{\varepsilon}$ is slightly reduced. Based on the analysis of our proofs, we conjecture that for $n_{\varepsilon}>\varepsilon^{-2}$, almost sure convergence does not hold, implying a restriction on the number of iterations to $n_{\varepsilon} \in (\varepsilon^{-1},\varepsilon^{-2})$. This constraint is well-known in the variable step-size setting \citep{RobbinsMonro1951}.

For the second phenomenon, if the starting point is positioned in a small neighborhood of a critical point which is not a local minimum, the convergence of the sequence to any local minima depends critically on the time scaling and the derivatives of the loss function. Hence, we first show conditions on the number of steps under which trajectory of SGD will not leave a neighborhood of the critical point. Crucial role  here is played by the number of zero derivatives of the loss function which informs how flat is this critical point.

In scenarios where the time scaling is sufficiently large to allow reaching local minima, an important question arises: which minimum will the sequence reach first? Given that the time scaling permits visiting only a single local minimum, only two minima are realistically achievable as the first point of convergence, possibly each with a non-zero probability. In general it is extremely difficult to answer this question analytically. In this paper we consider a simple case of a ``sharp'' maximum. More precisely, our loss function is a piecewise linear function in a neighborhood of such maximum. Here SGD will not ``stick'' to a critical point. The form of such loss function may be quite restrictive, nevertheless, such functions do appear in optimization literature \cite{leenaerts2013piecewise}. We consider the limiting probability of reaching the nearest local minimum on the right or left as $\varepsilon\to 0$
 and express it in terms of two accompanying random walks. We then use theory of random walks concerning reaching a boundary to give an upper bound for the latter formula.

In a broader context, our work fits within the literature studying the dynamics of metastable Markov chains. These are processes that, on small time scales, behave like a stable process defined on a subset of the original state space, while on larger time scales, they exhibit transitions between such subsets, rendering the internal dynamics within each subset effectively ``invisible''. For instance, several studies~\cite{betz2016multi, freidlin2017metastable} analyze the behaviour of metastable Markov chains on a finite state space, where transition probabilities depend on a small parameter~$\varepsilon$.
Specifically regarding SGD, \citet{wang2021eliminating} consider the case where the noise tail-distribution is regularly varying with parameter~$\alpha > 1$. They show that for sufficiently smooth functions~$f$ and a suitable starting point~$x$, the Markov chain~(\ref{sgdmc}) converges to a Markov process under the time scaling~$1/H(1/\varepsilon)$, provided that the clipping mechanism is neglected. The state space of this process consists solely of the local minima of~$f$, and the process visits each state, spending time proportional to the width of the minimum's basin.  However, a proof of weak convergence in the multidimensional case is highly technical \citep{wang2025global}, which obscures the crucial mathematical ideas behind the result. 
%
Unlike \citet{wang2021eliminating}, who focus on the regularly varying case, our results hold as long as either the noise has a finite second moment or the tail-distribution is regularly varying with parameter $\alpha \in (1, 2)$. Additionally, while they identify transitions between minima, we characterize two other distinct time scales: one where the chain explores the neighborhood of a maximum, and another where it explores the neighborhood of a minimum. We also explore SGD dynamics in underexplored scenarios, such as flat critical points and sharp maxima.

We consider a one-dimensional case with a function $f:\mathbb{R} \to \mathbb{R}$ that may possess a discontinuous derivative. While the $\mathbb{R}^1$ scenario may appear limited in practical applications, it provides a rigorous foundation for a comprehensive study of SGD. We assert that many of the observed effects can be effectively extended to higher dimensions. In support of this, \citet{wang2021eliminating} demonstrated that behaviours seen in the one-dimensional setting of SGD closely mirror those in multidimensional contexts. 


The rest of the paper is organized as follows. In Section~\ref{sec_suit_time_scal}, we present the conditions under which the SGD sequence converges (almost surely or in probability) to the minimum within the basin containing the initial value. 
In Section~\ref{sec_stic_crit_point}, we establish the conditions under which the SGD sequence remains in the neighborhood of a critical point for an extended period. 
Section~\ref{sec_sharp_max} addresses the problem of escaping from the neighborhood of a sharp maximum. 
Subsequently, Section~\ref{sec_lit_review} provides a review of related work and further discussion. 
Proofs and technical results are provided in Appendix~\ref{sec_appendix}.

\section{Main Results}
\subsection{Suitable Time Scaling}\label{sec_suit_time_scal}

In this section, we present results that quantify the number of iterations required for the SGD sequence to converge to a local minimum, depending on the assumptions imposed on the noise distribution. From the analysis of our proofs, a hypothesis emerges---known in the context of diminishing step sizes $(\varepsilon = \varepsilon_n)$ from the work~\citep{RobbinsMonro1951}---that for almost sure convergence one must take $n_{\varepsilon} \in \left( \varepsilon^{-1}, \varepsilon^{-2}\right)$ iterations. This hypothesis is confirmed by  simulations (see Figure~\ref{fig_sgd_convergence}). The observation is of practical importance for researchers employing diminishing step-size schedules with constant-step epochs (see, e.g., \cite{you2019does}), as almost sure convergence provides a crucial property:  it guarantees that SGD iterates converge as the training goes on. See Remark~\ref{H2rem1} for further discussion. 

Consider a continuous function $f \colon \mathbb{R} \to \mathbb{R}$ and assume that the starting point of SGD lies in the basin of attraction of a local minimum. Denote by $m \in \mathbb{R}$ the location of this local minimum and let $M_l < m < M_r$ be the boundaries of the basin, where $M_l$ (resp.\ $M_r$) may equal $-\infty$ (resp.\ $+\infty$). We assume that $f$ is differentiable on $(M_l, M_r) \setminus \{m\}$ with $f'(x) \neq 0$ for all $x \in (M_l, M_r) \setminus \{m\}$. Moreover, the derivative $f'$ is assumed to be bounded, right-continuous on $(M_l, M_r)$, and may exhibit jump discontinuities only at the point $m$.

Define the sequence
\begin{equation}\label{SGD_def_2}
x_k^{\varepsilon}=x_{k-1}^{\varepsilon}-\varepsilon f'(x_{k-1}^{\varepsilon})+\varepsilon \xi_k, \ \ \ x_0^{\varepsilon}=x, \ \ \ k\in \mathbb{N},
\end{equation}
where the random variables $\xi_1,\dots,\xi_k,\dots$ are independent and identically distributed  with zero mean $\e{\xi_1}=0$.
We will distinguish two cases (see conditions \ref{H1} and \ref{H2}), which may be interpreted as the cases of infinite and finite second moment of $\xi_1$.
We assume that the noise $\xi_k$ is independent of the SGD iterate $x_{k-1}$. However, we believe that the results of this paper can be extended to the case of a martingale difference, i.e., when  $\e{\xi_k\,|\, \xi_1,\cdots,\xi_{k-1}} = 0$ (\citet{lauand2024}). In the latter case, both standard and typicality sampling-based mini-batch SGD (\citet{peng2019}) can be viewed as special cases of~\eqref{SGD_def_2}.

\subsubsection{Infinite Second Moment}
\begin{itemize}
\item[\namedlabel{H1}{$[\mathbf{H_1}]$}]  \qquad For all $u\geqslant 0$ and some $\alpha\in(1,2)$, the following equalities hold:
\[
    \overline{F}(u):=\p{\xi_1>u}=u^{-\alpha}L^+(u),
\]
\[
    F(-u):=\p{\xi_1\leq-u}=u^{-\alpha}L^-(u),
\]
where $L^+(u)$ and $L^-(u)$ are positive slowly varying functions at infinity (s.v.f.) such that
$$
\lim\limits_{u\rightarrow\infty}\frac{L^+(u)}{L^-(u)}=\kappa\in(0,\infty).
$$
\end{itemize}
Set $H(u):=\overline{F}(u)+F(-u)$. 
We need the class of functions $n_\varepsilon$, defined for $\varepsilon>0$ via growth rate as $\varepsilon\downarrow 0$:
\begin{equation*}
    \begin{split}
    \mathcal{N}_{\mathbf{H_1}} = \{ n_{\varepsilon} \geqslant 0 : n_\varepsilon \text{ is monotone and } \lim\limits_{\varepsilon\downarrow 0}\varepsilon \,n_\varepsilon=\infty, ~\lim\limits_{\varepsilon\downarrow 0}H\left(1/\varepsilon\right) n_\varepsilon = 0 \}.
    \end{split}
\end{equation*}
\begin{theorem} \label{t.1}
Suppose condition \ref{H1} is satisfied, and for some  $\Delta >0$, we have $x_0^{\varepsilon}~\in~(M_{l}~+~\Delta, M_{r}-\Delta).$
Then the following hold:
\begin{enumerate}
    \item[\namedlabel{1_t.1}{$(1)$}] For any function $n_\varepsilon \in \mathcal{N}_{\mathbf{H_1}}$ 
    $$
    x_{\lfloor n_\varepsilon \rfloor}^{\varepsilon} \pra m \text{ as } \varepsilon\downarrow 0.    
    $$
    Sign $\pra$ means convergence in probability.
    \item[\namedlabel{2_t.1}{$(2)$}] Let $L(u) \geqslant 0$ be a s.v.f. such that
    $ u L(u)$  is monotone increasing for $u\geqslant 0$ and $\e{|\xi_1|^\alpha L^\alpha(|\xi_1|)}<\infty$.
    Define $\overline{n}_\varepsilon := \left(\frac{1}{\varepsilon}\right)^\alpha L^\alpha\left(\frac{1}{\varepsilon}\right)$. Then $\overline{n}_\varepsilon \in \mathcal{N}_{\mathbf{H_1}}$ and for any $n_\varepsilon \in \mathcal{N}_{\mathbf{H_1}}$ satisfying $n_\varepsilon \leqslant \overline{n}_\varepsilon$ we have
        $$
    x_{\lfloor n_\varepsilon \rfloor}^{\varepsilon} \asra m  \text{ as } \varepsilon\downarrow 0.
    $$
    Sign $\asra$ means convergence almost surely.
\end{enumerate}
\end{theorem}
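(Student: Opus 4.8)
The plan is to reduce the stochastic recursion to a deterministic gradient-descent recursion perturbed by a uniformly small cumulative-noise term, and then run a pathwise trapping argument driven by the restoring sign of the drift inside the basin $(M_{r-1},M_r)$. Concretely, set $S_n=\sum_{k=1}^n\xi_k$, $u_n=\varepsilon S_n$, and $z_n=x_n^{\varepsilon}-u_n$. Then $z_n$ satisfies the purely deterministic recursion $z_n=z_{n-1}-\varepsilon f'(z_{n-1}+u_{n-1})$, whose increments are bounded by $\varepsilon\|f'\|_\infty$, so that $z$ moves in steps of size $O(\varepsilon)$. The problem then splits into (a) a stochastic part, showing the noise path stays uniformly small, $\sup_{n\le n_\varepsilon}|u_n|\le\rho$ for small $\rho$, and (b) a deterministic claim: if $\sup_n|u_n|\le\rho$ and the number of steps is $\gg 1/\varepsilon$, then $z_n$, and hence $x_n^{\varepsilon}=z_n+u_n$, is driven into and trapped within an $O(\rho)$-neighborhood of $m_r$. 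Here I use the hypotheses on $f'$: on the compact set $\{x\in[M_{r-1}+\Delta',M_r-\Delta']:|x-m_r|\ge\eta\}$ the derivative is bounded away from zero, $|f'|\ge c_0>0$, and has the restoring sign ($f'>0$ on $(m_r,M_r)$, $f'<0$ on $(M_{r-1},m_r)$); combined with the $O(\varepsilon)$ step size this yields monotone progress of $z_n$ toward $m_r$ at rate $\ge\varepsilon c_0$ per step until it reaches the $\rho$-band, while the small steps prevent it from ever crossing a maximum or escaping the band. Since $\varepsilon n_\varepsilon\to\infty$ there are enough steps ($\gg 1/\varepsilon$) to cover the $O(1)$ distance to $m_r$.

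For part~(1) the two stochastic ingredients are established at threshold $c/\varepsilon$, with $L_H:=L^++L^-$. First, the ``no large jump'' event $A_\varepsilon=\{\max_{k\le n_\varepsilon}|\xi_k|\le c/\varepsilon\}$ satisfies $\mathbb{P}(A_\varepsilon^c)\le n_\varepsilon H(c/\varepsilon)\sim c^{-\alpha}n_\varepsilon H(1/\varepsilon)\to0$ by regular variation of $H$ and the defining property of $\mathcal{N}_{\mathbf{H_1}}$. On $A_\varepsilon$ the noise coincides with its truncation $\xi_k^{\le}=\xi_k\mathbf{1}\{|\xi_k|\le c/\varepsilon\}$, which I split into its mean $\mu_\varepsilon=\mathbb{E}[\xi_1^{\le}]$ and the centered part $\tilde\xi_k=\xi_k^{\le}-\mu_\varepsilon$. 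Karamata's theorem gives, for the accumulated bias, $\varepsilon n_\varepsilon|\mu_\varepsilon|=O(n_\varepsilon H(1/\varepsilon))\to0$, and, for the truncated variance, $\mathrm{Var}(\xi_1^{\le})=\sigma_\varepsilon^2\asymp(c/\varepsilon)^{2-\alpha}L_H(c/\varepsilon)$, whence the centered cumulative noise is a martingale with $\mathrm{Var}(\varepsilon\sum_{k\le n_\varepsilon}\tilde\xi_k)=\varepsilon^2 n_\varepsilon\sigma_\varepsilon^2\asymp n_\varepsilon H(1/\varepsilon)\to0$. Kolmogorov's maximal inequality then yields $\mathbb{P}(\sup_{n\le n_\varepsilon}|\varepsilon\sum_{k\le n}\tilde\xi_k|>\rho)\to0$; intersecting with $A_\varepsilon$ gives $\sup_{n\le n_\varepsilon}|u_n|\le\rho$ with probability tending to $1$, and the deterministic trapping above finishes the convergence in probability.

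For part~(2) I first check $n_\varepsilon=\varepsilon^{-\alpha}L^\alpha(1/\varepsilon)\in\mathcal{N}_{\mathbf{H_1}}$: we have $\varepsilon n_\varepsilon=\varepsilon^{1-\alpha}L^\alpha(1/\varepsilon)\to\infty$ since $\alpha>1$, while $H(1/\varepsilon)n_\varepsilon=L_H(1/\varepsilon)L^\alpha(1/\varepsilon)\to0$, the latter being a consequence of $\mathbb{E}[|\xi_1|^\alpha L^\alpha(|\xi_1|)]<\infty$, which, via the substitution $t=(uL(u))^\alpha$ and Karamata, is equivalent to $\int_1^\infty u^{-1}L_H(u)L^\alpha(u)\,du<\infty$. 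Part~(1) then already gives convergence in probability, so the task is to upgrade to almost sure. The large-jump control is upgraded by Borel--Cantelli: writing $\phi(u)=uL(u)$ (increasing by hypothesis) and $v_k=\phi^{-1}(k^{1/\alpha})$, the same substitution shows $\sum_k\mathbb{P}(|\xi_k|>c\,\phi^{-1}(k^{1/\alpha}))\asymp\int_1^\infty v^{-1}L_H(v)L^\alpha(v)\,dv<\infty$, so almost surely no large jump occurs for all small $\varepsilon$. For the centered cumulative noise I invoke a Marcinkiewicz--Zygmund-type strong law with slowly varying normalization: the condition $\mathbb{E}[|\xi_1|^\alpha L^\alpha(|\xi_1|)]<\infty$ is exactly calibrated so that $\varepsilon\sup_{n\le n_\varepsilon}|S_n|\to0$ almost surely. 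On the resulting full-measure event the trapping argument runs pathwise, and monotonicity of $n_\varepsilon$ lets me pass from a subsequence $\varepsilon_j\downarrow0$ to the full continuum, giving $x_{\lfloor n_\varepsilon\rfloor}^{\varepsilon}\to m_r$ almost surely.

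The main obstacle is the almost-sure control of the cumulative noise in part~(2). In the infinite-variance regime $\alpha\in(1,2)$ the ordinary strong law fails and the natural normalization is a slowly varying modification of $n^{1/\alpha}$; one must show that the fixed moment condition $\mathbb{E}[|\xi_1|^\alpha L^\alpha(|\xi_1|)]<\infty$, together with the monotonicity of $uL(u)$, forces $\varepsilon(N)\sup_{n\le N}|S_n|\to0$ along the scale $\varepsilon(N)$ implicitly defined by $N=\varepsilon^{-\alpha}L^\alpha(1/\varepsilon)$, and to do so uniformly enough, via a maximal inequality along a geometric subsequence combined with the monotonicity of $n_\varepsilon$, to cover the continuum of step sizes rather than a single sequence. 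By contrast, the deterministic trapping and the large-jump estimates are comparatively routine once the bounded-increment substitution $z_n=x_n^{\varepsilon}-\varepsilon S_n$ is in place.
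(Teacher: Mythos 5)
Your proposal is correct and takes essentially the same route as the paper: both reduce the theorem to (a) uniform smallness of the cumulative noise $\varepsilon\sup_{k\leqslant n_\varepsilon}\left|\sum_{j\leqslant k}\xi_j\right|$ -- established in probability by truncation at scale $1/\varepsilon$, Karamata's theorem and a maximal inequality (the paper's Lemma~\ref{l.03}), and almost surely by a Marcinkiewicz--Zygmund-type strong law with slowly varying normalization calibrated by $\e{|\xi_1|^\alpha L^\alpha(|\xi_1|)}<\infty$ (the paper's Lemma~\ref{anh2021_t31} and Lemma~\ref{l.01}) -- and (b) a pathwise trapping argument exploiting the restoring sign of $f'$ inside the basin together with $\varepsilon n_\varepsilon\to\infty$ (the paper's Lemmas~\ref{l.1}--\ref{l.3}). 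The remaining differences are technical rather than conceptual: you use Kolmogorov's maximal inequality for the truncated centered sums where the paper invokes Petrov's theorem plus an Ottaviani-type inequality and a monotonicity trick, you propose to prove the strong law via geometric blocks and Borel--Cantelli where the paper cites \citet{anh2021marcinkiewicz}, and your substitution $z_n=x_n^{\varepsilon}-\varepsilon S_n$ plays the role of the paper's direct stopping-time induction on $x_n^{\varepsilon}$.
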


\begin{remark}\label{H1rem1}
\begin{enumerate}
    \item The larger the value of $\alpha$ (i.e., the lighter the tail of the noise distribution), the broader the class of sequences $\mathcal{N}_{\mathbf{H_1}}$ for which  convergence holds. In Theorem~\ref{t.1} we consider the case where the second moment is infinite; however, one may also examine the case $\alpha > 2$, corresponding to the existence of the second moment together with a regularity condition on the noise distribution. In this regime we conjecture that convergence in probability persists up to the level $H^{-1}(1/\varepsilon)$. However, we exclude this case for two reasons. First, our proof technique yields convergence only up to $L(1/\varepsilon) H(1/\varepsilon)$ for some s.v.f. $L(1/\varepsilon)$. Second, we conjecture that almost sure convergence fails to hold (see Remark~\ref{H2rem1}).
       \item  If we consider the timescale of order $H^{-1}\left( 1/\varepsilon \right)$, convergence to a minimum no longer occurs \cite{wang2021eliminating}. More precisely (see \citet[Corollary B.1.]{wang2021eliminating}), if we assume that $f$ has exactly $d\geqslant 1$ local minima at points $m_1,\dots,m_d$, and $d-1$ local maxima at points $M_1,\dots,M_{d-1}$ on its entire domain such that
$$
-\infty = M_0<m_1<M_1<\dots<m_d<M_d = \infty. $$
Then
$$
    x_{\lfloor t/H\left( 1/\varepsilon \right)\rfloor}^{\varepsilon} \wra Y_{t} \text{ as } \varepsilon \downarrow 0,
$$
where $\skk{Y_{t}}_{t \geqslant 0}$ is a Markov process wandering over the set of minima $\skk{m_1, \ldots, m_d}$ and sign $\wra$ means convergence in distribution. Thus, in the case of \ref{H1}, the condition $n_{\varepsilon} = o\left(H^{-1}\left( 1/\varepsilon \right)\right)$ is unimprovable for convergence to a minimum. Simulations illustrating the presence or absence of convergence are provided in the Appendix~\ref{sec_additional_sim_for_part_A}.

\item It follows from Theorem~\ref{t.1} that the upper bound in the set~\ref{H1} must be slightly reduced to ensure almost sure convergence. We conjecture that almost sure convergence fails to hold for $n_{\varepsilon} \geq \varepsilon^{-2}$ even when $\alpha > 2$. For further details, see the subsequent section addressing the case of noise with finite second moment.
\end{enumerate}
\end{remark}
\begin{figure}[htbp]
  \centering
  \begin{subfigure}[t]{0.45\textwidth}
    \includegraphics[width=\linewidth]{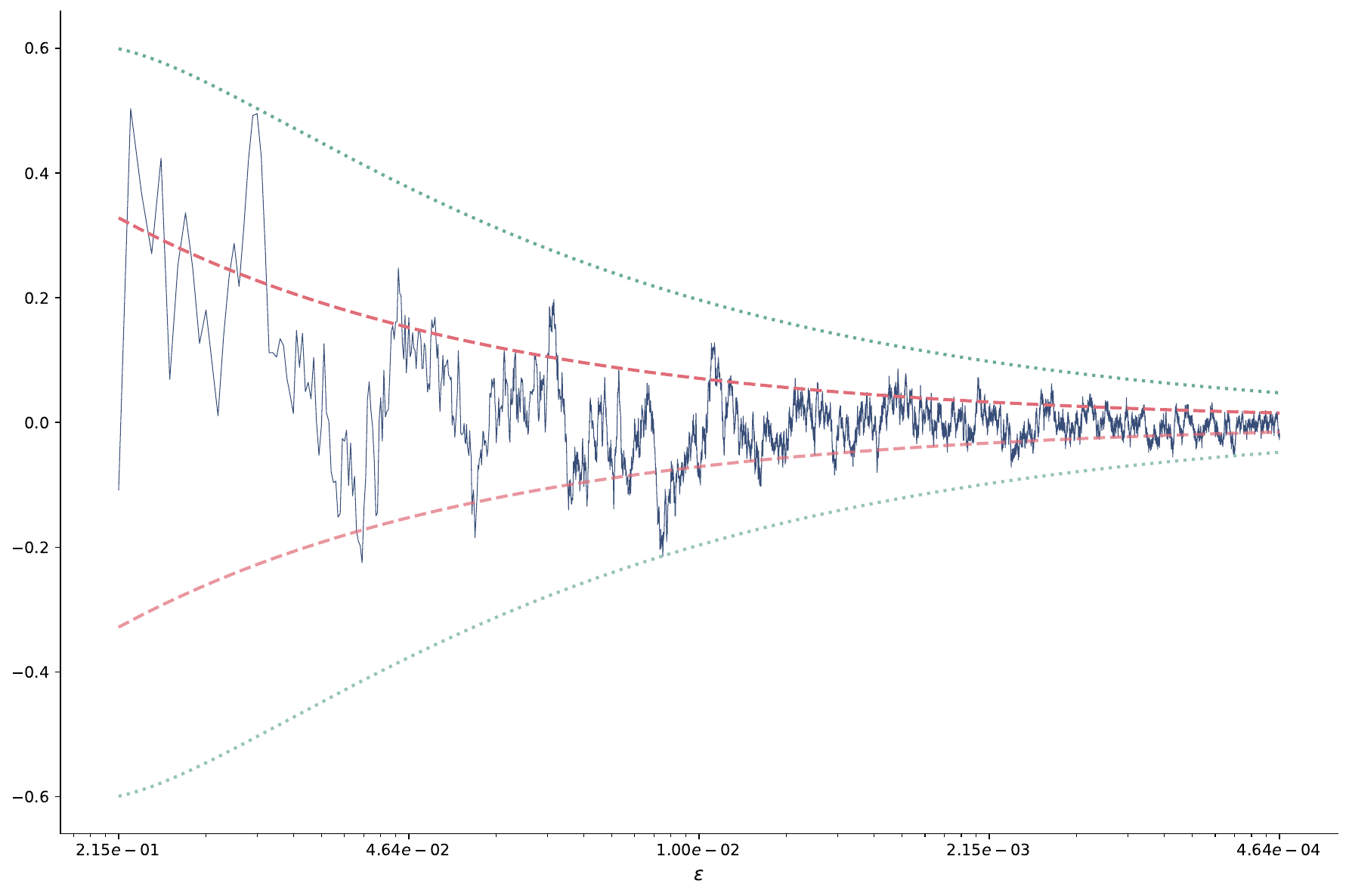}
    \caption{}
    \label{fig:sgd_1.5}
  \end{subfigure}
  \hspace{0.5cm}
  \begin{subfigure}[t]{0.45\textwidth}
    \includegraphics[width=\linewidth]{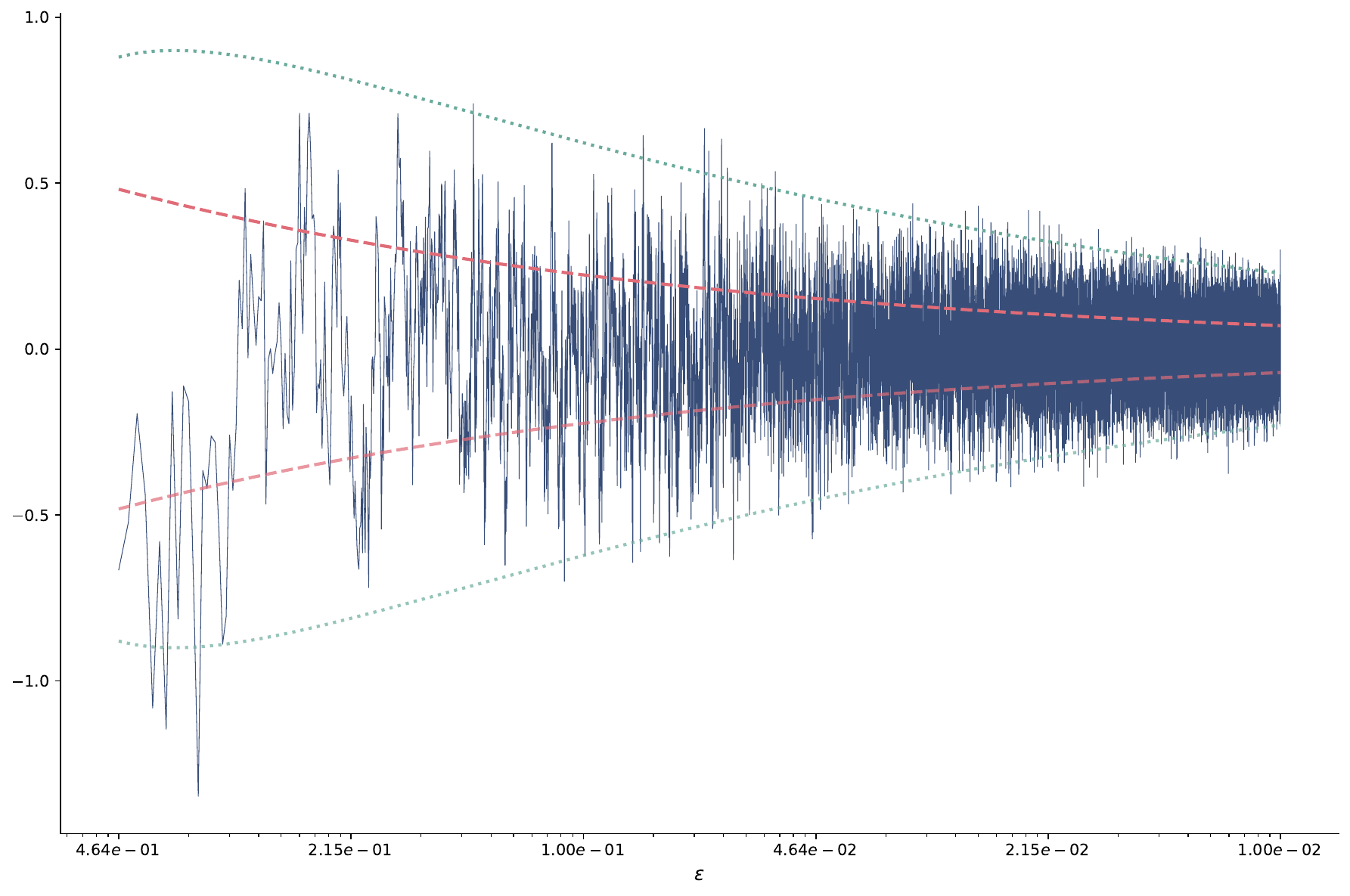}
    \caption{}
    \label{fig:sgd_3}
  \end{subfigure}
  \caption{Plot of $x_{n_{\varepsilon}}^{\varepsilon}$, where the SGD iteration is halted at $n_{\varepsilon}$ as $\varepsilon \downarrow 0$ (blue curve); the standard deviation $\sqrt{\operatorname{Var}(x_{n_{\varepsilon}}^{\varepsilon})}$ (red curve); and the <<law-of-the-iterated-logarithm>> bound $\sqrt{2\operatorname{Var}(x_{n_{\varepsilon}}^{\varepsilon}) \ln \ln n_{\varepsilon}}$ (green curve). 
The plots are generated for the quadratic objective $f(x) = x^{2}/2$ with additive noise $\xi \sim \mathcal{N}(0,1)$ under two asymptotic regimes: 
\cref{fig:sgd_1.5}~$n_{\varepsilon} = \varepsilon^{-3/2}$ and 
\cref{fig:sgd_3}~$n_{\varepsilon} = \varepsilon^{-3}$. 
One can easily observe that in regime~(b) almost sure convergence fails to hold, as the trajectory oscillates. Nevertheless, in both regimes the variance $\operatorname{Var}(x_{n_{\varepsilon}}^{\varepsilon})$ vanishes as $\varepsilon \to 0$, confirming that convergence in probability is preserved.}
  \label{fig_sgd_convergence}
\end{figure}

\subsubsection{Finite Second Moment}
\begin{itemize}
\item[\namedlabel{H2}{$[\mathbf{H_2}]$}]  \qquad  $\e{\xi_1^2}<\infty$.
\end{itemize}
We define the set $\mathcal{N}_{\mathbf{H_2}}$ as follows:
$$
\mathcal{N}_{\mathbf{H_2}} = \{ n_{\varepsilon} \geqslant 0 :~ \lim\limits_{\varepsilon\downarrow 0}\varepsilon\,n_\varepsilon=\infty,~ \lim\limits_{\varepsilon\downarrow 0}\varepsilon^2 n_\varepsilon = 0 \}.
$$
Thus, we do not require regular variation of the noise distribution's tails, but merely impose conditions on their rate of decay. The following theorem holds.
\begin{theorem} \label{t.2}
Suppose condition \ref{H2} is satisfied, and for some $\Delta >0$, we have $x_0^{\varepsilon}~\in~(M_{l}~+~\Delta, M_{r}-\Delta).$
Then the following hold:
\begin{enumerate}
    \item[\namedlabel{1_t.2}{$(1)$}] For any monotone function $n_\varepsilon \in \mathcal{N}_{\mathbf{H_2}}$,
    $$
    x_{\lfloor n_\varepsilon \rfloor}^{\varepsilon} \pra m \text{ as } \varepsilon\downarrow 0.    
    $$

 \item[\namedlabel{2_t.2}{$(2)$}] Define $\overline{n}_\varepsilon := \left(\frac{1}{\varepsilon}\right)^2 \left(\ln \ln \left(\frac{1}{\varepsilon}\right)\right)^{-1}$. Then  for any $n_\varepsilon \in \mathcal{N}_{\mathbf{H_2}}$ satisfying $n_\varepsilon  / \overline{n}_\varepsilon \ra 0$ we have
        $$
    x_{\lfloor n_\varepsilon \rfloor}^{\varepsilon} \asra m \text{ as } \varepsilon\downarrow 0.
    $$
    
\end{enumerate}
\end{theorem}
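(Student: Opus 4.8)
The plan is to separate the analysis into a \emph{noise} part and a \emph{drift} part and then recombine them pathwise. Iterating \eqref{SGD_def_2} gives
\[
x_n^\varepsilon = x_0^\varepsilon-\varepsilon\sum_{k=1}^n f'(x_{k-1}^\varepsilon)+\varepsilon S_n,\qquad S_n:=\sum_{k=1}^n\xi_k .
\]
Set $z_n:=x_n^\varepsilon-\varepsilon S_n$ (the ``drift part'' driven by the noisy iterate) and let $\Xi_\varepsilon:=\max_{0\le k\le n_\varepsilon}\varepsilon|S_k|$ be the maximal accumulated noise displacement over the horizon. The theorem then reduces to two facts: (i) $\Xi_\varepsilon$ is asymptotically negligible, and (ii) on any event where $\Xi_\varepsilon$ and $\varepsilon$ are small, the iterate never leaves the basin $(M_{r-1},M_r)$ and is eventually confined to an arbitrarily small neighbourhood of $m_r$.

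For step (i) I would control $\Xi_\varepsilon$ via assumption \ref{H2}. For part \ref{1_t.2}, Kolmogorov's maximal inequality gives $\p{\Xi_\varepsilon>\delta}\le \e{\xi_1^2}\,\varepsilon^2 n_\varepsilon/\delta^2$, which tends to $0$ since $\varepsilon^2 n_\varepsilon\to 0$ for $n_\varepsilon\in\mathcal N_{\mathbf H_2}$; hence $\Xi_\varepsilon\pra 0$. For part \ref{2_t.2} I would instead invoke the law of the iterated logarithm: on a set of full probability $|S_k|\le 2\sigma\sqrt{k\ln\ln k}$ for all large $k$, where $\sigma^2=\e{\xi_1^2}$. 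Since $k\mapsto k\ln\ln k$ is eventually increasing, $\Xi_\varepsilon\le 2\sigma\,\varepsilon\sqrt{n_\varepsilon\ln\ln n_\varepsilon}+o(1)$, and for $n_\varepsilon=\varepsilon^{-2}L^2(1/\varepsilon)$ this bound is $\sim 2\sigma\,L(1/\varepsilon)\sqrt{\ln\ln(1/\varepsilon)}$, which vanishes exactly under the hypothesis $L(u)=o((\ln\ln u)^{-1/2})$; the same computation verifies $n_\varepsilon\in\mathcal N_{\mathbf H_2}$. Thus $\Xi_\varepsilon\asra 0$.

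For step (ii), the key structural input is that, away from $m_r$ and from the basin boundary, the drift has a definite sign pointing toward $m_r$ and is bounded below: for each $\eta>0$ there is $c(\eta)>0$ with $|f'(x)|\ge c(\eta)$ and $f'(x)(x-m_r)>0$ for all $x\in(M_{r-1}+\tfrac\Delta2,M_r-\tfrac\Delta2)\setminus(m_r-\eta,m_r+\eta)$, since $f'$ is continuous and nonvanishing between consecutive extrema. On the event $\{\Xi_\varepsilon\le\gamma\}$ we have $|x_n^\varepsilon-z_n|\le\gamma$, so whenever $z_{n-1}$ lies outside the band $[m_r-\eta-\gamma,m_r+\eta+\gamma]$ the increment $z_n-z_{n-1}=-\varepsilon f'(x_{n-1}^\varepsilon)$ has magnitude at least $\varepsilon c(\eta)$ and points toward $m_r$. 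Hence $z_n$ moves monotonically toward $m_r$ until it reaches the band --- in particular it never leaves the basin --- and the number of steps needed is of order $1/(\varepsilon c(\eta))\ll n_\varepsilon$ because $\varepsilon n_\varepsilon\to\infty$. Once inside, $z_n$ cannot escape the band by more than one drift step $\varepsilon\|f'\|_\infty$, so for all sufficiently late $n\le n_\varepsilon$ one gets $|x_n^\varepsilon-m_r|\le\eta+2\gamma+\varepsilon\|f'\|_\infty$. Given a target $\rho>0$, choosing $\eta=\rho/3$ fixes $c(\eta)>0$, after which small $\gamma$ and $\varepsilon$ make the right-hand side $<\rho$; combined with $\Xi_\varepsilon\to0$ this yields $x_{\lfloor n_\varepsilon\rfloor}^\varepsilon\to m_r$ in probability (part \ref{1_t.2}) and almost surely (part \ref{2_t.2}).

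The main obstacle is step (ii): one must carefully track the interplay between the cumulative noise $\varepsilon S_n$ and the sign-definite drift to certify, pathwise, both non-escape from the basin and entrapment in a neighbourhood of $m_r$ whose radius shrinks with $\gamma$ and $\varepsilon$. Extra care is needed when $f'$ jumps at $m_r$ (a ``sharp'' minimum), where $c(\eta)$ stays bounded away from $0$, versus the smooth case $f'(m_r)=0$, where $c(\eta)\to0$ and one must let $\eta\to0$ slowly while still guaranteeing the band is reached before the horizon; quantifying this hitting time uniformly in $\varepsilon$ is the delicate point. For part \ref{2_t.2} there is the additional, more routine, subtlety of upgrading the LIL bound (a statement along integers) to almost-sure convergence along the continuous parameter $\varepsilon\downarrow0$, which relies on the monotonicity of $n_\varepsilon$ and of $k\mapsto k\ln\ln k$.
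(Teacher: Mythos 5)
Your proposal is correct, and it follows the same overall skeleton as the paper's argument (the paper proves Theorem~\ref{t.2} by declaring it ``completely analogous'' to Theorem~\ref{t.1}): a uniform bound on the accumulated noise $\varepsilon\max_{k\le n_\varepsilon}|S_k|$, combined with a pathwise drift analysis showing that, on the event where this maximum is small, the iterate reaches and then cannot leave a small neighbourhood of $m_r$. The differences are in the tools, and they are genuine. For the noise bound in part~\ref{1_t.2}, you use Kolmogorov's maximal inequality, which exploits the finite variance of condition~\ref{H2} directly and gives $\p{\Xi_\varepsilon>\delta}\le \e{\xi_1^2}\varepsilon^2 n_\varepsilon/\delta^2$ in one line; the paper's corresponding Lemma~\ref{l.04} instead proves convergence of the endpoint by the CLT and then upgrades it to a maximal bound via an Etemadi-type inequality plus a monotone-comparison argument in $\varepsilon$ --- a template it needs because in the infinite-variance case of Theorem~\ref{t.1} no Kolmogorov inequality is available. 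Your route is more elementary and, as a bonus, shows the monotonicity assumption on $n_\varepsilon$ in part~\ref{1_t.2} is actually unnecessary. For part~\ref{2_t.2} your LIL computation is essentially the paper's Lemma~\ref{l.02}, just phrased with $\sqrt{k\ln\ln k}$ rather than the conjugate slowly varying function $\widetilde L$; and your pathwise deduction of almost-sure convergence from $\Xi_\varepsilon\asra 0$ is cleaner than the paper's proof by contradiction with the events $B_{\tilde\delta}$ and $G_{\tilde\delta/10}(\epsilon)$. For the drift step, your ``de-noised'' trajectory $z_n=x_n^\varepsilon-\varepsilon S_n$ (whose increments are pure drift, with $|x_n^\varepsilon-z_n|\le\Xi_\varepsilon$ uniformly) plays exactly the role of the stopping-time/induction machinery of Lemmas~\ref{l.1}--\ref{l.3}, but organizes the trapping argument more transparently. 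One small remark: your closing worry about the smooth case $f'(m_r)=0$, where one would ``let $\eta\to0$ slowly,'' is unfounded --- convergence in probability and almost surely only require the band argument for each \emph{fixed} target $\rho$, with $\eta=\rho/3$ and hence $c(\eta)>0$ fixed before $\varepsilon\downarrow 0$, exactly as your own concluding sentence does it; this is also how the paper handles it (its $c_{\min}$ depends on the fixed $\delta$, never on $\varepsilon$).
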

\begin{remark}\label{H2rem1}
\begin{enumerate}
    \item Analogously to Theorem~\ref{t.1}, the upper bound in the set $\mathcal{N}_{\mathbf{H_2}}$ must be slightly reduced to guarantee almost sure convergence in Theorem~\ref{t.2}.
    \item The proof of almost sure convergence relies on the law of the iterated logarithm, which implies that almost sure convergence fails for properly normalized partial sums due to the discrepancy between their limit superior and limit inferior. Consequently, we conjecture that almost sure convergence also fails for $n_{\varepsilon} > \varepsilon^{-2}$. Although we do not have a rigorous proof of this claim, the absence of convergence is evident in numerical simulations (see Figure~\ref{fig_sgd_convergence}).
    \item Almost sure convergence possesses a crucial practical property: it guarantees that SGD iterates converge as the training goes on. This property is essential for practitioners employing diminishing step-size schedules with constant-step epochs (see, e.g., \cite{you2019does}). Consequently, our theorem establishes that if the number of iterations within each epoch smaller the critical threshold $\left(\frac{1}{\varepsilon}\right)^2\left(\ln \ln \left(\frac{1}{\varepsilon}\right)\right)^{-1}$, then SGD iterates converge. Conversely, we conjecture that exceeding this threshold leads to a breakdown of convergence.    It is interesting to compare this observation with the classical Robbins--Monro procedure for decaying step size, i.e., when $\varepsilon = \varepsilon_n$ \citep{RobbinsMonro1951}. Roughly speaking, the convergence condition in that framework requires $\varepsilon_n$ to satisfy $\frac{1}{n} < \varepsilon_n < \frac{1}{\sqrt{n}}$. Thus, we have derived an analogous condition, but for the constant step-size case.
    \end{enumerate}
\end{remark}

Throughout the following we will use the following notations
$\boldsymbol{x}:=(x^{(1)},\dots,x^{(d)})\in\mathbb{R}^d$
 and 
$||\boldsymbol{x}||:=\sqrt{(x^{(1)})^2
+\dots+(x^{(d)})^2}$, where integer $d\geqslant 2$.

\begin{remark}
   Let the random vectors $\boldsymbol{\xi}_k=(\xi_k^{(1)},\dots,\xi_k^{(d)})\in \mathbb{R}^d$, $k\in \mathbb{N}$, be independent and identically distributed, $\mathbb{E}\xi_k^{(r)}=0$, $1\leq r \leq d$, and assume that either the random variables $\xi_k^{(r)}$ satisfy condition \ref{H1} for all $1\leq r \leq d$, or $||\boldsymbol{\xi}_k||$ satisfies condition \ref{H2}.

We also require that initial point
$$
\boldsymbol{x}\in \boldsymbol{B}(\boldsymbol{m}, \tilde{R}):=\{\boldsymbol{y}\in\mathbb{R}^d:||\boldsymbol{y}-\boldsymbol{m}||<\tilde{R}\}
$$
for some $0<\tilde{R}<R$  
and function $f$  is strongly convex
in $\boldsymbol{B}(\boldsymbol{m},R)$.
Then for the sequence
$$
\boldsymbol{x}_k^{\varepsilon}=\boldsymbol{x}_{k-1}^{\varepsilon}-\varepsilon \nabla f(\boldsymbol{x}_{k-1}^{\varepsilon})+\varepsilon \boldsymbol{\xi}_k, \ \ \ \boldsymbol{x}_0^{\varepsilon}=\boldsymbol{x}, \ \ \ k\in \mathbb{N},
$$
the statements of Theorems \ref{t.1},
\ref{t.2} hold.

A rigorous proof of this fact seems to be obtainable based on the results of Chapter 2 of the monograph \citet{nesterov2004introductory}, concerning the properties of strongly convex functions, together with the derivations from the proofs of Lemmas \ref{l.1}, \ref{l.3}, which must be carried out for each coordinate of the vector
$\boldsymbol{x}_k^{\varepsilon}$.
\end{remark}

\subsection{Sticking to a Critical Point}\label{sec_stic_crit_point}

In this section, we establish conditions under which the SGD sequence remains in the vicinity of a critical point $c \in \mathbb{R}$ for an extended period. The point $c$ can be either an extremum or a inflection point.

Given the nature of this problem, we only require assumptions about the function to hold within a neighborhood of $c \in \mathbb{R}$. Specifically, suppose there is some $K \geqslant 1$ such that $c \in \mathbb{R}$ is a $K$-critical point of $f$. In other words, for all $k = 1, \ldots, K$ we have $f^{(k)}(c) = 0$ and also $f^{(K+1)}(c) \neq 0$. Furthermore, suppose there is a $\Delta > 0$ so that
\[
\sup_{c - \Delta \leqslant x \leqslant c + \Delta} |f^{(K+1)}(x)| < \infty.
\]

The statements in this section are rather technical; therefore, their precise formulations are provided in the Appendix~\ref{sec_ap_4}. Here, we shall instead present our results by introducing an asymptotic equality up to a slowly varying function:
\begin{equation*}
\delta_1(\varepsilon) \overset{\mathcal{L}}{=} \delta_2 (\varepsilon) \text{ iff there is a s.v.f. $L(\varepsilon)$ such that  } \delta_1(\varepsilon) = L(\varepsilon) \delta_2(\varepsilon).
\end{equation*}

Under these notation, the SGD sequence remains within a shrinking neighborhood of $c$. More precisely, the following two theorems apply.
\begin{theorem}\label{th1_crit_point_simple}
    Let \ref{H1} and some technical conditions hold. Then for all $t >0$  
     as $\varepsilon\downarrow 0$,
    \begin{equation*}
        \sup\limits_{n \leqslant t h(\varepsilon)}|x^{\varepsilon}_{n} - c| \leqslant \delta(\varepsilon) \text{ almost surely (a.s.) }
    \end{equation*}
    holds uniformly over all $x_0 \,:\, |x_0 - c|< \frac{1}{3} \delta(\varepsilon),$ where 
\begin{equation*}
        \delta(\varepsilon) \overset{\mathcal{L}}{=}          \varepsilon^{\frac{\alpha-1}{K-1+\alpha}} \ra 0,     \       h(\varepsilon) \overset{\mathcal{L}}{=}  \varepsilon^{-\frac{\alpha K}{K-1+\alpha}} \ra \infty. 
    \end{equation*}
\end{theorem}

\begin{theorem}\label{th2_crit_point_simple}
    Let \ref{H2} and some technical conditions hold.   
    Then, for all $t >0$ and as $\varepsilon\downarrow 0$ hold
    \begin{equation*}
        \sup\limits_{n \leqslant t h(\varepsilon)}|x^{\varepsilon}_{n} - c| \leqslant \delta(\varepsilon) \text{ a.s. }
    \end{equation*}
    holds uniformly over all $x_0 \,:\, |x_0 - c|< \frac{1}{3} \delta(\varepsilon),$ where
          \begin{equation*}
        \delta(\varepsilon) \overset{\mathcal{L}}{=}
        \varepsilon^{\frac{1}{K+1}} \ra 0, \     h(\varepsilon)  \overset{\mathcal{L}}{=}  \varepsilon^{-\frac{2K}{K+1}} \ra \infty.  
        \end{equation*}
\end{theorem}

\begin{remark}\label{rem_crit_point}
\begin{enumerate}
\item (convergence to maximum). 
From the statement of  Theorems~\ref{th1_crit_point_simple} and \ref{th2_crit_point_simple}, it follows that \(\delta(\varepsilon) \to 0\) and \(h(\varepsilon) \geq \varepsilon^{-1}\) for all sufficiently small \(\varepsilon > 0\). Consequently, even after a sufficiently large number of iterations — namely, \(h(\varepsilon)\) — the iterates will necessarily converge to a critical point, which may be a local maximum. This result implies that, in general, we are not guaranteed to descend to any minimum from an arbitrary starting point within a reasonably long time.

\item (simple example $f(x) = -(x-c)^2/2$). It is possible to describe full dynamic in the simple case $f(x) = -\frac{(x-c)^2}{2}$. Let  $x_0^{\varepsilon} = c+ \varepsilon,$ then possible to write the SGD sequence explicitly:
\begin{equation*}
x_n^{\varepsilon}  = \varepsilon(1+2\varepsilon)^n +  \varepsilon(1+\varepsilon)^n\sum_{r=1}^n\xi_r(1+\varepsilon)^{-r}.
\end{equation*}
If we now consider, in the case~\ref{H2} for example, the subsequence \( x_{t n_{\varepsilon}} \) for  a parameter \( t \geq 0 \) and  \(n_{\varepsilon} =  \frac{1}{2\varepsilon} \ln \frac{1}{\varepsilon}\) , then three distinct asymptotic regimes emerge as \( \varepsilon \downarrow 0 \):
\begin{enumerate}
    \item \( x_{t n_{\varepsilon}} \pra c \) for \( t \in [0, 1) \), i.e. the trajectory is sticked to the maximum;
    
    \item \( x_{t n_{\varepsilon}} \wra  \eta \), where \( \eta \sim \mathcal{N}\!\left(c, \frac{ \var{\xi_1}}{2}\right) \), for \( t = 1 \), i.e. the process balances near the maximum;
    
    \item \( \left|x_{t n_{\varepsilon}}\right| \pra \infty \) for \( t > 1 \),   i.e. the process  escapes the maximum.
\end{enumerate}
Thus, the threshold $n_{\varepsilon} = \frac{1}{2\varepsilon} \ln \frac{1}{\varepsilon}$ represents a sharp phase transition: for iteration counts asymptotically smaller than $n_{\varepsilon}$, the SGD iterates converge to a local maximum, whereas for asymptotically larger iteration counts, they fail to converge to the maximum. Since $K=1$ in this example, Theorem~\ref{th2_crit_point_simple} yields $h(\varepsilon)  \overset{\mathcal{L}}{=}   \varepsilon^{-1}  \overset{\mathcal{L}}{=}   \frac{1}{2\varepsilon} \ln \frac{1}{\varepsilon}$, thereby establishing the sharpness (optimality) of the bound in Theorem~\ref{th2_crit_point_simple} up to a s.v.f.

\item The number of iterations $h(\varepsilon)$ during which SGD remains in the neighborhood of $c$ increases with the number of vanishing derivatives $K$ of the function $f(x)$ at the point $c$. As $K \to \infty$, the number of iterations $h(\varepsilon)$ asymptotically (as $\varepsilon \to 0$) matches the upper bound for the order of $n_{\varepsilon}$ from Theorems~\ref{t.1} and \ref{t.2}.  

\item (pass through a inflection point).  There exists an entirely different perspective on Theorems~\ref{th1_crit_point_simple} and \ref{th2_crit_point_simple}, suggested by a referee, for which we presently lack a complete proof, yet the underlying idea is clearly visible. Suppose that a point \( c \) is a \( K \)-critical inflection point—for simplisity, $K$ is even and  we have 
$$
f^{\prime}(x)=(x-c)^K+o\left((x-c)^K\right) \text{ as } x \ra c.
$$
 One may then ask: what is the asymptotic time required for stochastic gradient descent (SGD) to pass through a neighborhood of such a point? The results of Theorems~\ref{th1_crit_point_simple} and \ref{th2_crit_point_simple} help address this question as follows. Let us now consider the case \ref{H1}, i.e. up to a s.v.f.
 \begin{equation}\label{eq_for_hdelta}
\frac{\varepsilon h^{1 / \alpha}(\varepsilon)}{\delta(\varepsilon)}  \overset{\mathcal{L}}{=}  1, \quad \frac{\varepsilon h(\varepsilon)}{\delta^{1-K}(\varepsilon)}  \overset{\mathcal{L}}{=}  1.
\end{equation}
Define the sequence
\begin{equation}\label{def_normalized_seq}
\hat{x}_t^\varepsilon \stackrel{\text { def }}{=} \frac{x_{\lfloor t \cdot h(\varepsilon)\rfloor}^\varepsilon-c}{\delta(\varepsilon)}.
\end{equation}
If $|x_0 - c| < \delta(\varepsilon), $ then from Theorem~\ref{th1_crit_point_simple}
$$
\begin{aligned}
\hat{x}_{k+1}^\varepsilon-\hat{x}_k^\varepsilon & = \frac{\varepsilon}{\delta(\varepsilon)} \sum_{i=k h(\varepsilon)+1}^{(k+1) h(\varepsilon)}\left(\xi_i- f^{\prime}\left(x_{i}^{\varepsilon}\right) \right) =  \frac{\varepsilon}{\delta(\varepsilon)} \sum_{i=k h(\varepsilon)+1}^{(k+1) h(\varepsilon)}\left(\xi_i- f^{\prime}\left(x_{kh(\varepsilon)}^{\varepsilon}\right)  + O\left( |x_{kh(\varepsilon)}^{\varepsilon} - x_{i}^{\varepsilon}|\right)\right)  \\ 
& = \frac{\varepsilon}{\delta(\varepsilon)} \sum_{i=k h(\varepsilon)+1}^{(k+1) h(\varepsilon)}\left(\xi_i-\left(x_{k h(\varepsilon)}^{\varepsilon}-c\right)^K+o\left(\left(x_{k h(\varepsilon)}^{\varepsilon}-c\right)^K\right) + O(\delta(\varepsilon))  \right) \\
& \stackrel{d}{=} \frac{\varepsilon h^{1 / \alpha}(\varepsilon)}{\delta(\varepsilon)}\left(\frac{1}{h^{1 / \alpha}(\varepsilon)} \sum_{i=1}^{h(\varepsilon)} \xi_i\right)-\frac{\varepsilon h(\varepsilon)}{\delta^{1-K}(\varepsilon)}\left(\frac{x_{k h(\varepsilon)}^{\varepsilon}-c}{\delta(\varepsilon)}\right)^K+o\left(\frac{\varepsilon h(\varepsilon)}{\delta^{1-K}(\varepsilon)}  \right) + O(\varepsilon\,h(\varepsilon)) \\
& \sim-\left(\hat{x}_k^{\varepsilon}\right)^K+\left(\frac{1}{h^{1 / \alpha}(\varepsilon)} \sum_{i=1}^{h(\varepsilon)} \xi_i\right).
\end{aligned}
$$
From generalized central limit theorem, we obtain a recursive equation that no longer contains $\varepsilon$
\begin{equation}\label{eq_rec_seq_limit}
\hat{x}_{k+1}^\varepsilon-\hat{x}_k^\varepsilon=-\left(\hat{x}_k^\varepsilon\right)^K+L(k+1)-L(k),
\end{equation}
where $L(t)$ is a Lévy process with $\alpha-$stable distribution. See Figure~\ref{fig_hist_one_step} for an illustration of the existence of the limiting distribution under the normalizations specified in equation~\eqref{def_normalized_seq}.

Thus, the time required to pass through a fixed neighborhood of the inflection critical point—say, \((c - \Delta, c + \Delta)\)—is asymptotically equivalent to the time needed for the sequence defined by equation~\eqref{eq_rec_seq_limit} to  pass through the expanding neighborhood \(\bigl(-\frac{\Delta}{\delta(\varepsilon)}, \frac{\Delta}{\delta(\varepsilon)}\bigr)\), multiplied by \(h(\varepsilon)\).  

Such a statement is possible only for normalizations satisfying condition~\eqref{eq_for_hdelta}, which precisely arise in our Theorem~\ref{th1_crit_point_simple}. This once again demonstrates the optimality of the bounds established in our results.

\item     The results of Section~ \ref{sec_stic_crit_point} are possible to generalize in the multidimensional case, but the derivations in this scenario become noticeably more complicated.
The condition \ref{H1}  is replaced by multivariate regularly varying (see chapter 6 in \citet{Resnick2007}) and \ref{H2} are replaced by \ref{H2} for the norm $\|\boldsymbol{\xi}_1\|.$ 
The proof of the results in Section~\ref{sec_stic_crit_point} can be reproduced almost verbatim, except that we use multidimensional Taylor's formula and replace the absolute value with the norm in \eqref{eq_crit_esc_1}.
\end{enumerate}
\end{remark}

\begin{figure}[H]
    \centering
    \begin{subfigure}[b]{0.24\textwidth}
        \centering
        \includegraphics[width=\linewidth]{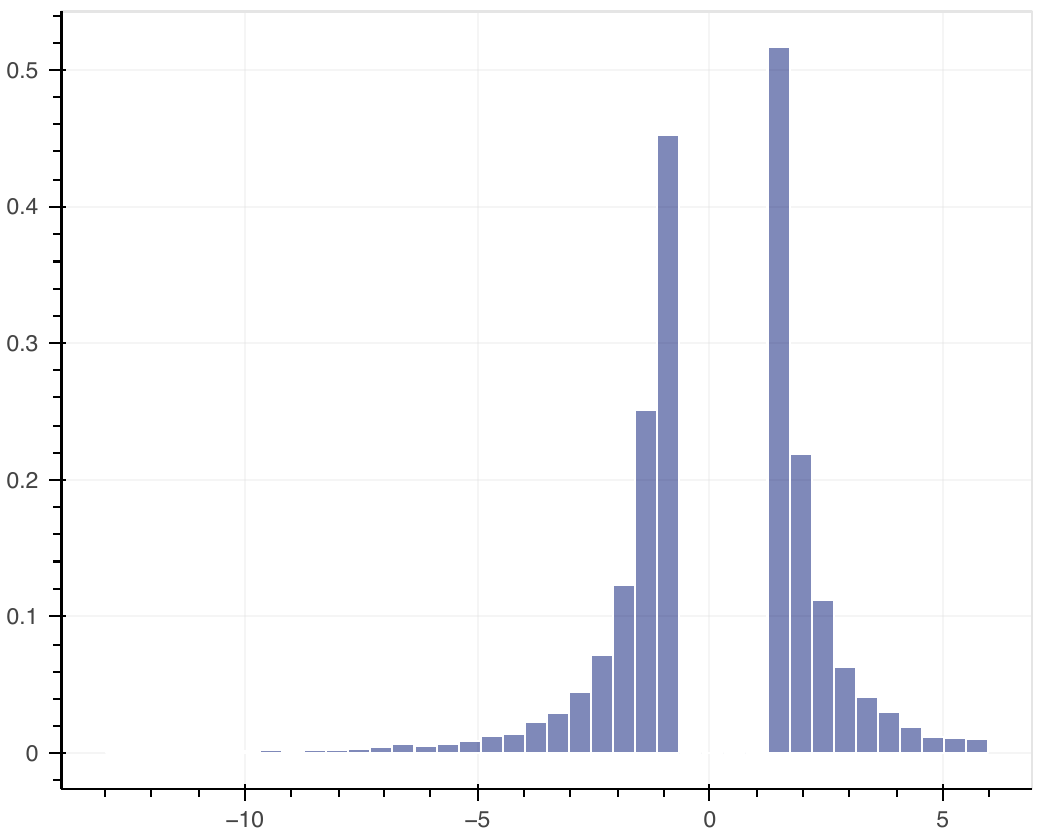}
        \captionsetup{justification=centering}
        \caption{ $\varepsilon = 1, \lfloor h \left( \varepsilon \right) \rfloor = 1,  \delta \left( \varepsilon \right) = 1$}
        \label{fig:hist_eps_1}
    \end{subfigure}
    \hfill
    \begin{subfigure}[b]{0.24\textwidth}
        \centering
        \includegraphics[width=\linewidth]{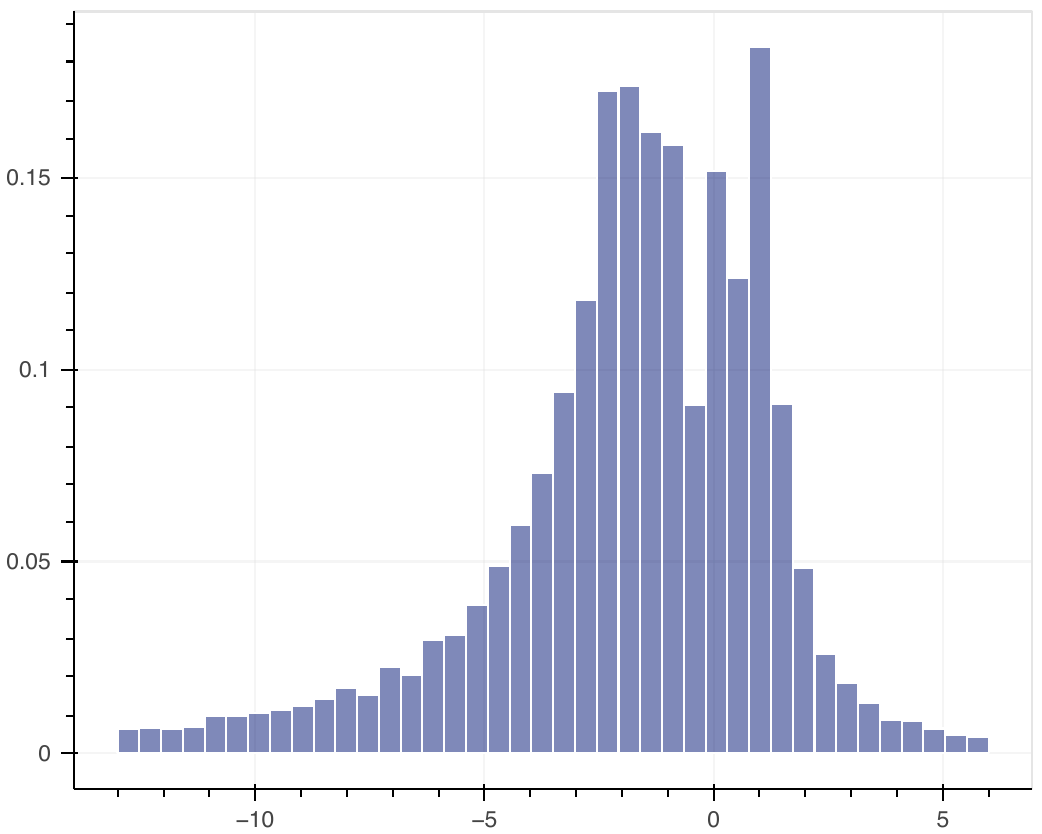}
        \captionsetup{justification=centering}
        \caption{$\varepsilon = 0.8, \lfloor h \left( \varepsilon \right) \rfloor = 2, \delta \left( \varepsilon \right) \approx 0.94$}
        \label{fig:hist_eps_0.8}
    \end{subfigure}
    \hfill
    \begin{subfigure}[b]{0.24\textwidth}
        \centering
        \includegraphics[width=\linewidth]{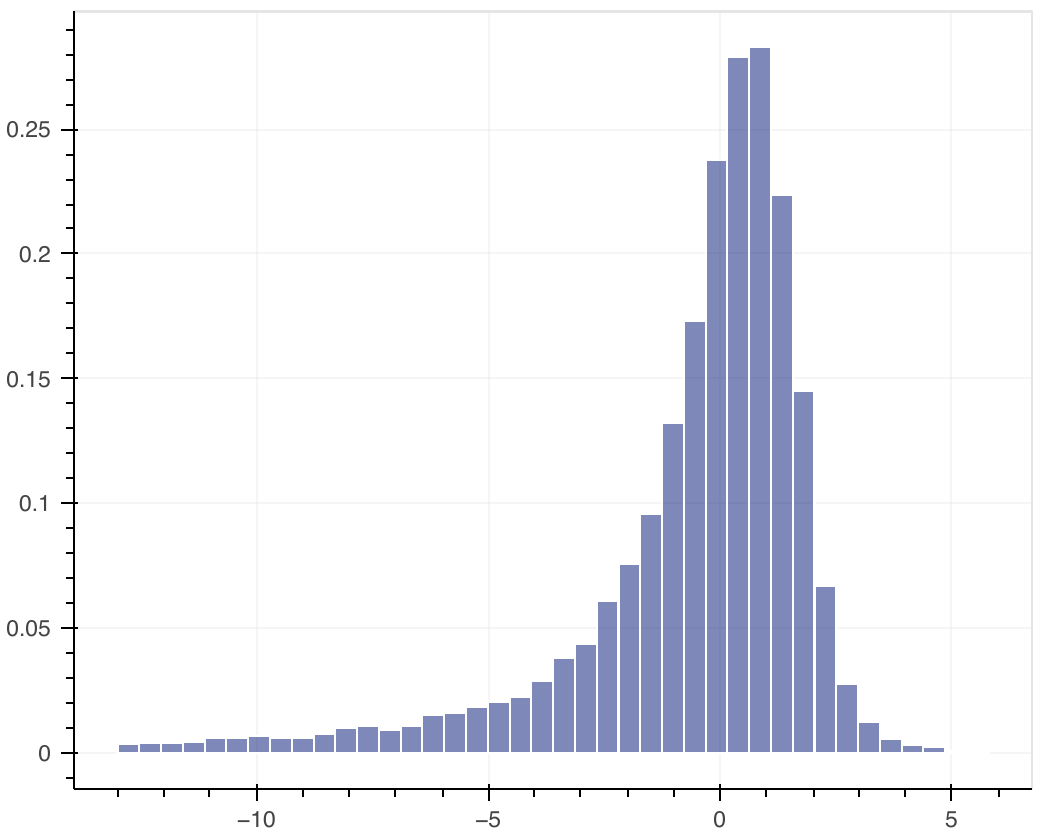}
        \captionsetup{justification=centering}
        \caption{$\varepsilon = 0.1, \lfloor h \left( \varepsilon \right) \rfloor = 19, \delta \left( \varepsilon \right) \approx 0.55$}
        \label{fig:hist_eps_0.1}
    \end{subfigure}
    \hfill
    \begin{subfigure}[b]{0.24\textwidth}
        \centering
        \includegraphics[width=\linewidth]{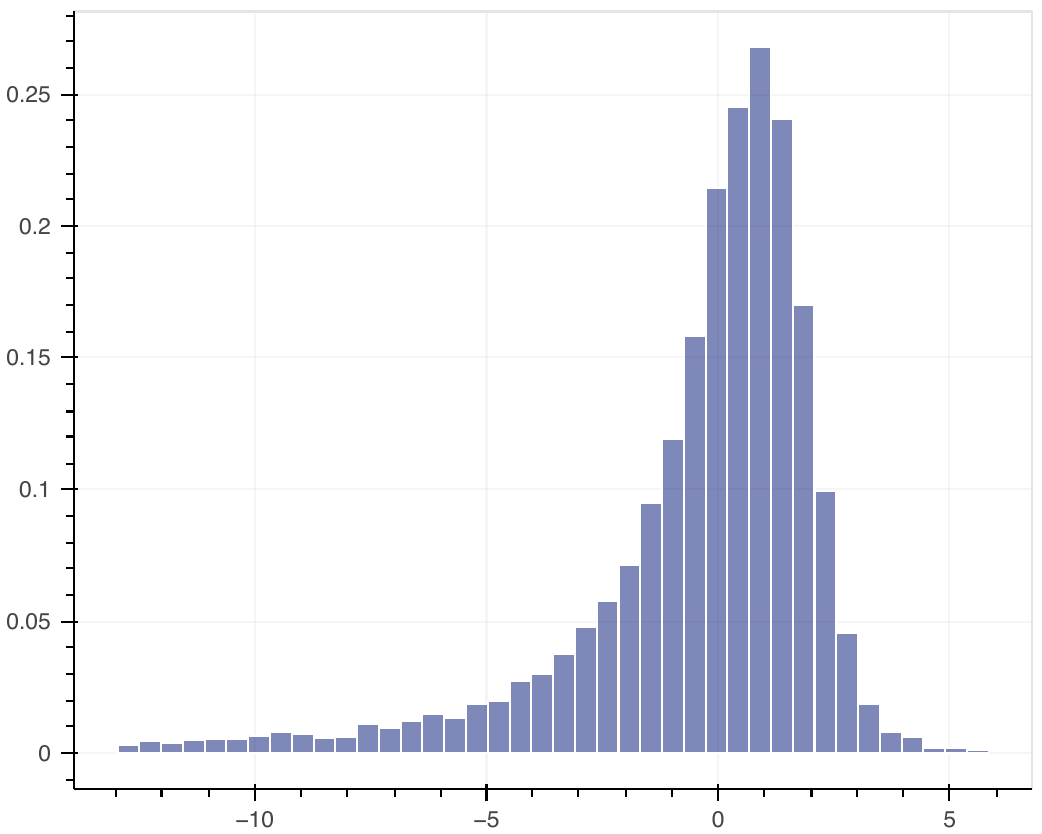}
        \captionsetup{justification=centering}
        \caption{$\varepsilon = 0.001, \lfloor h \left( \varepsilon \right) \rfloor = 5995, \delta \left( \varepsilon \right) \approx 0.17$}
        \label{fig:hist_eps_0.001}
    \end{subfigure}
    \caption{Histograms of the distribution of the normalized SGD sequence $x^{\varepsilon}_{\lfloor h \left( \varepsilon \right) \rfloor}/\delta(\varepsilon)$ for the objective function $f(x) = x^{3}/3$, initial value $x_{0} = \delta(\varepsilon)/2$, and noise $\xi$ following a Pareto distribution $\alpha = 1.7$, computed for various values of $\varepsilon$. As $\varepsilon$ decreases, the distribution stabilizes, which confirms the appropriateness of the normalization by the scaling functions $h(\varepsilon)$ and $\delta(\varepsilon)$ specified in equations~\eqref{eq_for_hdelta}~and~\eqref{def_normalized_seq}.}
    \label{fig_hist_one_step}
\end{figure}

\subsection{Leaving Neighborhood of a Sharp Maximum}\label{sec_sharp_max}
In this section, we consider the issue of escaping from the neighborhood of the sharp maximum. Being at the maximum point, the SGD sequence  can randomly go either to the right or to the left of this maximum. Thus, we will be interested in the probability of going beyond the right (or left) boundary of the neighborhood of the sharp maximum. 

Without loss of generality, we will assume that the maximum is located at zero. In this section, we assume that for some $\delta > 0$, the function $f(x)$ in the $\delta$-neighborhood of the maximum has a V-shape
\begin{equation}\label{eq_sharp_max_3}
f^{\prime}(x)=\left\{\begin{array}{l}
c_l, x \in (-\delta, 0), \\
-c_r, x \in[0, \delta),
\end{array}\right.
\end{equation}
where $c_l, c_r > 0$, $\delta>0$.

If the starting point is bounded away from zero—that is, there exists $\delta' > 0$ such that $x_0^{\varepsilon} \geq \delta'$ for all sufficiently small $\varepsilon > 0$—then the iterates immediately fall within the framework of Section~\ref{sec_suit_time_scal}, and the SGD sequence converges to the minimum of the basin of attraction containing $x_0^{\varepsilon}$. In this section, we consider the case where $x_0^{\varepsilon} \ra 0 $ when $\varepsilon \downarrow 0.$ For definiteness, we will assume that $x_0^{\varepsilon} \geqslant 0$ (in the case where $x_0^{\varepsilon} <0$ we need to redefine the first  jump in \eqref{eq_RRW} to $X_1 = \xi_1 - c_l$). 

We have
\begin{equation}\label{eq_sharp_max_2}
x_n^{\varepsilon}= x_0^{\varepsilon}  + \varepsilon \sum_{i=0}^{n-1}\left(\xi_{i+1}-f^{\prime}\left(x_i^{\varepsilon}\right)\right),
\end{equation}
where $\left\{\xi_i\right\}_{i \geqslant 1}$ are independent and identically distributed  random variables with $\mathbb{E} \xi_i=0$.
Let's define the first moment of leaving the $\delta$-neighborhood of zero
$$
N^{\varepsilon}:=\inf \left\{n \geqslant 1: x_n^{\varepsilon} \notin[-\delta, \delta]\right\}. 
$$
 We are interested in the limits of the probabilities that the SGD sequence will go to the left or right basin 
\begin{equation}\label{eq_sharp_max_9}
    \lim\limits_{\varepsilon\downarrow 0 } \p{x_{N^{\varepsilon}}^{\varepsilon} < -\delta},  \quad \lim\limits_{\varepsilon\downarrow 0 } \p{x_{N^{\varepsilon}}^{\varepsilon} > \delta}.
\end{equation}
For $\varepsilon >0, k \geqslant 1$ we define the moments of crossing the maximum
\begin{equation*}
    \begin{split}
        \tau_0^{\shortuparrow}(\varepsilon):=0, \ 
        \tau_{2 k-1}^{\shortdownarrow}(\varepsilon):=\inf \left\{j>\tau_{2k-2}^{\shortuparrow}(\varepsilon): x_j^{\varepsilon}<0\right\}, \\
        \tau_{2 k}^{\shortuparrow}(\varepsilon):=\inf \left\{j>\tau_{2 k-1}^{\shortdownarrow}(\varepsilon): x_j^{\varepsilon} \geqslant 0\right\}.
    \end{split}
\end{equation*}
From \eqref{eq_sharp_max_3} and \eqref{eq_sharp_max_2} we can see that between the (possibly infinite) moments of $\tau_{2 k-1}^{\shortdownarrow}(\varepsilon) \leqslant j < \tau_{2 k}^{\shortuparrow}(\varepsilon)$ $\left( \text{or } \tau_{2 k}^{\shortuparrow}(\varepsilon) \leqslant j < \tau_{2 k+1}^{\shortdownarrow}(\varepsilon)\right)$ sequence (by $j$) $\varepsilon^{-1} x_j^{\varepsilon}$ behaves like a random walk $
\sum_{i} (\xi_i - c_l)$  $\left(\text{or } \sum_{i} (\xi_i +c_r)\right),$ that goes to $-\infty$ $(\text{or } +\infty)$  by the law of large numbers. From here it is easy to show (see Lemma~\ref{bound_prob_lemma}) that 
\begin{equation*}
\text{ for all } \varepsilon>0 \ \ \p{N^{\varepsilon}<\infty} = 1   \text{ and } N^{\varepsilon} \pra \infty \text{ as }  \varepsilon\downarrow 0.
\end{equation*}

The key idea for evaluating the probabilities of \eqref{eq_sharp_max_9} is the definition of a runaway random walk (RRW) $X_0 := 0$ and 
\begin{equation}\label{eq_RRW}
X_n:=\left\{\begin{array}{l}
X_{n-1} + (\xi_n - c_l), \text{ if }  X_{n-1} < 0, \\
X_{n-1} + (\xi_n + c_r), \text{ if }  X_{n-1} \geqslant 0.
\end{array}\right.
\end{equation}
It is easy to see that if $X_{n-1} \geqslant 0,$ then the next jump has positive drift $\e{\xi_n + c_r} > 0,$ and hence by this jump RRW will  further move from zero to $+\infty$. Similarly, when $X_{n-1} <0,$ RRW will move away from zero to $-\infty.$
If $x_0^{\varepsilon} = 0$, then for all $\varepsilon>0$ up to moment $N_\varepsilon$, the sequences of stopping moments for SGD and RRW are the same 
\begin{equation*}
\skk{\tau_{2 k}^{\shortuparrow}(\varepsilon), \tau_{2 k+1}^{\shortdownarrow}(\varepsilon)}_{k \geqslant 0} = \skk{\tau_{2 k}^{\shortuparrow}, \tau_{2 k+1}^{\shortdownarrow}}_{k \geqslant 0},
\end{equation*}
where
\[
    \tau_0^{\shortuparrow}=0, \ 
    \tau_{2 k-1}^{\shortdownarrow}=\inf \left\{j>\tau_{2k-2}^{\shortuparrow}: X_j<0\right\}, \ 
\]
\[
    \tau_{2 k}^{\shortuparrow}=\inf \left\{j>\tau_{2 k-1}^{\shortdownarrow}: X_j \geqslant 0\right\}.
\]
If $x_0^{\varepsilon} \neq 0$, then the SGD and RRW sequences may cross zero at different time points, which can lead to their convergence in opposite directions. Nevertheless, the following theorem holds.
\begin{theorem}\label{th_sharp_max} For all $\delta >0 ,$  $x_{0}^{\varepsilon}\geqslant 0,$ $x_{0}^{\varepsilon} = o(\varepsilon)$ as $\varepsilon\downarrow 0$ we have
\begin{equation}\label{eq_sharp_max_4}
\begin{split}
\lim\limits_{\varepsilon\downarrow 0 } &\ \p{x_{N^{\varepsilon}}^{\varepsilon} > \delta}   = \p{ \lim\limits_{n \ra \infty} X_n  = + \infty} = \sum_{k \geqslant 0} \left( \p{  \tau_{2 k}^{\shortuparrow} < \infty} -\p{\tau_{2 k +1}^{\shortdownarrow} <\infty}\right), \\ 
\lim\limits_{\varepsilon\downarrow 0 } &\ \p{x_{N^{\varepsilon}}^{\varepsilon} < -\delta}  = \p{ \lim\limits_{n \ra \infty} X_n  = - \infty}  = \sum_{k \geqslant 0} \left( \p{  \tau_{2 k+1}^{\shortdownarrow} < \infty} -\p{\tau_{2 k +2}^{\shortuparrow} <\infty}\right). \\ 
\end{split}
\end{equation}
\end{theorem}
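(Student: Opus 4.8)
The plan is to transfer the whole question onto the runaway random walk (RRW) via the scaling $Y_n:=\varepsilon^{-1}x_n^\varepsilon$ and then to analyse $\{X_n\}$ on its own. Since $f'$ is given by \eqref{eq_sharp_max_3} on $[-\delta,\delta]$, the recursion \eqref{eq_sharp_max_2} shows that, for $n\le N^\varepsilon$, the rescaled iterate $Y_n$ obeys exactly the RRW rule \eqref{eq_RRW} started from $Y_0=\varepsilon^{-1}x_0^\varepsilon\ge 0$, and the exit event $\{x_{N^\varepsilon}^\varepsilon>\delta\}$ becomes $\{Y\text{ leaves }[-\delta/\varepsilon,\delta/\varepsilon]\text{ through the top}\}$. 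Thus everything reduces to the RRW on the growing interval $[-b,b]$ with $b=\delta/\varepsilon\to\infty$ and a start $Y_0\to0$.

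First I would establish transience: from any starting point $\mathbb{P}(X_n\to+\infty)+\mathbb{P}(X_n\to-\infty)=1$. The RRW is a time-homogeneous Markov chain, so by the strong Markov property applied at each crossing time it suffices to note that right after an up-crossing $\tau_{2k}^{\shortuparrow}$ the chain sits at a point $\ge0$ and, while it stays $\ge0$, evolves as a random walk with increments $\xi_i+c_r$ of positive mean $c_r$; by the standard ladder-height fact $p:=\mathbb{P}(\text{such a walk never drops below its start})>0$, uniformly in the nonnegative overshoot. Hence each crossing is the last one with probability at least $p$, the number of crossings is dominated by a geometric variable and is a.s.\ finite, and the chain escapes to $\pm\infty$. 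Granting transience, the second equality in \eqref{eq_sharp_max_4} is a telescoping identity: $\{X_n\to+\infty\}$ is the disjoint union over $k$ of $\{\tau_{2k}^{\shortuparrow}<\infty,\ \tau_{2k+1}^{\shortdownarrow}=\infty\}$ (a path tending to $+\infty$ has a last crossing, which must be an up-crossing), and since $\{\tau_{2k+1}^{\shortdownarrow}<\infty\}\subseteq\{\tau_{2k}^{\shortuparrow}<\infty\}$ each summand equals $\mathbb{P}(\tau_{2k}^{\shortuparrow}<\infty)-\mathbb{P}(\tau_{2k+1}^{\shortdownarrow}<\infty)$.

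It remains to prove the first equality. I would treat $x_0^\varepsilon=0$ first: here $Y_n=X_n$ up to $N^\varepsilon$, so $N^\varepsilon$ is precisely the exit time of $X$ from $[-\delta/\varepsilon,\delta/\varepsilon]$. On $\{X_n\to+\infty\}$ one has $\inf_nX_n>-\infty$, so once $\delta/\varepsilon$ exceeds $-\inf_nX_n$ the walk can only leave through the top; thus $\mathbf{1}_{\{X_n\to+\infty\}}\le\liminf_\varepsilon\mathbf{1}_{\{x^\varepsilon_{N^\varepsilon}>\delta\}}$, and Fatou gives $\mathbb{P}(X_n\to+\infty)\le\liminf_\varepsilon\mathbb{P}(x^\varepsilon_{N^\varepsilon}>\delta)$. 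The symmetric statement on $\{X_n\to-\infty\}$ together with transience yields the reverse inequality for the $\limsup$, hence convergence.

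For the general start $x_0^\varepsilon=o(\varepsilon)$ (so $Y_0\ge0$, $Y_0\to0$) I would run $Y$ (from $Y_0$) and $X$ (from $0$) on the same noise. The key is the monotone coupling $Y_n\ge X_n$ for all $n$, with gap $D_n=Y_n-X_n\ge0$ nondecreasing and increasing (by $c_r+c_l$) only on steps where $X_n<0\le Y_n$. Monotonicity gives $\{X\text{ exits top}\}\subseteq\{Y\text{ exits top}\}$, hence the lower bound from the $x_0^\varepsilon=0$ case, and it forces any discrepancy to be one-sided, so $\mathbb{P}(Y\text{ top})-\mathbb{P}(X\text{ top})=\mathbb{P}(Y\text{ top},X\text{ bottom})$. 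On this last event the two walks must decouple, and the first decoupling step occurs while the gap still equals $Y_0$, i.e.\ requires $X$ to enter the shrinking window $[-Y_0,0)$; therefore $\mathbb{P}(Y\text{ top},X\text{ bottom})\le\mathbb{P}(\exists n:\,X_n\in[-Y_0,0))$. The main obstacle is to show this last probability tends to $0$ as $Y_0\downarrow0$: I would argue that the events $E_y=\{\exists n:\,X_n\in[-y,0)\}$ decrease as $y\downarrow0$ to an event contained in $\{X\text{ visits }(-1,0)\text{ infinitely often}\}$, which is null by transience, so continuity from above gives $\mathbb{P}(E_{Y_0})\to0$ and closes the gap between the two bounds. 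One should also check the harmless boundary effects of order $Y_0$ at the exit, and note that no density of the noise is needed, since the argument uses only transience and the nesting of the windows. The left-hand identity in \eqref{eq_sharp_max_4} follows by the symmetric argument.
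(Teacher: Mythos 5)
Your overall route is genuinely different from the paper's, and most of it is sound. The paper decomposes $\p{x^\varepsilon_{N^\varepsilon}>\delta}$ over the crossing intervals $\left[\tau^{\shortuparrow}_{2k}(\varepsilon),\tau^{\shortdownarrow}_{2k+1}(\varepsilon)\right)$, proves term-by-term convergence $\p{\tau^{\shortuparrow}_{2k}(\varepsilon)\leqslant N^\varepsilon}\to\p{\tau^{\shortuparrow}_{2k}<\infty}$ (by showing the SGD and RRW crossing times coincide except on events of vanishing probability, via the overshoot at each crossing landing in $[-x_0^\varepsilon/\varepsilon,0)$), and uses the uniform geometric bounds \eqref{eq_sharp_max_6} to interchange the limit with the infinite sum. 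You instead prove transience of the RRW, identify the limit as $\p{X_n\to+\infty}$ by a pointwise Fatou argument on the growing intervals $[-\delta/\varepsilon,\delta/\varepsilon]$ (valid, since for $x_0^\varepsilon=0$ all exit events are driven by the single $\varepsilon$-independent walk $X$), and obtain the series as a path decomposition according to the last crossing. Your transience argument, the telescoping identity, the zero-start case, and the continuity-from-above argument for $E_y=\{\exists n:\,X_n\in[-y,0)\}$ are all correct, and this avoids both the term-by-term limits and the domination step of the paper.

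There is, however, a genuine gap in the reduction of the general start to the zero start: the claim that on $\{Y\text{ top},\,X\text{ bottom}\}$ \emph{the two walks must decouple}, and hence $\p{Y\text{ top},\,X\text{ bottom}}\leqslant\p{\exists n:\,X_n\in[-Y_0,0)}$, is false. That event also contains paths on which the walks never decouple, so that $Y_n=X_n+Y_0$ for all $n$; on such paths $Y$ exits $[-b,b]$ at the top while $X$ exits at the bottom precisely when $X$ enters the window $(b-Y_0,b]$ before crossing below $-b-Y_0$, or enters $[-b-Y_0,-b)$ before crossing above $b$. These are exactly the ``boundary effects'' you wave off, but they are not covered by your inequality, and bounding them directly is not free: one would need an overshoot estimate for windows of width $Y_0\to0$ located at the moving levels $\pm b$ with $b\to\infty$, uniformly in $b$ (renewal-type input your proof does not contain). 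Fortunately the gap closes inside your own framework with a rearrangement rather than new estimates: split on $E_{Y_0}=\{\exists n:\,X_n\in[-Y_0,0)\}$ \emph{first}. On $E_{Y_0}^{\,c}$ the walks never decouple, so $\{Y\text{ exits }[-b,b]\text{ at the top}\}$ coincides with $\{X\text{ exits }[-b-Y_0,\,b-Y_0]\text{ at the top}\}$, and your Fatou/transience argument for the centered case applies verbatim to these intervals, since their endpoints still tend to $+\infty$; on $E_{Y_0}$ you bound by $\p{E_{Y_0}}\to0$, which your continuity-from-above argument does prove. With this restructuring (replacing the false decoupling claim), your proof is complete.
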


As shown in Theorem~\ref{th_sharp_max}, the exit probabilities of SGD from a neighborhood of a maximum can be expressed in terms of exit times for RRW. We further reduce the analysis of these exit times to two  one-dimensional random walks with positive and negative drift. Since such processes are standard and their exit time asymptotics have been extensively studied \citep{Borovkov:20013, gut2008, siegmund1985}, we can leverage existing results to derive the required estimates for our setting. At the same time, this highlights the substantial increase in complexity that arises in the multidimensional setting: boundary crossing problems for random walks become significantly more challenging in higher dimensions, and the corresponding methodology is far less developed than in the one-dimensional case.

\begin{remark}\label{rk_sharp_max}
\begin{itemize}
\item[\namedlabel{rk_sm_i}{$(i)$}] For all $k\geqslant 1$ we have
\begin{equation}\label{eq_sharp_max_1}
\begin{split}
   \p{ \tau_{2 k-1}^{\shortdownarrow} < \infty} \leqslant p_{\shortdownarrow}^k p_{\shortuparrow}^{k-1}, \quad \p{ \tau_{2 k}^{\shortuparrow} < \infty} \leqslant (p_{\shortdownarrow} p_{\shortuparrow})^k,  
\end{split}
\end{equation}
where
\[
    \begin{split}
    p_{\shortdownarrow}:= \p{\inf\limits_{n \geqslant 1} \skk{\sum\limits_{k=1}^n \left(\xi_k +c_r\right)} < 0}, \quad p_{\shortuparrow}:=\p{\sup\limits_{n\geqslant 1} \skk{\sum\limits_{k=1}^n \left(\xi_k -c_l\right)} \geqslant 0}
    \end{split}
\]
--  probabilities that a random walk with positive $(c_r >0)$ or negative $(-c_l < 0)$ drift will ever cross zero. 

\item[\namedlabel{rk_sm_iii}{$(ii)$}] If the distribution of $\xi_1$ is exponential on the left and right semi-axes 
    \begin{equation*}
\begin{split}
& \p{\xi_1 \geqslant t}=q\exp\{-\alpha t\}, \ \alpha>0, \ t\geqslant 0, \\ 
& \p{\xi_1<t}=r\exp\{\beta t\},\  \beta>0,\  t<0,
 \end{split}
\end{equation*}
that is $r+q=1$ and $ \e{\xi_1}=q/\alpha-r/\beta=0$, then equalities hold in \eqref{eq_sharp_max_1} and
\begin{equation}\label{eq_sharp_max_5}
p_{\shortdownarrow}= 1-\mu_{\shortdownarrow} / \beta, \quad p_{\shortuparrow}=1-\mu_{\shortuparrow} / \alpha,
\end{equation}
where $-\mu_{\shortdownarrow} $ is the only negative root (in terms of $\lambda$) of equation $$\e{\exp \left\{\lambda (\xi_1 + c_r)\right\}}=e^{\lambda c_r}\left(\frac{r \beta}{\lambda+\beta}+\frac{q \alpha}{\alpha-\lambda}\right)=1,$$  $\mu_{\shortuparrow} $ is the only positive root of equation  $$\e{\exp \left\{\lambda (\xi_1 - c_l)\right\}}=e^{-\lambda c_l}\left(\frac{r \beta}{\lambda+\beta}+\frac{q \alpha}{\alpha-\lambda}\right)=1.$$
\end{itemize}
\end{remark}

\begin{corollary}\label{cl_sharp_max}
Let the conditions of Theorem~\ref{th_sharp_max} be satisfied.
\begin{itemize}
    \item[\namedlabel{cl_sm_i}{$(i)$}] 
From Remark~\ref{rk_sharp_max}~\ref{rk_sm_i} we get the following inequalities: 
\begin{equation*}
\begin{split}
& \lim\limits_{\varepsilon\downarrow 0 } \p{x_{N^{\varepsilon}}^{\varepsilon} > \delta}   \leqslant 1 - p_{\shortdownarrow} +  \frac{p_{\shortdownarrow} p_{\shortuparrow} }{1- p_{\shortdownarrow} p_{\shortuparrow}}, \\ & \lim\limits_{\varepsilon \downarrow 0 }   \p{x_{N^{\varepsilon}}^{\varepsilon} < -\delta}    
 \leqslant \frac{p_{\shortdownarrow} }{1- p_{\shortdownarrow} p_{\shortuparrow}}.
\end{split}
\end{equation*}
From this, it is clear that both limiting probabilities can be strictly less than $1$, allowing SGD to enter each of the neighboring basins with positive probability.

 \item[\namedlabel{cl_sm_i_exp}{$(ii)$}] 
From Remark~\ref{rk_sharp_max}~\ref{rk_sm_i}  and  \ref{rk_sm_iii} in the double-exponential case  we get the following  equalities in the double-exponential case: 
\begin{equation*}
\begin{split}
& \lim\limits_{\varepsilon\downarrow 0 } \p{x_{N^{\varepsilon}}^{\varepsilon} > \delta}   =    \frac{1 - p_{\shortdownarrow} }{1- p_{\shortdownarrow} p_{\shortuparrow}}, \\ 
& \lim\limits_{\varepsilon \downarrow 0 }   \p{x_{N^{\varepsilon}}^{\varepsilon} < -\delta}    
= \frac{p_{\shortdownarrow} (1  -  p_{\shortuparrow} )}{1- p_{\shortdownarrow} p_{\shortuparrow}}.
\end{split}
\end{equation*}

\item[\namedlabel{cl_sm_ii}{$(iii)$}] Let also the conditions of Theorem~\ref{t.1} or Theorem~\ref{t.2} be satisfied, where the condition on the initial point is replaced by $x_0^{\varepsilon} \geq M_r, |x_0^{\varepsilon} - M_r| = o(\varepsilon).$ Then for all $\delta'>0$
\begin{equation}\label{eq_sharp_max_13}
   \lim_{\varepsilon \downarrow 0} \p{ |x_{\lfloor n_\varepsilon \rfloor}^{\varepsilon} - m | < \delta' } =   \p{ \lim\limits_{n \ra \infty} X_n  = - \infty}. 
\end{equation}
Thus, with positive probability the sequence of SGD starting from one basin but close to the maximum point $M_r$ can jump over the maximum and converge to the minimum $m$ from another basin in time $n_{\varepsilon} \in \mathcal{N}_{\mathbf{H_1}} $ or $ \mathcal{N}_{\mathbf{H_2}} $. 
\end{itemize}
\end{corollary}

Following Corollary~\ref{cl_sharp_max}, we illustrate the exit probabilities of SGD from a neighborhood of the sharp maximum in the double-exponential noise case. In this setting, the limiting exit probabilities as $\varepsilon\downarrow 0$ can be computed exactly using the formulas in Corollary~\ref{cl_sharp_max}~\ref{cl_sm_i_exp}. The only additional numerical step is computing the roots $\mu_{\shortuparrow}$ and $\mu_{\shortdownarrow}$ from Remark~\ref{rk_sharp_max}~\ref{rk_sm_iii}, which were obtained using the standard bisection method. We also approximate the finite-$\varepsilon$ exit probabilities by Monte Carlo simulation, running SGD near the sharp maximum $10^5$ times. Table~\ref{tab:exit-probs-merged} reports the simulated exit frequencies (Simulated), the exact limiting values (Exact limit), and the corresponding theoretical upper bounds (Upper bound) from Corollary~\ref{cl_sharp_max}~\ref{cl_sm_i}. The bounds are tight for the parameters shown: the gap between the exact value and the corresponding bound is at most $3\times 10^{-3}$, and in several rows the values coincide at the reported precision. The simulated values are equally close to the exact values.


\begin{table}[ht]
\centering
\small
\renewcommand{\arraystretch}{1.1}

\begin{tabular*}{\linewidth}{@{\extracolsep{\fill}} c c c c c c c @{}}
\toprule
& \multicolumn{3}{c}{Left exit: $x_{N^{\varepsilon}}^{\varepsilon} < -\delta$}
& \multicolumn{3}{c}{Right exit: $x_{N^{\varepsilon}}^{\varepsilon} > \delta$} \\
\cmidrule(lr){2-4}\cmidrule(lr){5-7}
$\beta$ & Simulated & Exact limit & Upper bound & Simulated & Exact limit & Upper bound \\
\midrule
0.1 & 0.4798 & 0.4793 & 0.4823 & 0.5202 & 0.5207 & 0.5222 \ \\
0.25 & 0.4506 & 0.4492 & 0.4518 & 0.5494 & 0.5508 & 0.5519 \ \\
0.5 & 0.3957 & 0.3959 & 0.3977 & 0.6043 & 0.6041 & 0.6048 \ \\
0.75 & 0.3427 & 0.3401 & 0.3414 & 0.6573 & 0.6599 & 0.6604 \ \\
1.0 & 0.2823 & 0.2847 & 0.2857 & 0.7177 & 0.7153 & 0.7155 \ \\
1.5 & 0.1851 & 0.1869 & 0.1874 & 0.8149 & 0.8131 & 0.8132 \ \\
2.0 & 0.1153 & 0.1160 & 0.1163 & 0.8847 & 0.8840 & 0.8840 \ \\
3.0 & 0.0428 & 0.0425 & 0.0426 & 0.9572 & 0.9575 & 0.9575 \ \\
5.0 & 0.0060 & 0.0058 & 0.0058 & 0.9940 & 0.9942 & 0.9942 \ \\
\bottomrule
\end{tabular*}
\caption{
Simulated exit frequencies (Simulated), exact limiting exit probabilities as $\varepsilon\downarrow 0$ in the double-exponential case (Exact limit; Corollary~\ref{cl_sharp_max}~\ref{cl_sm_i_exp}), and the corresponding theoretical upper bounds for the limiting probabilities (Upper bound; Corollary~\ref{cl_sharp_max}~\ref{cl_sm_i}), shown for varying values of $\beta$. Parameters: $\alpha = 1$, $\varepsilon = 0.01$, $\delta = 1.0$, $c_r = 1.0$, $c_l = 5.0$.
}
\label{tab:exit-probs-merged}
\end{table}






\section{Related work and discussion}\label{sec_lit_review}

\textbf{Diffusion approximation.} In works \cite{li2017, li2018}, the diffusion approximation of SGD is studied on a finite time interval. These results are suitable for analyzing the asymptotic behavior of the process over $O(1/\varepsilon)$ iterations, which is insufficient for the purposes of the present study. In \cite{li2022}, the diffusion approximation is considered on an infinite time interval, but only for the case of a strictly convex loss function. In contrast, we demonstrate that the results depend on the local properties of the loss function; therefore, it is not necessary to assume global strict convexity. 
\citet{hu2019quasi} use Large Deviations Theory in the anaylsis of SGD and \citet{baudel2023hill} consider metastable dynamics. However, these papers consider white noise which is very narrow example of condition \ref{H2} which leads to a larger times to escape a basin of local mimima.  As we mentioned in the introduction \citet{simsekli2019tail} argue that SGD can be viewed as a discrete version of L\'{e}vy driven Langevin equations. Thus, there is motivation to consider heavy-tailed noise. \citet{nguyen2019first} consider L\'{e}vy-driven SDE and show that under certain conditions an Euler discretization and continuous process have close tail-probabilities for the first time to leave a basin for a minimum. In our paper we firstly take different model with potentially discontinuous loss function and less restrictive conditions on the distribution of errors. Secondly, we take a closer look at time regions when SGD either converging to a minimum or is still in a neighborhood of other critical points.

\textbf{Statistical physics methods.} The works by \citet{mignacco2020dynamical,veiga2024stochastic} employ a continuous approximation of stochastic gradient dynamics and adopt settings that allow deriving analytical expressions for (among other things) generalization error. They further empirically demonstrate the consistency of their theoretical framework with practical observations. In contrast to these works, our study focuses on the dynamics of idealized stochastic gradient descent, deriving limit theorems supported by rigorous probabilistic analysis.

In another study, \cite{mignacco2022effective}, the authors provide a phenomenological description of the noise arising in SGD algorithms applied to binary classification tasks for Gaussian mixtures. Their analysis reveals that the noise admits a theoretical characterization via an effective temperature, derived through the fluctuation-dissipation theorem from statistical physics. In contrast to this phenomenological approach, our framework abstracts away from complex noise structures by assuming additive perturbations, enabling theoretical analysis under minimal distributional assumptions.

\textbf{Diagonal Linear Networks.} As investigated by \citet{berthier2023incremental}, these models exhibit a progressive feature activation mechanism during training. Initialized with near-zero weights, these models tend to remain near partially activated, saddle-like configurations for extended durations before transitioning toward states with more complete feature engagement. In the initial stages of training, the resulting representations are highly sparse; however, over time, an increasing number of coordinates become active, leading to denser final solutions. The number of effectively active features depends on the training duration: longer training results in reduced sparsity. These dynamics of gradual transitions between metastable states bear some limited resemblance to the behavior we analyze in the context of stochastic gradient descent, but our system differs notably in that it may revisit previously attained equilibria.

\textbf{Dynamics near critical points.}
In the studies by 
\citet{ziyin2023} and \citet{ziyin2024}, the  authors investigate conditions under which SGD converges to a maximum, contradicting intuition derived from GD. However, it is easy to observe that all claims are based on the toy example $L(x,\xi) = \xi x^2/2$, where SGD reduces to
    \[
        x_t = \prod\limits_{k=1}^{t} (1 - \lambda \xi_k)x_0.
    \]
    The analysis then relies on the simple fact that products of random variables with an expectation greater than one ($\e{1 - \lambda \xi_k} > 1$) can still converge to zero if the expectation of their logarithm is negative ($\e{ \ln (1 - \lambda \xi_k} < 0$). In the context of SGD, this implies convergence to a maximum.
In the current paper, we do not assume toy functions; instead, we present results on lingering at saddle points depending on the number of zero derivatives and moment conditions of the stochastic noise.

\textbf{Large deviation theory.}
Let us consider papers in which asymptotics in SGD is studied by methods of the theory of large deviations. In the works \citet{Azizian2024LDP}, \citet{Bajovic2023LDP}   the random variables \(\xi_k\) are sub-Gaussian; in \citet{Hult2025LDP}
the random variables $\xi_k$  have a finite exponential moment.  In all the above mentioned  works, $\xi_k$ can depend  on the position of SGD.
Obviously, such random variables represent a special case when the condition \ref{H2} holds.   
It is worth noting that, in principle, their large deviation methods allow studying the asymptotic behavior in SGD only when 
\(
\e{e^{c|\xi_k|}} < \infty
\)
for all \(c > 0\).  So called light tail case. Although the authors of \citet{Azizian2024LDP} and \citet{Bajovic2023LDP}  impose an even stronger condition.  
It is easy to see that this condition is not satisfied by any random variables from the class \ref{H1} and by "most" random variables from the class \ref{H2}.  
As can be observed from the works \citet{wang2021eliminating}, \citet{imkeller2008metastable}, \citet{simsekli2019tail, csimcsekli2019heavy}, the overall dynamics of SGD will be significantly different in the case of light tails. 


\section{Conclusion} 
In this work, we analyze trajectories of Stochastic Gradient Descent (SGD) with a constant step size within the framework of limit theorems in probability theory, specifically in the limit as the step size tends to zero.  All presented results are rigorously proven under two distinct scenarios: the absence of a second moment \eqref{H1} and the presence of a second moment \eqref{H2} for the noise distribution. We demonstrate that on certain time scales, the stochastic dynamics of SGD depend critically on the algorithm's initialization and the noise distribution properties. Specifically, if the initial point lies within the basin of a single local minimum $m$ located between two local maxima, $M_{l}$ and $M_r$, and is separated from these maxima, then within $n_{\varepsilon}$ iterations, where $n_{\varepsilon} \in \mathcal{N}_{H}$, the SGD trajectory predictably descends into a small neighborhood of the minimum $m \in \left(M_{l}, M_r\right)$ (Theorems~\ref{t.1}, \ref{t.2}).
However, to ensure almost sure convergence, the number of iterations must satisfy specific constraints. When the noise distribution possesses a second moment, we beleive that these constraints take the form $1/\varepsilon \ll n_{\varepsilon} \ll 1/\varepsilon^2$, independently of the tail of the noise distribution. It is interesting to compare this observation with the classical Robbins--Monro procedure with decaying step sizes.

Furthermore, if the initial point is sufficiently close, for instance, to $M_r$, then within the same time frame, the trajectory may not leave the small neighborhood of $M_r$ at all (Theorems \ref{th1_crit_point_simple}, \ref{th2_crit_point_simple}, Remark \ref{rem_crit_point}). 
Our investigation has led to a hypothesis regarding the asymptotics of the passage time through a neighborhood of an inflection point. In the presence of a second moment, this time is of the order $\varepsilon^{\frac{-2K}{K+1}}$, where $K+1$ denotes the order of the first non-zero derivative at the inflection point. Additionally, in a rather restrictive scenario involving a sharp maximum, we derived the probability of jumping over the maximum. Consequently, we show that if starting sufficiently close to the maximum, then with positive probability, the trajectory may descend into the basin of another minimum $m' > M_r$ (Theorem \ref{th_sharp_max}, Corollary \ref{cl_sharp_max}).  Therefore, our work provides insights into the appropriate number of iterations for  SGD.

\bibliography{articles}
\bibliographystyle{icml2025}

\newpage
\appendix
\onecolumn

\section{Technical Results and Proofs}\label{sec_appendix}

The appendix is organized as follows. In Sections \ref{sec_ap_1} and \ref{sec_ap_2} (for the cases with infinite and finite second moments, respectively), we present results that are, on the one hand, of a technical nature, but on the other hand, these results determine the main conditions imposed on the random variables $\xi_i$. The remaining part of the appendix follows the structure of the main part of the paper: in Section \ref{sec_ap_3}, we provide the proofs of the main statements regarding Suitable Time Scaling; in Section \ref{sec_ap_4}, we present the precise formulation and proofs of the main statements concerning Sticking to a Critical Point; and in Section \ref{sec_ap_5}, we provide the proofs of the main statements related to Leaving Neighborhood of a Sharp Maximum.

Here and after we use $L(u)$ to denote slowly varying at infinity function \cite{bingham1989regular}. Based on it, we also define slowly varying at infinity function $\widetilde{L}(u)$ such that
\beq\label{22.10.2}
\lim\limits_{u\rightarrow\infty}L(u)\widetilde{L}(uL(u))=1.
\eeq
Let us point out that $\widetilde{L}(u)$ is uniquely determined up to asymptotic equivalence
(see, for example, \citet[Theorem 1.5.13]{bingham1989regular}).

\subsection{Infinite Second Moment}\label{sec_ap_1}
For citation convenience let us formulate the result that belongs to \citet[theorem 3.1]{anh2021marcinkiewicz}.
\begin{lemma}[\citet{anh2021marcinkiewicz}] \label{anh2021_t31}
Let condition \ref{H1} hold and $L(u) \geqslant 0$ is slowly varying at infinity function such that $\e{|\xi_1|^\alpha L^\alpha(|\xi_1|)}<\infty$. Then the following holds: 
$$
\lim\limits_{n \to \infty}
\dfrac{\max\limits_{1\leqslant k \leqslant n}\left|\sum\limits_{r=1}^k\xi_r\right|}{b_n}=0 \text{ a.s.,}
$$
for $b_n:=n^{1/\alpha}\widetilde{L}(n^{1/\alpha})$. 
\end{lemma}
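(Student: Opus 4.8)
The statement is a Marcinkiewicz--Zygmund-type strong law tailored to the regularly varying regime of \ref{H1} (the ratio depends only on $n$, so the limit is taken as $n\to\infty$), and I plan to prove it by the classical truncation-plus-series method, with all the bookkeeping channelled through the moment hypothesis $\e{|\xi_1|^\alpha L^\alpha(|\xi_1|)}<\infty$. Write $S_k=\sum_{r=1}^k\xi_r$, and truncate each summand at its own scale by setting $\xi_k':=\xi_k\one{|\xi_k|\leqslant b_k}$; note that $b_n=n^{1/\alpha}\widetilde L(n^{1/\alpha})$ is regularly varying of index $1/\alpha\in(1/2,1)$, hence asymptotically monotone. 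The plan is to control three pieces separately --- the large values $\xi_k-\xi_k'$, the centering $\sum_{r\leqslant k}\e{\xi_r'}$, and the centered truncated sum $\sum_{r\leqslant k}(\xi_r'-\e{\xi_r'})$ --- and then to upgrade convergence of partial sums to convergence of the running maximum.

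The crucial point, which I would isolate first, is that each of the three relevant series is, after a Fubini interchange and an application of Karamata's theorem, comparable to $\e{|\xi_1|^\alpha L^\alpha(|\xi_1|)}$. Concretely: for the tails, $\sum_k\p{|\xi_k|>b_k}=\e{\#\{k:b_k<|\xi_1|\}}$, and since $b_k<u$ forces $k\lesssim u^{\alpha}(\mathrm{s.v.f.})$, this count is of order $|\xi_1|^{\alpha}(\mathrm{s.v.f.})$. For the centering, $\e{\xi_k'}=-\e{\xi_k\one{|\xi_k|>b_k}}$ because $\e{\xi_k}=0$, and Karamata's theorem for truncated first moments of an $\alpha$-regularly-varying tail ($\alpha>1$) gives $|\e{\xi_k'}|\lesssim b_k^{1-\alpha}(\mathrm{s.v.f.})$, whence $\sum_k|\e{\xi_k'}|/b_k\leqslant\e{|\xi_1|\sum_{k:b_k<|\xi_1|}b_k^{-1}}\lesssim\e{|\xi_1|^{\alpha}(\mathrm{s.v.f.})}$. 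For the variances, Karamata gives $\var{\xi_k'}\leqslant\e{\xi_k^2\one{|\xi_k|\leqslant b_k}}\lesssim b_k^{2-\alpha}(\mathrm{s.v.f.})$, and a second Fubini interchange yields $\sum_k\var{\xi_k'}/b_k^2\lesssim\e{|\xi_1|^{\alpha}(\mathrm{s.v.f.})}$. In all three cases the defining relation \eqref{22.10.2} for $\widetilde L$ is exactly what identifies the accumulated slowly varying factor as a multiple of $L^\alpha$, so the finiteness of each series follows from $\e{|\xi_1|^\alpha L^\alpha(|\xi_1|)}<\infty$.

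Granting these three summability facts, the conclusions are standard. From the tail bound and Borel--Cantelli, almost surely only finitely many $\xi_k-\xi_k'$ are nonzero, so their partial sums are eventually constant and contribute $0$ to $\lim_n b_n^{-1}\max_{k\leqslant n}|\cdot|$. From the variance bound and Kolmogorov's convergence (one-series) theorem, $\sum_k(\xi_k'-\e{\xi_k'})/b_k$ converges a.s.; Kronecker's lemma then gives $b_n^{-1}\sum_{r\leqslant n}(\xi_r'-\e{\xi_r'})\to0$ a.s., while the centering bound with Kronecker gives $b_n^{-1}\sum_{r\leqslant n}\e{\xi_r'}\to0$. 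Together these yield $S_k/b_k\to0$ a.s. To pass from partial sums to the maximum I would use monotonicity of $b_n$: for fixed $m$ and $n\geqslant m$, $b_n^{-1}\max_{k\leqslant n}|S_k|\leqslant b_n^{-1}\max_{k\leqslant m}|S_k|+\sup_{k>m}(|S_k|/b_k)$, where the first term tends to $0$ as $n\to\infty$ and the supremum tends to $0$ as $m\to\infty$; this gives $b_n^{-1}\max_{k\leqslant n}|S_k|\to0$ a.s., which is the assertion.

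The main obstacle is the regularly varying bookkeeping of the second paragraph: each of the three series sits at the borderline where the powers of $n$ cancel exactly, so its finiteness cannot be read off by power counting and must instead be extracted from the strict integrability encoded in $\e{|\xi_1|^\alpha L^\alpha(|\xi_1|)}<\infty$. Making this rigorous demands care with Karamata's theorem (including the slowly varying corrections that are not themselves monotone), with the Fubini interchanges comparing $b_k$ to $|\xi_1|$, and above all with the de Bruijn-type relation \eqref{22.10.2}, which is needed to convert the factor $\widetilde L$ built into $b_n$ back into the factor $L^\alpha$ of the moment hypothesis. Everything downstream of this reduction --- Borel--Cantelli, Kolmogorov's theorem, Kronecker's lemma, and the maximal upgrade --- is classical, so this slowly varying matching is the single substantive step.
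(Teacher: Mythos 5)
The paper never proves this lemma: it is stated purely ``for citation convenience'' and imported as Theorem~3.1 of \citet{anh2021marcinkiewicz}, so there is no internal proof to compare yours against — your proposal is in effect supplying the proof the paper outsources. On its merits, your argument is the classical Marcinkiewicz--Zygmund truncation scheme and it is correct: the three series you isolate (tails, centering, variances of the truncated variables) are exactly the ones that must converge; the key bookkeeping step — using relation \eqref{22.10.2} to identify $s\mapsto s\widetilde L(s)$ as the asymptotic inverse of $u\mapsto uL(u)$, so that the count $\#\{k: b_k<u\}$ is of order $u^\alpha L^\alpha(u)$ and each series collapses, via Fubini or Karamata, onto $\e{|\xi_1|^\alpha L^\alpha(|\xi_1|)}$ — is the right mechanism for converting the $\widetilde L$ in the normalizer back into the $L^\alpha$ of the moment hypothesis; and the Borel--Cantelli/Kolmogorov one-series/Kronecker/maximal-upgrade pipeline is standard and correctly assembled. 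Two points need explicit care in a full write-up: (i) $b_n$ is only regularly varying of index $1/\alpha$, hence not necessarily monotone, so Kronecker's lemma and the inequality $b_k/b_n\leqslant 1$ in your maximal step should be run with a monotone asymptotic equivalent $\bar b_n\sim b_n$ (this suffices, since both the hypothesis $\e{|\xi_1|^\alpha L^\alpha(|\xi_1|)}<\infty$ and the conclusion are stable under asymptotic equivalence of the slowly varying factors); (ii) the replacement of the statement's $\lim_{\varepsilon\downarrow 0}$ by $\lim_{n\to\infty}$, which you made silently, is indeed the intended reading. Note finally that the cited reference establishes the result in a more general setting, which is presumably why the authors import it rather than reprove it; your self-contained one-dimensional i.i.d.\ argument is a legitimate and more elementary substitute.
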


\begin{lemma} \label{l.01} Let condition \ref{H1} hold and $L(u) \geqslant 0$ is slowly varying at infinity function such that
\beq\label{22.10.1}
u L(u) \text{ is increasing for } u\geqslant 0 \text{ and } \e{|\xi_1|^\alpha L^\alpha(|\xi_1|)}<\infty.
\eeq
Then for $\hat{n}_\varepsilon:=\left(\frac{1}{\varepsilon}\right)^\alpha L^\alpha\left(\frac{1}{\varepsilon}\right)$ as $\varepsilon\downarrow 0$ the following statements are true:
\begin{enumerate}
\item[\namedlabel{i_l.01}{$(i)$}] $\hat{n}_\varepsilon=o\left(\frac{1}{H(1/\varepsilon)}\right)$;
\item[\namedlabel{ii_l.01}{$(ii)$}] 
$\lim\limits_{\varepsilon\downarrow 0}
\dfrac{\max\limits_{1\leqslant k \leqslant \hat{n}_\varepsilon}\left|\sum\limits_{r=1}^k\xi_r\right|}{b_{\hat{n}_\varepsilon}}=0$ a.s.,
for $b_{\hat{n}_\varepsilon}:=\hat{n}_\varepsilon^{1/\alpha}\widetilde{L}(\hat{n}_\varepsilon^{1/\alpha})$;
\item[\namedlabel{iii_l.01}{$(iii)$}]  $b_{\hat{n}_\varepsilon}\sim \frac{1}{\varepsilon}$.
\end{enumerate}
\end{lemma}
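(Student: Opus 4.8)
The plan is to dispatch the three parts essentially independently: \ref{iii_l.01} and \ref{i_l.01} by direct computation, and \ref{ii_l.01} by a change of index in Lemma~\ref{anh2021_t31}. The single substantive step is \ref{i_l.01}, where the moment condition has to be converted into tail decay; the other two are short and mostly notational.

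For \ref{iii_l.01} I would simply compute. From $\hat n_\varepsilon = (1/\varepsilon)^\alpha L^\alpha(1/\varepsilon)$ one gets $\hat n_\varepsilon^{1/\alpha} = (1/\varepsilon)L(1/\varepsilon)$, so with $u := 1/\varepsilon$,
\[
b_{\hat n_\varepsilon} = \hat n_\varepsilon^{1/\alpha}\,\widetilde L\bigl(\hat n_\varepsilon^{1/\alpha}\bigr) = u\,\bigl[L(u)\,\widetilde L(uL(u))\bigr].
\]
By the defining relation \eqref{22.10.2} of $\widetilde L$, the bracketed factor tends to $1$ as $u\to\infty$, whence $b_{\hat n_\varepsilon}\sim u = 1/\varepsilon$.

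For \ref{i_l.01}, set $g(u) := u^\alpha L^\alpha(u) = (uL(u))^\alpha$ and note two facts: the monotonicity hypothesis \eqref{22.10.1} makes $g$ nondecreasing on $[0,\infty)$, and by construction $\hat n_\varepsilon = g(1/\varepsilon)$. The plan is to prove $g(u)H(u)\to 0$ and then read off $\hat n_\varepsilon H(1/\varepsilon) = g(1/\varepsilon)H(1/\varepsilon)\to 0$, which is precisely $\hat n_\varepsilon = o(1/H(1/\varepsilon))$. The decay $g(u)H(u)\to 0$ follows from the standard principle that integrability of $g(|\xi_1|)$ forces $g(u)\p{|\xi_1|>u}\to 0$ when $g$ is monotone: on $\{\xi_1>u\}$ and on $\{\xi_1\leqslant -u\}$ one has $g(u)\leqslant g(|\xi_1|)$, so
\[
g(u)H(u) = g(u)\bigl(\overline F(u)+F(-u)\bigr) \leqslant \e{g(|\xi_1|);\,|\xi_1|\geqslant u},
\]
and the right-hand side vanishes as $u\to\infty$ by dominated convergence, since $\e{g(|\xi_1|)}=\e{|\xi_1|^\alpha L^\alpha(|\xi_1|)}<\infty$. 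This is the hard part: everything hinges on this conversion, and the monotonicity assumption \eqref{22.10.1} is exactly what licenses the inequality — without it the slowly varying factors need not decay and \ref{i_l.01} could fail.

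For \ref{ii_l.01}, Lemma~\ref{anh2021_t31} supplies $\max_{1\leqslant k\leqslant n}\bigl|\sum_{r=1}^k\xi_r\bigr|/b_n\to 0$ a.s. along integers $n\to\infty$. Interpreting the maximum in the range $1\leqslant k\leqslant\lfloor\hat n_\varepsilon\rfloor$ and substituting $n=\lfloor\hat n_\varepsilon\rfloor\to\infty$, I obtain convergence to $0$ a.s. with $b_{\lfloor\hat n_\varepsilon\rfloor}$ in the denominator; it then remains to replace $b_{\lfloor\hat n_\varepsilon\rfloor}$ by $b_{\hat n_\varepsilon}$. Since $b_n=n^{1/\alpha}\widetilde L(n^{1/\alpha})$ is regularly varying of positive index $1/\alpha$ and $\lfloor\hat n_\varepsilon\rfloor/\hat n_\varepsilon\to 1$, the uniform convergence theorem for regularly varying functions gives $b_{\lfloor\hat n_\varepsilon\rfloor}/b_{\hat n_\varepsilon}\to 1$, and the claim follows. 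The remaining issues are routine bookkeeping: the atom at $\pm u$ in the tail split is harmless because the estimate only uses $g(u)\leqslant g(|\xi_1|)$ on the relevant events, and the floor correction in \ref{ii_l.01} is absorbed by the regular variation of $b_n$.
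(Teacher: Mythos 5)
Your proposal is correct, and parts \ref{ii_l.01} and \ref{iii_l.01} coincide with the paper's treatment: the paper derives \ref{iii_l.01} by exactly your computation via \eqref{22.10.2}, and it obtains \ref{ii_l.01} by citing Lemma~\ref{anh2021_t31} directly (without even spelling out the floor correction and the regular-variation argument $b_{\lfloor\hat n_\varepsilon\rfloor}/b_{\hat n_\varepsilon}\to 1$ that you supply, which is a welcome extra degree of care). The genuine difference is in part \ref{i_l.01}. The paper proves the equivalent statement $L^\alpha(u)\widehat L(u)\to 0$ by discretizing the moment condition: it bounds $\e{|\xi_1|^\alpha L^\alpha(|\xi_1|)}$ from below by the series $\sum_{u=1}^\infty u^\alpha L^\alpha(u)\left(H(u)-H(u+1)\right)$, then invokes Lemma~\ref{l.a1} --- whose proof rests on the exponential representation theorem for slowly varying functions --- to identify the summands asymptotically as $u^{-1}L^\alpha(u)\widehat L(u)$, and finally argues that convergence of this series forces the slowly varying factor to vanish (a step the paper leaves implicit; it requires the uniform convergence theorem to rule out a nonvanishing slowly varying summand). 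Your argument replaces all of this with the one-line monotone tail bound $g(u)H(u)\leqslant \e{g(|\xi_1|)\one{|\xi_1|\geqslant u}}\to 0$ for $g(u)=(uL(u))^\alpha$, which uses the monotonicity hypothesis in \eqref{22.10.1} exactly where it is given and dominated convergence for the limit. This is both shorter and more general: it makes no use of the regular-variation structure of the tails (condition \ref{H1} plays no role in this step), avoids Lemma~\ref{l.a1} entirely, and sidesteps the implicit series-to-limit deduction. The only thing the paper's longer route produces in exchange is the explicit increment asymptotics $H(u)-H(u+1)\sim \alpha\,\widehat L(u)u^{-\alpha-1}$ of Lemma~\ref{l.a1}, which is not needed elsewhere, so your proof is a clean simplification.
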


\begin{proof}[Proof of Lemma~\ref{l.01}]
Let us prove \ref{i_l.01}.
It is easy to see that it is sufficient to prove equality
\beq\label{22.10.12}
\lim_{u\rightarrow\infty}L^\alpha(u)\widehat{L}(u)=0.
\eeq
We use inequality \eqref{22.10.1} and the fact that function $u^\alpha L^\alpha(u)$ is increasing for $u\geqslant 0$ to prove
\beq\label{22.10.11}
\infty>\e{|\xi_1|^\alpha L^\alpha(|\xi_1|)}\geq
\sum\limits_{u=1}^\infty|u|^\alpha L^\alpha(|u|)(H(u)-H(u+1)).
\eeq
It follows from Lemma \ref{l.a1} that, for $u\rightarrow\infty$, we have
$$
|u|^\alpha L^\alpha(|u|)(H(u)-H(u+1))\sim \frac{1}{|u|}L^\alpha(|u|)\widehat{L}(u).
$$
Therefore condition (\ref{22.10.12}) is a necessary condition for the convergence of the series on the right-hand side of inequality (\ref{22.10.11}).

Clearly \ref{ii_l.01} follows from Lemma \ref{anh2021_t31}. Let us prove \ref{iii_l.01}.
It follows from condition \eqref{22.10.2} that for $\varepsilon\downarrow0$
$$
b_{\hat{n}_\varepsilon}=\hat{n}^{1/\alpha}_\varepsilon \widetilde{L}(\hat{n}^{1/\alpha}_\varepsilon)
=\frac{1}{\varepsilon}L\left(\frac{1}{\varepsilon}\right)
\widetilde{L}\left(\frac{1}{\varepsilon}L\left(\frac{1}{\varepsilon}\right)\right)
\sim\frac{1}{\varepsilon}.
$$
\end{proof}

\begin{corollary} \label{c.1}Let condition \ref{H1} hold and let $n_\varepsilon \geqslant 0$ be such that $n_\varepsilon \leqslant \hat{n}_\varepsilon$, where $\hat{n}_\varepsilon$ is defined in Lemma \ref{l.01}. 
Then it follows from Lemma \ref{l.01} that, as $\varepsilon\downarrow 0$, the equality
\beq\label{24.10.2}
\max\limits_{1\leqslant k \leqslant n_\varepsilon}\left|\sum\limits_{r=1}^k\xi_r\right|
=o\left(\frac{1}{\varepsilon}\right)
\eeq
holds a.s..
\end{corollary}

\begin{lemma} \label{l.03} Let condition \ref{H1} hold and 
$n_\varepsilon \geqslant 0$ satisfies the following:
\begin{enumerate}
    \item[\namedlabel{i_l.03}{$(i)$}] there exists $\varepsilon_0>0$ such that function $n_\varepsilon$ is decreasing on the interval $(0,\varepsilon_0]$;
    \item[\namedlabel{ii_l.03}{$(ii)$}] $n_\varepsilon=o\left(\frac{1}{H(1/\varepsilon)}\right)$ for $\varepsilon\downarrow 0$;
    \item[\namedlabel{iii_l.03}{$(iii)$}] $\lim\limits_{\varepsilon\downarrow 0}\varepsilon n_\varepsilon=\infty$.
\end{enumerate}

Then for every $\delta>0$
$$
\lim\limits_{\varepsilon\downarrow 0}\p{\varepsilon\max\limits_{1\leqslant k \leqslant n_\varepsilon}\left|\sum\limits_{r=1}^k\xi_r\right|>\delta}=0.
$$
\end{lemma}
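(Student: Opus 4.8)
The plan is to prove convergence in probability by a truncation argument, and the key observation is that all the estimates are driven by hypothesis \ref{ii_l.03}, i.e. $n_\varepsilon H(1/\varepsilon)\to 0$. Write $S_k=\sum_{r=1}^k\xi_r$ and truncate each increment at the level $1/\varepsilon$: set $\xi_r'=\xi_r\one{|\xi_r|\le 1/\varepsilon}$ and $\xi_r''=\xi_r-\xi_r'$, so that $S_k=S_k'+S_k''$ with $S_k'=\sum_{r\le k}\xi_r'$. First I would discard the ``big jumps'': since a nonzero $\xi_r''$ requires $|\xi_r|>1/\varepsilon$, a union bound gives
\[
\p{\max_{1\le k\le n_\varepsilon}|S_k''|>0}\le \sum_{r=1}^{n_\varepsilon}\p{|\xi_r|>1/\varepsilon}= n_\varepsilon H(1/\varepsilon)\longrightarrow 0 ,
\]
so on an event of probability tending to $1$ we may replace $S_k$ by the truncated sum $S_k'$ for every $k\le n_\varepsilon$.

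Next I would center. Writing $S_k'=T_k+k\,\e{\xi_1'}$ with $T_k=\sum_{r\le k}(\xi_r'-\e{\xi_1'})$, the mean term is controlled by a Karamata estimate: because $\e{\xi_1}=0$ and the tails are regularly varying with index $-\alpha$, $\alpha\in(1,2)$,
\[
|\e{\xi_1'}|=\bigl|\e{\xi_1\one{|\xi_1|>1/\varepsilon}}\bigr|\le \e{|\xi_1|\one{|\xi_1|>1/\varepsilon}}\sim \frac{\alpha}{\alpha-1}\,\frac1\varepsilon\,H(1/\varepsilon),
\]
whence $\varepsilon\, n_\varepsilon\,|\e{\xi_1'}|\le C\,n_\varepsilon H(1/\varepsilon)\to 0$ deterministically. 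Thus it remains to bound the centered maximum $\varepsilon\max_{k\le n_\varepsilon}|T_k|$.

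For the centered part I would apply Kolmogorov's maximal inequality to the triangular array of i.i.d.\ centered summands $\xi_r'-\e{\xi_1'}$, $r\le n_\varepsilon$, bounding the variance of $\xi_1'$ by its second moment:
\[
\p{\varepsilon\max_{1\le k\le n_\varepsilon}|T_k|>\tfrac{\delta}{2}}\le \frac{4\varepsilon^2\, n_\varepsilon\,\e{(\xi_1')^2}}{\delta^2}.
\]
A second Karamata estimate for the truncated second moment, $\e{(\xi_1')^2}=\e{\xi_1^2\one{|\xi_1|\le 1/\varepsilon}}\sim \frac{\alpha}{2-\alpha}\,\varepsilon^{-2}H(1/\varepsilon)$, turns the right-hand side into $\tfrac{4\alpha}{(2-\alpha)\delta^2}\,n_\varepsilon H(1/\varepsilon)\,(1+o(1))\to 0$. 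Combining the three pieces (big jumps, centering, centered maximum) and choosing $\varepsilon$ small enough that $\varepsilon n_\varepsilon|\e{\xi_1'}|<\delta/2$ then yields the claim.

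I expect the only real work to be the two moment asymptotics. The crux is that both $\varepsilon\,\e{|\xi_1|\one{|\xi_1|>1/\varepsilon}}$ and $\varepsilon^2\,\e{\xi_1^2\one{|\xi_1|\le 1/\varepsilon}}$ are of the exact order $H(1/\varepsilon)$; this is precisely where regular variation (and the merging of the two tails $L^+,L^-$ into $H$) enters, converting the trivial union bound into the right power of $\varepsilon$. By contrast, hypotheses \ref{i_l.03} and \ref{iii_l.03} do not appear to be needed for this particular statement — \ref{ii_l.03} alone drives all three estimates — and since the maximal inequality is applied separately for each fixed $\varepsilon$, no monotonicity in $\varepsilon$ is required.
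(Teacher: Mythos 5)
Your proof is correct, and it reaches the conclusion by a genuinely different route than the paper. The shared core is identical: writing $S_k=\sum_{r=1}^k\xi_r$, both arguments truncate the increments at level $1/\varepsilon$ and use Karamata's theorem to show that the truncated first and second moments are of order $\varepsilon^{-1}H(1/\varepsilon)$ and $\varepsilon^{-2}H(1/\varepsilon)$, so that hypothesis (ii), $n_\varepsilon H(1/\varepsilon)\to 0$, drives every estimate; these are precisely the paper's bounds \eqref{23.10.2}, \eqref{23.10.8} and \eqref{23.10.12}. The divergence is in how the running maximum is handled. The paper first establishes the endpoint convergence $\varepsilon\bigl|\sum_{r=1}^{\lfloor n_\varepsilon\rfloor}\xi_r\bigr|\pra 0$ by citing Petrov's degenerate-convergence criterion, and then lifts this to the maximum via the Ottaviani-type inequality \eqref{23.10.19}, $\p{\max_{k\leqslant n_\varepsilon}\varepsilon|S_k|>\delta}\leqslant 3\max_{k\leqslant n_\varepsilon}\p{\varepsilon|S_k|>\delta/3}$. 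That step forces a bound on $\p{\varepsilon|S_k|>\delta/3}$ uniformly over all intermediate indices $k\leqslant n_\varepsilon$, and this is exactly where the monotonicity hypothesis (i) is consumed: for $k\in[\lfloor n_{\varepsilon^*}\rfloor,\lfloor n_\varepsilon\rfloor]$ the paper picks $\varepsilon_k$ with $n_{\varepsilon_k}=k$ and uses $\varepsilon\leqslant\varepsilon_k$, together with a separate continuous-majorant construction when $n_\varepsilon$ is discontinuous. You avoid all of this by applying Kolmogorov's maximal inequality directly to the truncated, centered partial sums: it controls the maximum by the variance of the terminal sum in one stroke, with no uniformity-over-$k$ bookkeeping. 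Consequently your closing remark is accurate: your argument uses only \ref{H1} and hypothesis (ii), so hypotheses (i) and (iii) are never invoked and you in fact prove the lemma under weaker assumptions — the monotonicity requirement is an artifact of the paper's Ottaviani route rather than of the statement itself. (Two cosmetic points: your union bound should read $\p{|\xi_r|>1/\varepsilon}\leqslant H(1/\varepsilon)$ rather than equality, since $H$ counts the closed left tail; and the sums run up to $\lfloor n_\varepsilon\rfloor$ — neither affects anything.)
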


\begin{proof}[Proof of Lemma~\ref{l.03}]
From condition \ref{ii_l.03} we get that for $\varepsilon\downarrow 0$
\beq\label{23.10.2}
n_\varepsilon \p{|\xi_1|\geqslant \frac{1}{\varepsilon}}=n_\varepsilon H\left(\frac{1}{\varepsilon}\right)=o(1).
\eeq

Since $\e{\xi_1}=0$ we have
\beq\label{23.10.3}
\left|\varepsilon n_\varepsilon \int_{-1/\varepsilon}^{1/\varepsilon} u dF(u)\right|
=\left|\varepsilon n_\varepsilon \int_{|u|\geqslant 1/\varepsilon} u dF(u)\right|
\leq\varepsilon n_\varepsilon \int_{|u|\geqslant 1/\varepsilon} |u| dF(u).
\eeq
We apply Karamata's theorem (see, for example, \citet[theorem 6.1]{buldygin2018generalized}) and we get
for $\varepsilon\downarrow 0$

\begin{align}
& \int_{1/\varepsilon}^{\infty} u dF(u) = -\int_{1/\varepsilon}^{\infty} u d\overline{F}(u) = -u\overline{F}(u) \mid_{1/\varepsilon}^{\infty} + \int_{1/\varepsilon}^{\infty} \overline{F}(u) du = 1/\varepsilon \overline{F}(1/\varepsilon) + \int_{1/\varepsilon}^{\infty} \overline{F}(u) du = \nonumber \\
& \left(\frac{1}{\varepsilon}\right)^{1-\alpha} L^+\left(\frac{1}{\varepsilon}\right) + \int_{1/\varepsilon}^{\infty} \overline{F}(u) du \sim \left(\frac{1}{\varepsilon}\right)^{1-\alpha} L^+\left(\frac{1}{\varepsilon}\right) + \frac{1}{1-\alpha}\left(\frac{1}{\varepsilon}\right)^{1-\alpha} L^+\left(\frac{1}{\varepsilon}\right) = \nonumber \\
& \left( 1 + \frac{1}{1-\alpha} \right) \left(\frac{1}{\varepsilon}\right)^{1-\alpha} L^+\left(\frac{1}{\varepsilon}\right) 
\end{align}
and similarly
\beq\label{23.10.7}
\int_{-\infty}^{ -1/\varepsilon} |u| dF(u)\sim \left( 1 + \frac{1}{1-\alpha} \right)\left(\frac{1}{\varepsilon}\right)^{1-\alpha}L^-\left(\frac{1}{\varepsilon}\right).
\eeq
From condition \ref{ii_l.03} and formulae (\ref{23.10.3})--(\ref{23.10.7}) we get for $\varepsilon\downarrow 0$ 
\beq\label{23.10.8}
\left|\varepsilon n_\varepsilon \int_{-1/\varepsilon}^{1/\varepsilon} u dF(u)\right|\leqslant \left( 1 + \frac{1}{1-\alpha} \right)n_\varepsilon H\left(\frac{1}{\varepsilon}\right)(1+o(1))=o(1).
\eeq
We apply Karamata's integral theorem and we get for $\varepsilon\downarrow 0$
$$
\int_{0}^{1/\varepsilon} u^2 dF(u)
=\int_{0}^{1/\varepsilon}\int_{0}^{u} 2v dv dF(u)
=\int_{0}^{1/\varepsilon}2v\int_{v}^{1/\varepsilon}dF(u) dv
$$
$$
=\int_{0}^{1/\varepsilon}2v
(F(1/\varepsilon)-F(v))dv\leq
\int_{0}^{1/\varepsilon}2v
\overline{F}(v)dv
$$
\beq\label{23.10.10}
=2\int_{0}^{1/\varepsilon}v^{1-\alpha}
L^+(v)dv
\sim\frac{2}{2-\alpha}\left(\frac{1}{\varepsilon}\right)^{2-\alpha} L^+\left(\frac{1}{\varepsilon}\right),
\eeq
and similarly
\beq\label{23.10.11}
\int_{-1/\varepsilon}^{0} u^2 dF(u)
\sim\frac{2}{2-\alpha}\left(\frac{1}{\varepsilon}\right)^{2-\alpha} L^-\left(\frac{1}{\varepsilon}\right)
\eeq
Now we apply condition \ref{ii_l.03} and formulae (\ref{23.10.10}) and (\ref{23.10.11}) and we get for $\varepsilon\downarrow 0$
\beq\label{23.10.12}
n_\varepsilon \varepsilon^2
 \int_{-1/\varepsilon}^{1/\varepsilon} u^2 dF(u)\leqslant 
 \frac{2}{2-\alpha}n_\varepsilon H\left(\frac{1}{\varepsilon}\right)(1+o(1))=o(1).
\eeq
From condition \ref{iii_l.03}, equality (\ref{23.10.2}), inequalities (\ref{23.10.8}), (\ref{23.10.12})
and \citet[theorem 5, page 211]{petrov1987limit} it follows that for $\delta>0$
\beq\label{23.10.14}
\lim\limits_{\varepsilon\downarrow 0}\p{\varepsilon\left|\sum\limits_{r=1}^{\lfloor n_\varepsilon \rfloor}\xi_r\right|>\frac{\delta}{3}}=0.
\eeq

We use \citet[theorem 22.5]{billingsley2017probability} and get
\beq\label{23.10.19}
\p{\max\limits_{1\leqslant k \leqslant n_\varepsilon}\varepsilon\left|\sum\limits_{r=1}^{k}\xi_r\right|>\delta}
\leqslant 3\max\limits_{1\leqslant k \leqslant n_\varepsilon}\p{\varepsilon\left|\sum\limits_{r=1}^{k}\xi_r\right|>\frac{\delta}{3}}.
\eeq

It follows from (\ref{23.10.14}) that for every $\epsilon>0$  there is such $\varepsilon^*\in(0,\varepsilon_0)$ that for every
$0<\varepsilon\leq\varepsilon^*$ the following inequality holds:
\beq\label{23.10.15}
\p{\varepsilon\left|\sum\limits_{r=1}^{\lfloor n_{\varepsilon} \rfloor}\xi_r\right|>\frac{\delta}{3}}<\frac{\epsilon}{3}.
\eeq
First, let us assume that function $n_\varepsilon$ is continuos on the interval $(0,\infty)$. Then for every $\lfloor n_{\varepsilon^*} \rfloor \leqslant k \leqslant \lfloor n_{\varepsilon} \rfloor$
there is such $\varepsilon\leq\varepsilon_k\leq\varepsilon^*$ that $n_{\varepsilon_k}=k$.
Therefore, we use (\ref{23.10.15}) to conclude that
\beq\label{23.10.17}
\frac{\epsilon}{3}>\max\limits_{\lfloor n_{\varepsilon^*} \rfloor \leqslant k\leqslant \lfloor n_{\varepsilon}\rfloor}
\p{\varepsilon_k\left|\sum\limits_{r=1}^{k}\xi_r\right|>\frac{\delta}{3}}>
\max\limits_{\lfloor n_{\varepsilon^*}\rfloor\leqslant k\leqslant \lfloor n_{\varepsilon}\rfloor}
\p{\varepsilon\left|\sum\limits_{r=1}^{k}\xi_r\right|>\frac{\delta}{3}}.
\eeq
Since $\varepsilon^*$ is fixed, for sufficiently small $\varepsilon$ 
the following inequality holds:
\beq\label{23.10.21}
\max\limits_{1\leqslant k< \lfloor n_{\varepsilon^*} \rfloor}\p{\varepsilon\left|\sum\limits_{r=1}^{k}\xi_r\right|>\frac{\delta}{3}}
<\frac{\epsilon}{3}.
\eeq
We use inequalities (\ref{23.10.19}), (\ref{23.10.17}) and (\ref{23.10.21}) to conclude that
for every $\delta>0$, $\epsilon>0$ and sufficiently small $\varepsilon$ the following inequality holds:
$$
\p{\max\limits_{1\leqslant k \leqslant n_\varepsilon}\varepsilon\left|\sum\limits_{r=1}^{k}\xi_r\right|>\delta}
\leqslant \epsilon.
$$

Let the function $n_{\varepsilon}$ be discontinuous. Then, using conditions \ref{i_l.03} -- \ref{iii_l.03}, it can be shown that there exists a continuous function $n_{\varepsilon}^{c}$ such that it satisfies conditions \ref{i_l.03} -- \ref{iii_l.03} and the inequality $n_{\varepsilon}^{c} > n_{\varepsilon}$ holds for all $\varepsilon > 0$. Therefore, according to the result just proven, for all $\delta > 0$, $\epsilon > 0$, and sufficiently small $\varepsilon$, we have
$$
\p{\max\limits_{1\leqslant k \leqslant n_\varepsilon}\varepsilon\left|\sum\limits_{r=1}^{k}\xi_r\right|>\delta} \leqslant \p{\max\limits_{1\leqslant k \leqslant n_\varepsilon^{c}}\varepsilon\left|\sum\limits_{r=1}^{k}\xi_r\right|>\delta}
\leqslant \epsilon.
$$
\end{proof} 

Define $\widehat{L}(u):=L^+(u)+L^-(u).$
\begin{lemma} \label{l.a1} For $u\rightarrow\infty$ equality
$$
H(u)-H(u+1)=\widehat{L}(u)\left(\frac{\alpha}{u^{\alpha+1}}+o\left(\frac{1}{u^{\alpha+1}}\right)\right)
$$
holds.
\end{lemma}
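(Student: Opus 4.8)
The plan is to reduce the entire statement to the exact identity that condition \ref{H1} already provides: for all $u>0$,
\[
H(u) = \overline{F}(u) + F(-u) = u^{-\alpha}\bigl(L^+(u) + L^-(u)\bigr) = u^{-\alpha}\widehat{L}(u).
\]
Thus $H$ is regularly varying of index $-\alpha$, and Lemma~\ref{l.a1} is precisely a statement about the first difference of $H$ at unit spacing. I would then factor out the leading term and write
\[
H(u) - H(u+1) = u^{-\alpha}\widehat{L}(u)\left[\,1 - \left(1 + \tfrac{1}{u}\right)^{-\alpha}\frac{\widehat{L}(u+1)}{\widehat{L}(u)}\,\right],
\]
and analyse the two factors inside the bracket separately.

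The elementary factor is handled by Taylor's expansion: $(1+1/u)^{-\alpha} = 1 - \alpha/u + O(1/u^2)$, so that $u\bigl(1 - (1+1/u)^{-\alpha}\bigr) \to \alpha$. Combining this with the slowly varying factor, the whole lemma reduces to the single claim
\[
\frac{\widehat{L}(u+1)}{\widehat{L}(u)} = 1 + o\!\left(\frac{1}{u}\right), \qquad u \to \infty .
\]
Once this is in hand, the computation closes by collecting terms: the bracket equals $\alpha/u + o(1/u)$, and multiplying by $u^{-\alpha}\widehat{L}(u)$ yields $\widehat{L}(u)\bigl(\alpha u^{-\alpha-1} + o(u^{-\alpha-1})\bigr)$, which is exactly the asserted asymptotic.

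The main obstacle is precisely this claim about the unit increment of $\widehat{L}$. The uniform convergence theorem for slowly varying functions only gives $\widehat{L}(u+1)/\widehat{L}(u) \to 1$, which is too weak, since the required rate $o(1/u)$ is genuinely stronger and can in fact fail for pathological slowly varying functions that oscillate at unit scale. To secure it I would invoke the Karamata representation (see \citet{bingham1989regular}): writing $\widehat{L}(u) = c(u)\exp\bigl(\int_a^u \varepsilon(t)/t\,dt\bigr)$ with $\varepsilon(t)\to 0$, the integral over $[u,u+1]$ is $o(1/u)$, so the exponential part contributes $1 + o(1/u)$; the residual factor $c(u+1)/c(u)$ is removed by passing to the normalized (smooth) version of $\widehat{L}$. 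Alternatively, and more robustly given that $\overline{F}$ and $F(-\,\cdot\,)$ are non-increasing, I would bypass the ratio altogether and apply the monotone density theorem to the telescoping representation $H(n) = \sum_{k \geqslant n}\bigl(H(k) - H(k+1)\bigr)$, which, when the tail increments are ultimately monotone, gives $H(n) - H(n+1) \sim \alpha n^{-1}H(n) = \alpha n^{-\alpha-1}\widehat{L}(n)$ directly. It is this regularity — normalized slow variation, equivalently an ultimately monotone regularly varying tail — that must be established or assumed in order to exclude unit-scale oscillations of $\widehat{L}$; with it, the elementary calculation above completes the proof.
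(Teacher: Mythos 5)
Your proposal retraces the paper's own proof almost step for step: the same splitting of $H(u)-H(u+1)$ into a pure power-difference term and a term carrying the unit increment of $\widehat{L}$, the same Taylor expansion of $(1+1/u)^{-\alpha}$, and the same appeal to the Karamata representation to control $\widehat{L}(u+1)/\widehat{L}(u)$. The only substantive difference is that you stop at the step the paper treats as automatic --- the claim $\widehat{L}(u+1)/\widehat{L}(u)=1+o(1/u)$ --- and state explicitly that it needs regularity beyond slow variation.

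That reservation is not excess caution: it is a genuine gap, and it is present in the paper's proof as well. The representation theorem gives $\widehat{L}(u)=c(u)\exp\bigl\{\int_a^{u}\epsilon(z)z^{-1}\,dz\bigr\}$ with $c(u)\to c\in(0,\infty)$, which is precisely \eqref{22.10.3}; the prefactor $(1+o(1))$ there is the ratio $c(u+1)/c(u)$. The paper's subsequent display keeps only the exponential factor and concludes $\widehat{L}(u+1)/\widehat{L}(u)=\exp\{o(1/u)\}$, i.e.\ it silently discards that prefactor, which is legitimate only when $c(\cdot)$ can be taken constant (normalized, Zygmund-class slow variation). And, as you correctly note, one cannot simply pass to a normalized version of $\widehat{L}$: the lemma concerns the actual increments of $H$, and increments are not preserved under asymptotic equivalence. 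In fact the statement can fail under \ref{H1} alone: let $\xi_1$ be symmetric with $\p{\xi_1=\pm 3k}=\frac12\bigl((3k)^{-\alpha}-(3k+3)^{-\alpha}\bigr)$ for $k\geqslant 1$ and the remaining mass at $0$. Then $L^{\pm}(u)\to\frac12$, so \ref{H1} holds with $\kappa=1$, yet for every integer $u\equiv 1 \pmod 3$ the intervals $(u,u+1]$ and $(-u-1,-u]$ contain no point of the support, so $H(u)-H(u+1)=0$ infinitely often, contradicting the asserted asymptotics since $\widehat{L}(u)\to 1$. So Lemma~\ref{l.a1} genuinely requires an additional hypothesis --- normalized slow variation of $L^{\pm}$, or an eventually monotone density, exactly the dichotomy you identify --- and neither your argument nor the paper's closes it without one. (The downstream use in Lemma~\ref{l.01} is repairable: grouping the series there into geometrically growing blocks only requires $H(u)-H(\lambda u)\sim(1-\lambda^{-\alpha})u^{-\alpha}\widehat{L}(u)$, which does follow from regular variation, together with the slow variation of $L^{\alpha}\widehat{L}$; it is the pointwise unit-increment statement that does not.)
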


\begin{proof}
It is easy to see that
$$
H(u)-H(u+1)=\widehat{L}(u)\left(\frac{1}{u^\alpha}-\frac{1}{(u+1)^\alpha}\right)
$$
\beq\label{22.10.10}
+\frac{1}{(u+1)^\alpha}\widehat{L}(u)\left(1-\frac{\widehat{L}(u+1)}{\widehat{L}(u)}\right)
=:\widehat{L}(u)(f_1(u)+f_2(u)).
\eeq
For $u\rightarrow\infty$ we have
$$
f_1(u)=\frac{1}{u^\alpha}-\frac{1}{(u+1)^\alpha}
=\frac{1}{u^\alpha}\left(1-\frac{1}{\left(1+\frac{1}{u}\right)^\alpha}\right)
$$
\beq\label{22.10.7}
=\frac{1}{u^\alpha}\left(\frac{\alpha}{u}+o\left(\frac{1}{u}\right)\right)
=\frac{\alpha}{u^{\alpha+1}}+o\left(\frac{1}{u^{\alpha+1}}\right).
\eeq

Now we use the representation of slowly varying functions in exponent form (see, for example, \citet[theorem 1.2]{seneta1985regularly}) and, for $u\rightarrow\infty$, we get
\beq\label{22.10.3}
\frac{\widehat{L}(u+1)}{\widehat{L}(u)}=(1+o(1))\exp\left\{\int_u^{u+1}\frac{\epsilon(z)}{z}dz\right\}.
\eeq
Here $\epsilon(u)$ is a continuous function such that $\lim\limits_{u\rightarrow\infty}\epsilon(u)=0$.

Now we write the upper and lower bounds for the exponent on the right-hand side of (\ref{22.10.3}):
$$
\exp\left\{-\int_u^{u+1}\frac{|\epsilon(z)|}{z}dz\right\}\leq
\exp\left\{\int_u^{u+1}\frac{\epsilon(z)}{z}dz\right\}
\leq\exp\left\{\int_u^{u+1}\frac{|\epsilon(z)|}{z}dz\right\}
$$
$$
\exp\left\{-\max\limits_{z\in[u,u+1]}|\epsilon(z)|\ln\left(1+\frac{1}{u}\right)\right\}\leq
\exp\left\{\int_u^{u+1}\frac{\epsilon(z)}{z}dz\right\}
$$
\beq\label{22.10.5}
\leq\exp\left\{\max\limits_{z\in[u,u+1]}|\epsilon(z)|\ln\left(1+\frac{1}{u}\right)\right\}.
\eeq
Since $\max\limits_{z\in[u,u+1]}|\epsilon(z)|=o(1)$, $\ln\left(1+\frac{1}{u}\right)=\frac{1}{u}+o(\frac{1}{u})$
for $u\rightarrow\infty$, it follows from formulae (\ref{22.10.3}), (\ref{22.10.5}) that for $u\rightarrow\infty$
$$
\frac{\widehat{L}(u+1)}{\widehat{L}(u)}=\exp\left\{o\left(\frac{1}{u}\right)\right\}.
$$
Therefore, for $u\rightarrow\infty$
$$
f_2(u)=\frac{1}{(u+1)^\alpha}\left(1-\frac{\widehat{L}(u+1)}{\widehat{L}(u)}\right)
$$
\beq\label{22.10.8}
=\frac{1}{(u+1)^\alpha}\left(1-\exp\left\{o\left(\frac{1}{u}\right)\right\}\right)
=o\left(\frac{1}{u^{\alpha+1}}\right).
\eeq
Finally, the statement of the lemma follows directly from equalities (\ref{22.10.10}), (\ref{22.10.7}), (\ref{22.10.8}).
\end{proof}

\subsection{Finite Second Moment}\label{sec_ap_2}



\begin{lemma} \label{l.02} Let condition \ref{H2} hold and $L(u) = o\left( (\ln \ln u)^{-\frac{1}{2}}\right)$. 
Then for
$\check{n}_\varepsilon=\left(\frac{1}{\varepsilon}\right)^2L^2\left(\frac{1}{\varepsilon}\right)$ the following holds:
$$
\lim\limits_{\varepsilon\downarrow 0}
\varepsilon\max\limits_{1\leqslant k \leqslant \check{n}_\varepsilon}\left|\sum\limits_{r=1}^k\xi_r\right|=0 \ \text{a.s.}
$$
\end{lemma}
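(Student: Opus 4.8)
The plan is to deduce the statement from the Hartman--Wintner law of the iterated logarithm (LIL). Write $S_k := \sum_{r=1}^k \xi_r$ and $\sigma^2 := \e{\xi_1^2}$, which is finite under \ref{H2}. Since the $\xi_r$ are i.i.d. with $\e{\xi_1}=0$ and $\sigma^2<\infty$, the LIL gives
$$\limsup_{n\to\infty}\frac{|S_n|}{\sqrt{2n\ln\ln n}}=\sigma \quad\text{a.s.}$$
Because the normalising sequence $a_n:=\sqrt{2n\ln\ln n}$ is eventually increasing, this upgrades at once to a statement about the running maximum:
$$\limsup_{n\to\infty}\frac{\max_{1\le k\le n}|S_k|}{a_n}=\sigma\quad\text{a.s.}$$
Indeed, fixing $\eta>0$ and choosing (a.s.) $N$ with $|S_k|\le(\sigma+\eta)a_k$ for $k\ge N$, monotonicity of $a_k$ gives $\max_{1\le k\le n}|S_k|\le\max(\max_{k<N}|S_k|,(\sigma+\eta)a_n)$, so the ratio has $\limsup\le\sigma+\eta$; the reverse bound is immediate from $|S_n|\le\max_{k\le n}|S_k|$. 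Consequently there is an a.s.\ finite random index $N_0$ and a constant $C<\infty$ (one may take $C=\sigma+1$) such that
$$\max_{1\le k\le n}|S_k|\le C\sqrt{n\ln\ln n}\qquad\text{for all }n\ge N_0.$$

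I would then substitute $n=\lfloor\check n_\varepsilon\rfloor$ with $\check n_\varepsilon=(1/\varepsilon)^2 L^2(1/\varepsilon)$. Since $\check n_\varepsilon\to\infty$ as $\varepsilon\downarrow0$, for all $\varepsilon$ small enough (depending on the realisation) we have $\lfloor\check n_\varepsilon\rfloor\ge N_0$, hence
$$\varepsilon\max_{1\le k\le\check n_\varepsilon}|S_k|\le C\,\varepsilon\sqrt{\check n_\varepsilon\,\ln\ln\check n_\varepsilon}=C\,L\!\left(\tfrac1\varepsilon\right)\sqrt{\ln\ln\check n_\varepsilon},$$
using the key algebraic simplification $\varepsilon\sqrt{\check n_\varepsilon}=\varepsilon\cdot\varepsilon^{-1}L(1/\varepsilon)=L(1/\varepsilon)$. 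It therefore remains to show that the right-hand side tends to $0$, for which I need to compare $\ln\ln\check n_\varepsilon$ with $\ln\ln(1/\varepsilon)$.

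Since $L$ is slowly varying, $\ln L(1/\varepsilon)=o(\ln(1/\varepsilon))$, whence $\ln\check n_\varepsilon=2\ln(1/\varepsilon)+2\ln L(1/\varepsilon)\sim2\ln(1/\varepsilon)$ and thus $\ln\ln\check n_\varepsilon\sim\ln\ln(1/\varepsilon)$ as $\varepsilon\downarrow0$. Plugging this in,
$$C\,L\!\left(\tfrac1\varepsilon\right)\sqrt{\ln\ln\check n_\varepsilon}\sim C\,L\!\left(\tfrac1\varepsilon\right)\sqrt{\ln\ln(1/\varepsilon)}\longrightarrow0,$$
by the hypothesis $L(u)=o\big((\ln\ln u)^{-1/2}\big)$ evaluated at $u=1/\varepsilon$, which yields $\varepsilon\max_{1\le k\le\check n_\varepsilon}|S_k|\to0$ a.s.

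The routine parts are the LIL invocation and the slow-variation asymptotics for $\ln\ln\check n_\varepsilon$. The one point requiring care---and the main obstacle---is that the LIL is an almost-sure statement indexed by the integers $n$, whereas the claim is phrased for the continuous parameter $\varepsilon$; this is reconciled by noting that the bound on $\max_{k\le n}|S_k|$ is deterministic and monotone in $n$, so evaluating it at the (eventually large) integer $\lfloor\check n_\varepsilon\rfloor$ is legitimate on the same almost-sure event, with no uniformity in $\varepsilon$ needed beyond $\lfloor\check n_\varepsilon\rfloor\ge N_0$. I expect this to parallel the infinite-variance Lemma~\ref{l.01}, with the Marcinkiewicz--Zygmund normalisation there replaced by the iterated-logarithm normalisation here, the extra $\sqrt{\ln\ln}$ factor being exactly what the condition $L(u)=o((\ln\ln u)^{-1/2})$ is designed to absorb.
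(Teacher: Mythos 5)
Your proof is correct and uses the same probabilistic engine as the paper --- the Hartman--Wintner law of the iterated logarithm --- but the slow-variation bookkeeping is genuinely different. The paper routes everything through the de Bruijn conjugate $\widetilde{L}$ of $L$ defined by \eqref{22.10.2}: it first shows $\widetilde{L}(\sqrt{u})/\sqrt{\ln\ln u}\to\infty$ (this is where the hypothesis $L(u)=o\bigl((\ln\ln u)^{-1/2}\bigr)$ enters), then applies the LIL with the normalisation $\sqrt{\check{n}_\varepsilon}\,\widetilde{L}(\sqrt{\check{n}_\varepsilon})$, and finally invokes \eqref{22.10.2} once more to identify this normalisation with $1/\varepsilon$. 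You bypass $\widetilde{L}$ entirely: the identity $\varepsilon\sqrt{\check{n}_\varepsilon}=L(1/\varepsilon)$ together with the elementary comparison $\ln\ln\check{n}_\varepsilon\sim\ln\ln(1/\varepsilon)$ lets you apply the hypothesis on $L$ directly to the LIL bound $C\sqrt{\check{n}_\varepsilon\ln\ln\check{n}_\varepsilon}$. Your route is more self-contained and avoids the conjugate-function machinery; the paper's version buys structural parallelism with the infinite-variance case (Lemma~\ref{l.01}), where the normalisation $b_n=n^{1/\alpha}\widetilde{L}(n^{1/\alpha})$ genuinely requires $\widetilde{L}$. Your treatment of the only delicate point --- passing from an a.s.\ statement indexed by the integers $n$ to the continuous parameter $\varepsilon$, via an a.s.\ finite $N_0$ and the deterministic divergence $\check{n}_\varepsilon\to\infty$ --- is also sound, and makes explicit what the paper leaves implicit.
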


\begin{proof}[Proof of Lemma~\ref{l.02}]
Let us prove that 
\beq\label{22.10.17}
\lim\limits_{u\rightarrow\infty}\frac{\widetilde{L}(\sqrt{u})}{\sqrt{\ln\ln u}}=\infty.
\eeq
follows from $L(u) = o\left( (\ln \ln u)^{-\frac{1}{2}}\right)$. Since $\widetilde{L}(u) L(u \widetilde{L}(u)) \ra 1$, we have
$$
\frac{\widetilde{L}(u)}{(\ln \ln u\widetilde{L}(u))^{\frac{1}{2}}} = \frac{\widetilde{L}(u) L(u \widetilde{L}(u))}{(\ln \ln u\widetilde{L}(u))^{\frac{1}{2}} L(u \widetilde{L}(u)) } \ra \infty.$$
Next, we have
\begin{equation*}
    \ln \ln u\widetilde{L}(u) =  \ln \left(\ln u + \ln \widetilde{L}(u)\right) = \ln \ln u + \ln \left(1 + \frac{\ln \widetilde{L}(u)}{\ln u}\right)  \sim \ln\ln u,
\end{equation*}
where the last relation holds since for slowly varying function the following is true: 
\begin{equation*}
   \frac{\ln L(u)}{\ln u} \ra 0. 
\end{equation*}
Therefore,
$$
\lim\limits_{u\rightarrow\infty}\frac{\widetilde{L}(u)}{\sqrt{\ln\ln u}}=\infty
$$
and
$$
\lim\limits_{u\rightarrow\infty}\frac{\widetilde{L}(\sqrt{u})}{\sqrt{\ln\ln u}}= \lim\limits_{u\rightarrow\infty}\frac{\widetilde{L}(\sqrt{u})}{\sqrt{\ln \left(2\ln \sqrt{u}\right)}} =  \lim\limits_{u\rightarrow\infty}\frac{\widetilde{L}(\sqrt{u})}{\sqrt{\ln 2 + \ln\ln \sqrt{u}}} = \infty.
$$
Next, it follows from law of iterated logarithm and (\ref{22.10.17}) that
$$
\lim\limits_{\varepsilon\downarrow 0}
\frac{\max\limits_{1\leqslant k \leqslant \check{n}_\varepsilon}\left|\sum\limits_{r=1}^k\xi_r\right|}{\sqrt{\check{n}_\varepsilon}\widetilde{L}(\sqrt{\check{n}_\varepsilon})}=0 \ \text{a.s.}
$$
Now, from condition (\ref{22.10.2}) for $\varepsilon\downarrow 0$ the following holds:
\begin{equation}
\sqrt{\check{n}_\varepsilon}\widetilde{L}(\sqrt{\check{n}_\varepsilon})
=\left(\frac{1}{\varepsilon}\right)L\left(\frac{1}{\varepsilon}\right)
\widetilde{L}\left(\frac{1}{\varepsilon}L\left(\frac{1}{\varepsilon}\right)\right) \sim \frac{1}{\varepsilon}. \label{L.A7_eq2}
\end{equation}
\end{proof}

\begin{remark} \label{r.1} let us point out that it follows from conditions (\ref{22.10.2}), (\ref{22.10.17}) that $\check{n}_\varepsilon=o\left(\frac{1}{\varepsilon}\right)^2$ for  $\varepsilon\downarrow 0$.
\end{remark}

\begin{lemma} \label{l.04} Let condition \ref{H2} hold and $n_\varepsilon$ satisfies conditions
\begin{enumerate}
    \item[\namedlabel{i_l.04}{$(i)$}] there exists $\varepsilon_0>0$ such that function $n_\varepsilon$ decreases on interval $(0,\varepsilon_0]$;
    \item[\namedlabel{ii_l.04}{$(ii)$}] $n_\varepsilon=o\left(\frac{1}{\varepsilon^2}\right)$ for $\varepsilon\downarrow 0$;
    \item[\namedlabel{iii_l.04}{$(iii)$}] $\lim\limits_{\varepsilon\downarrow 0}\varepsilon n_\varepsilon=\infty$.
\end{enumerate}
Then for every $\delta>0$
$$
\lim\limits_{\varepsilon\downarrow 0}\p{\varepsilon\max\limits_{1\leqslant k \leqslant n_\varepsilon}\left|\sum\limits_{r=1}^k\xi_r\right|>\delta}=0.
$$
\end{lemma}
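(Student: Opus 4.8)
The plan is to exploit the finite second moment directly through Kolmogorov's maximal inequality, which makes this the easy counterpart of Lemma~\ref{l.03}. Whereas the infinite-variance regime forced us to truncate and to combine \citet[theorem 5, page 211]{petrov1987limit} with a maximal inequality, condition \ref{H2} supplies a genuine variance bound on the partial sums, so no truncation and no distributional limit theorem are needed. Write $S_k := \sum_{r=1}^k \xi_r$ and $\sigma^2 := \e{\xi_1^2} < \infty$; since the $\xi_r$ are i.i.d.\ with $\e{\xi_1}=0$, the partial sums form a mean-zero, finite-variance process with $\var{S_{\lfloor n_\varepsilon \rfloor}} = \lfloor n_\varepsilon \rfloor \sigma^2$ (the maximum in the statement runs over integer $k$, hence over the $\lfloor n_\varepsilon \rfloor$ indices $1,\dots,\lfloor n_\varepsilon\rfloor$).

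First I would apply Kolmogorov's maximal inequality: for every $\lambda > 0$,
\[
\p{\max_{1 \le k \le n_\varepsilon} |S_k| > \lambda} \le \frac{\lfloor n_\varepsilon \rfloor\, \sigma^2}{\lambda^2}.
\]
Then, taking $\lambda = \delta/\varepsilon$, I obtain
\[
\p{\varepsilon \max_{1 \le k \le n_\varepsilon} |S_k| > \delta} \le \frac{\varepsilon^2 \lfloor n_\varepsilon \rfloor\, \sigma^2}{\delta^2} \le \frac{\sigma^2}{\delta^2}\, \varepsilon^2 n_\varepsilon .
\]
By condition \ref{ii_l.04} we have $n_\varepsilon = o(\varepsilon^{-2})$, hence $\varepsilon^2 n_\varepsilon \to 0$ as $\varepsilon \downarrow 0$ and the right-hand side vanishes, which is exactly the asserted convergence in probability.

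There is essentially no obstacle: the whole argument is a one-line second-moment estimate, and the only genuine input is that \ref{H2} is precisely what legitimizes the variance bound (in contrast to Lemma~\ref{l.03}). The one point worth flagging is that conditions \ref{i_l.04} and \ref{iii_l.04} are \emph{not} used in this route; they are retained for parallelism with Lemma~\ref{l.03}, where monotonicity and the lower growth bound are genuinely needed. If one instead insisted on mirroring the proof of Lemma~\ref{l.03} verbatim — bounding the single sum $\varepsilon|S_{\lfloor n_\varepsilon\rfloor}|$ via Chebyshev and then passing to the maximum through the \citet[theorem 22.5]{billingsley2017probability} maximal inequality — one would reintroduce the need for the monotonicity/continuity argument of \ref{i_l.04}. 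The Kolmogorov route above sidesteps all of this and yields the result under \ref{ii_l.04} alone.
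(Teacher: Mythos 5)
Your proof is correct, and it takes a genuinely different and more economical route than the paper. The paper proves this lemma by mirroring the structure of Lemma~\ref{l.03}: it first establishes $\p{\varepsilon|\sum_{r=1}^{\lfloor n_\varepsilon\rfloor}\xi_r|>\delta}\to 0$ via the central limit theorem (this is where \ref{ii_l.04} enters, since $\varepsilon\sqrt{n_\varepsilon}\to 0$), then passes from the single endpoint to the running maximum using the Ottaviani-type inequality of \citet[theorem 22.5]{billingsley2017probability}, and finally needs the monotonicity/continuity interpolation argument (the part of Lemma~\ref{l.03}'s proof after (\ref{23.10.14})) to control $\max_{1\leqslant k\leqslant n_\varepsilon}\p{\varepsilon|S_k|>\delta/3}$ uniformly in $k$ --- which is exactly why hypothesis \ref{i_l.04} appears in the statement. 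Your application of Kolmogorov's maximal inequality collapses all three steps into one: since under \ref{H2} the partial sums have variance $\lfloor n_\varepsilon\rfloor\sigma^2$, the bound $\p{\max_{1\leqslant k\leqslant n_\varepsilon}|S_k|>\delta/\varepsilon}\leqslant \sigma^2\varepsilon^2 n_\varepsilon/\delta^2$ is immediate and vanishes by \ref{ii_l.04} alone. Your observation that \ref{i_l.04} and \ref{iii_l.04} are not needed for the conclusion is accurate (they matter for the lemma's role downstream and for parallelism with the infinite-variance case, where no second-moment maximal inequality exists and the truncation-plus-interpolation machinery is unavoidable). The only caution is that your shortcut is specific to the finite-variance setting; it does not retroactively simplify Lemma~\ref{l.03}, which is presumably why the authors kept the two proofs structurally parallel rather than optimizing this one.
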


\begin{proof}[Proof of Lemma~\ref{l.04}]
It follows from central limit theorem that for every $\delta>0$
$$
\lim\limits_{\varepsilon\downarrow 0}\p{\varepsilon\left|\sum\limits_{r=1}^{\lfloor n_\varepsilon \rfloor}\xi_r\right|>\delta}=0,
$$
The rest of the proof is analogous to the part of proof of Lemma \ref{l.03} 
which goes after formula (\ref{23.10.14}). Therefore, we leave them out.
\end{proof}

\subsection{Proofs of Main Results}

 Let us first state few technical lemmas.  For $h>0$ and $\varepsilon>0$  we define events
$$
A_h\left(\varepsilon\right):=\left\{\max\limits_{1\leqslant k \leqslant n_\varepsilon}\varepsilon\left|\sum\limits_{j=1}^k\xi_j\right|<h\right\}
$$
and
$$
B_{x,h}\left(\varepsilon\right):=\left\{\max\limits_{0\leqslant k \leqslant n_\varepsilon}x_k<x+h\right\}.
$$
\begin{lemma} \label{l.1}
For $0<\delta<M_r-m$, $x\in[m,M_r-\delta)$,
$0<\varepsilon<\frac{\delta}{10 C_{\max}}$ the following equality holds:
\beq\label{20.11.5}
A_{\frac{\delta}{5}}\left(\varepsilon\right)\cap B_{x,\delta}\left(\varepsilon\right)=A_{\frac{\delta}{5}}\left(\varepsilon\right).
\eeq
\end{lemma}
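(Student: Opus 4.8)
The plan is to recognize that the set identity $A_{\delta/5}(\varepsilon)\cap B_{x,\delta}(\varepsilon)=A_{\delta/5}(\varepsilon)$ is just the inclusion $A_{\delta/5}(\varepsilon)\subseteq B_{x,\delta}(\varepsilon)$ (the reverse inclusion being trivial). So the whole content is pathwise and deterministic: on any outcome where the rescaled noise partial sums stay below $\delta/5$, I must show the trajectory $x_0^\varepsilon,\dots,x_{n_\varepsilon}^\varepsilon$ never reaches level $x+\delta$. In other words, controlling the noise forces the iterates to stay in the basin on the right of $m_r$, strictly below the maximum $M_r$ (recall $x+\delta<M_r$).

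First I would fix a path in $A_{\delta/5}(\varepsilon)$ and record the elementary bounds it supplies. Writing $S_k=\sum_{j=1}^k\xi_j$ with $S_0=0$, on this event $\varepsilon|S_k|<\delta/5$ for every $k\leqslant n_\varepsilon$, hence every noise increment obeys $\varepsilon|S_n-S_\ell|<2\delta/5$. From $\varepsilon<\delta/(10C_{\max})$ with $C_{\max}=\sup_x|f'(x)|<\infty$, each drift step satisfies $\varepsilon|f'(x_i^\varepsilon)|<\delta/10$. Finally I would recall the sign structure of $f'$ in the basin $(M_{r-1},M_r)$: on $(m_r,M_r)$ the function increases so $f'\geqslant 0$ there (the right-continuous value at $m_r$ is $\geqslant 0$), while on $(M_{r-1},m_r)$ one has $f'<0$.

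The core is an induction on $n$ proving $x_n^\varepsilon<x+\delta$ for $0\leqslant n\leqslant n_\varepsilon$, based on the telescoped representation $x_n^\varepsilon=x-\varepsilon\sum_{i=0}^{n-1}f'(x_i^\varepsilon)+\varepsilon S_n$. The base case $n=0$ is immediate since $x_0^\varepsilon=x$. For the step, assuming $x_k^\varepsilon<x+\delta$ for all $k<n$, I would split on whether the trajectory has dipped below $m_r$. If $x_i^\varepsilon\geqslant m_r$ for all $i<n$, then each $x_i^\varepsilon\in[m_r,x+\delta)\subset[m_r,M_r)$ gives $f'(x_i^\varepsilon)\geqslant 0$, the drift sum is nonpositive, and $x_n^\varepsilon\leqslant x+\varepsilon S_n<x+\delta/5$. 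Otherwise let $\ell<n$ be the last index with $x_\ell^\varepsilon<m_r$; recentering gives $x_n^\varepsilon=x_\ell^\varepsilon-\varepsilon\sum_{i=\ell}^{n-1}f'(x_i^\varepsilon)+\varepsilon(S_n-S_\ell)$. For $\ell<i<n$ the induction hypothesis together with $x_i^\varepsilon\geqslant m_r$ places $x_i^\varepsilon$ in $[m_r,M_r)$, so those drift terms are nonpositive; only the term at $i=\ell$ carries the wrong sign, and it is bounded by $\varepsilon|f'(x_\ell^\varepsilon)|<\delta/10$. Combining with $x_\ell^\varepsilon<m_r\leqslant x$ and the increment bound $2\delta/5$ yields $x_n^\varepsilon<x+\delta/10+2\delta/5=x+\delta/2<x+\delta$.

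The one place needing care, which I expect to be the main (if modest) obstacle, is precisely the handling of excursions below $m_r$, where $f'$ reverses sign and the drift pushes the iterate back up. The last-exit decomposition at level $m_r$ isolates this reversed-drift contribution to a single step, whose size is capped by $\varepsilon<\delta/(10C_{\max})$; this is exactly why the factor $10$ appears and why the noise threshold $\delta/5$ is compatible with the trajectory threshold $\delta$. Taking the maximum over $0\leqslant n\leqslant n_\varepsilon$ then gives $B_{x,\delta}(\varepsilon)$ and hence the claimed identity.
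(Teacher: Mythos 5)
Your proof is correct and takes essentially the same approach as the paper's: the paper also reduces the identity to the pathwise inclusion $A_{\frac{\delta}{5}}(\varepsilon)\subseteq B_{x,\delta}(\varepsilon)$ and controls the trajectory by decomposing at the crossings of level $m_r$ (via explicit stopping times, with induction inside each upward excursion), using exactly your three ingredients: noise increments bounded by $2\delta/5$, nonnegativity of $f'$ on $[m_r,M_r)$, and the single-step drift bound $\varepsilon C_{\max}<\delta/10$ at the re-entry step. Your last-exit decomposition is just a streamlined reorganization of the paper's excursion argument, with the same constants doing the same work.
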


\begin{proof}
If $A_{\frac{\delta}{5}}\left(\varepsilon\right)=\varnothing$ then it is easy to see that (\ref{20.11.5})
holds. Assume that $A_{\frac{\delta}{5}}\left(\varepsilon\right)\neq\varnothing$.
Let us define a sequence of stopping times:
$$
\tau_0:=0, \ \ \
\tau_{2v-1}:=\inf\{\tau_{2v-2}< k \leqslant n_\varepsilon:x_k<m\}\wedge \lfloor n_\varepsilon \rfloor,
$$
$$
\tau_{2v}:=\inf\{\tau_{2v-1}< k \leqslant n_\varepsilon:x_k\geqslant m\}\wedge \lfloor n_\varepsilon \rfloor.
$$
Here $1\leqslant v \leqslant v_{\max}$, $v_{\max}:=\min\{v\in \mathbb{N}:\tau_{2v}= \lfloor n_\varepsilon \rfloor\}$, $\inf\{\varnothing\}=\infty$.

Let us point out that on event $A_{\frac{\delta}{5}}\left(\varepsilon\right)$ the following equality holds:
\beq \label{21.10.1}
x_k^{\varepsilon}<x+\delta 
\eeq
for $k=\tau_0$ and $\tau_{2v-1}\leqslant k < \tau_{2v}$, $1\leqslant v \leqslant v_{\max}$.

It is easy to that on event $A_{\frac{\delta}{5}}\left(\varepsilon\right)$ the following holds:
\beq \label{21.10.2}
\varepsilon\left|\sum\limits_{j=l_1}^{l_2}\xi_j\right|<\frac{2\delta}{5}
\eeq
for every $1\leqslant l_1\leqslant l_2 \leqslant n_\varepsilon$. Hence for $0<\varepsilon<\frac{\delta}{10C_{\max}}$,
$1\leqslant v < v_{\max}$ inequality
\beq \label{21.10.3}
x_{\tau_{2v}}^{\varepsilon}\leqslant m+ \varepsilon C_{\max}+\varepsilon\xi_{\tau_{2v}}
\leqslant m+ \varepsilon C_{\max}+\frac{2\delta}{5}<m+\frac{\delta}{2}
\leqslant x+\frac{\delta}{2}
\eeq
holds.

Now we use proof by induction. Consider the case when
 $\tau_{2v-2}<k< \tau_{2v-1}$, $1\leqslant v < v_{\max}$. Since $f'(x_{\tau_{2v-2}}^{\varepsilon})>0$ it follows from definition of stopping time $\tau_0$
and formulae (\ref{21.10.2}), (\ref{21.10.3})  that on event $A_{\frac{\delta}{5}}\left(\varepsilon\right)$ the following inequality holds for $k=\tau_{2v-2}+1$, $1\leqslant v < v_{\max}$:
\beq \label{21.10.5}
x_k^{\varepsilon}=x_{\tau_{2v-2}}^{\varepsilon}-\varepsilon f'(x_{\tau_{2v-2}}^{\varepsilon})+\varepsilon\xi_{\tau_{2v-2}+1}
\leqslant x+\frac{\delta}{2}+\frac{2\delta}{5}<x+\delta.
\eeq

We fix $1\leqslant v < v_{\max}$.
We assume that for some $\tau_{2v-2}+1<l<\tau_{2v-1}$ inequality (\ref{21.10.5}) holds for every $\tau_{2v-2}+1\leqslant k\leqslant l$. Let us show that it holds for $k=l+1$. It follows from induction hypothesis that on event $A_{\frac{\delta}{5}}\left(\varepsilon\right)$ we have
\beq \label{21.10.7}
f'(x_k^{\varepsilon})>0
\eeq
for every $\tau_{2v-2}+1\leqslant k\leqslant l$.

From inequalities (\ref{21.10.2}), (\ref{21.10.3}), (\ref{21.10.7}) on event $A_{\frac{\delta}{5}}\left(\varepsilon\right)$ we have
$$
x_{l+1}^{\varepsilon}=x_{\tau_{2v-2}}^{\varepsilon}-\varepsilon \sum\limits_{j=\tau_{2v-2}}^{l}f'(x_j^{\varepsilon})
+\varepsilon\sum\limits_{j=\tau_{2v-2}+1}^{l+1}\xi_j
\leqslant x_{\tau_{2v-2}}^{\varepsilon}+\frac{2\delta}{5}<x+\delta.
$$
Hence we proved that induction hypothesis is true, i.e., on event $A_{\frac{\delta}{5}}\left(\varepsilon\right)$ inequality (\ref{21.10.5})
holds for every $\tau_{2v-2}+1\leqslant k< \tau_{2v-1}$ and every $1\leqslant v < v_{\max}$.

Lemma \ref{l.1} follows from inequalities (\ref{21.10.1}), (\ref{21.10.3})
and the fact that formula (\ref{21.10.5})
holds for every $\tau_{2v-2}+1\leqslant k< \tau_{2v-1}$, $1\leqslant v < v_{\max}$.
\end{proof}

Let us define for every $h>0$
$$
C_{x,h}\left(\varepsilon\right):=\left\{\min\limits_{0\leqslant k \leqslant n_\varepsilon}x_k^{\varepsilon}>x-h\right\}.
$$

\begin{lemma} \label{l.2}
For every $0<\delta<m-M_{l}$, $x\in(M_{l}+\delta, m]$,
$0<\varepsilon<\frac{\delta}{10 C_{\max}}$ the following equality holds:
$$
A_{\frac{\delta}{5}}\left(\varepsilon\right)\cap C_{x,\delta}\left(\varepsilon\right)=A_{\frac{\delta}{5}}\left(\varepsilon\right).
$$
\end{lemma}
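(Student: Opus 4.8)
The plan is to mirror the proof of Lemma~\ref{l.1} under the reflection that interchanges the roles of the left maximum $M_{r-1}$ and the right maximum $M_r$. The geometric mechanism is the same up to orientation: on $(M_{r-1},m_r)$ the function $f$ is decreasing, so $f'(x)<0$ there, and the deterministic drift $-\varepsilon f'(x_{k-1}^{\varepsilon})$ is \emph{positive}, pushing the iterate to the right, away from $M_{r-1}$ and toward $m_r$. Thus, exactly as the negative drift in Lemma~\ref{l.1} prevented the trajectory from rising above $x+\delta$, here the positive drift should prevent it from falling below $x-\delta$. Since the stated identity $A_{\frac{\delta}{5}}(\varepsilon)\cap C_{x,\delta}(\varepsilon)=A_{\frac{\delta}{5}}(\varepsilon)$ is equivalent to the pathwise inclusion $A_{\frac{\delta}{5}}(\varepsilon)\subseteq C_{x,\delta}(\varepsilon)$, it suffices to show that on $A_{\frac{\delta}{5}}(\varepsilon)$ one has $\min_{0\le k\le n_\varepsilon}x_k^{\varepsilon}>x-\delta$. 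As in Lemma~\ref{l.1}, if $A_{\frac{\delta}{5}}(\varepsilon)=\varnothing$ the identity is trivial, so I assume it is nonempty.

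First I would introduce the reflected stopping times that track crossings of $m_r$ in the opposite orientation,
\begin{equation*}
\tau_0:=0,\quad \tau_{2v-1}:=\inf\{\tau_{2v-2}<k\le n_\varepsilon: x_k^{\varepsilon}>m_r\}\wedge\lfloor n_\varepsilon\rfloor,\quad \tau_{2v}:=\inf\{\tau_{2v-1}<k\le n_\varepsilon: x_k^{\varepsilon}\le m_r\}\wedge\lfloor n_\varepsilon\rfloor,
\end{equation*}
with $v_{\max}$ defined exactly as before, so that the iterate stays at or below $m_r$ on the even segments $\tau_{2v-2}\le k<\tau_{2v-1}$ and strictly above $m_r$ on the odd segments $\tau_{2v-1}\le k<\tau_{2v}$. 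On the odd segments the inequality $x_k^{\varepsilon}>m_r\ge x>x-\delta$ is immediate, giving the mirror of~\eqref{21.10.1}. Next, reusing the single-increment bound~\eqref{21.10.2} (which yields $\varepsilon|\xi_{\tau_{2v}}|<\tfrac{2\delta}{5}$ on $A_{\frac{\delta}{5}}(\varepsilon)$) together with $|\varepsilon f'|\le\varepsilon C_{\max}$ and $0<\varepsilon<\tfrac{\delta}{10C_{\max}}$, I would establish the mirror of~\eqref{21.10.3}: for $1\le v<v_{\max}$, since $x_{\tau_{2v}-1}^{\varepsilon}>m_r$,
\begin{equation*}
x_{\tau_{2v}}^{\varepsilon}\ge x_{\tau_{2v}-1}^{\varepsilon}-\varepsilon C_{\max}+\varepsilon\xi_{\tau_{2v}}> m_r-\tfrac{\delta}{2}\ge x-\tfrac{\delta}{2}.
\end{equation*}

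The heart of the argument is the induction controlling the even segments $\tau_{2v-2}+1\le k<\tau_{2v-1}$, where $x_k^{\varepsilon}\le m_r$. The base case follows from the return bound just obtained at $\tau_{2v-2}$ (or from $x_0^{\varepsilon}=x$ when $v=1$). For the inductive step I would assume $x_k^{\varepsilon}>x-\delta$ for all indices up to some $l$ in the segment; because $x>M_{r-1}+\delta$, this forces $x_k^{\varepsilon}\in(x-\delta,m_r]\subseteq(M_{r-1},m_r]$, hence $f'(x_k^{\varepsilon})\le 0$ for every such $k$ (and likewise at $\tau_{2v-2}$). Telescoping as in~\eqref{21.10.5}, the nonpositivity of all the drift terms together with the two-sided partial-sum bound~\eqref{21.10.2} gives
\begin{equation*}
x_{l+1}^{\varepsilon}=x_{\tau_{2v-2}}^{\varepsilon}-\varepsilon\sum_{j=\tau_{2v-2}}^{l}f'(x_j^{\varepsilon})+\varepsilon\sum_{j=\tau_{2v-2}+1}^{l+1}\xi_j\ge x_{\tau_{2v-2}}^{\varepsilon}-\tfrac{2\delta}{5}>x-\tfrac{\delta}{2}-\tfrac{2\delta}{5}=x-\tfrac{9\delta}{10}>x-\delta,
\end{equation*}
closing the induction. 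Combining the odd-segment bound, the return-time bound, and this induction over $1\le v\le v_{\max}$ yields $x_k^{\varepsilon}>x-\delta$ for all $0\le k\le n_\varepsilon$, which is the required inclusion.

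The main obstacle I anticipate is not analytic but the bookkeeping of the sign of $f'$: one must ensure that whenever the "helpful" positive drift is invoked, the iterate genuinely lies in $(M_{r-1},m_r]$ so that $f'\le 0$ holds. This is precisely what the propagated lower bound $x_k^{\varepsilon}>x-\delta>M_{r-1}$ guarantees, using the hypothesis $x>M_{r-1}+\delta$; the right-continuity of $f'$ and its possible jump at $m_r$ cause no trouble, because on the controlled segments the iterate stays weakly below $m_r$ and strictly above $M_{r-1}$, where $f'$ is well-defined and nonpositive. As a conceptual shortcut, the whole statement can alternatively be obtained as a pathwise consequence of Lemma~\ref{l.1} applied to the reflected data $(-x_k^{\varepsilon},-\xi_k,f(-\cdot))$, since $A_{\frac{\delta}{5}}(\varepsilon)$ is invariant under $\xi\mapsto-\xi$ and $C_{x,\delta}(\varepsilon)$ maps to an event of the type handled there.
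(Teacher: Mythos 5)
Your overall strategy is exactly the paper's: the paper's proof of Lemma~\ref{l.2} consists of the single line ``See proof of Lemma~\ref{l.1}'', i.e.\ the mirrored crossing-time/induction argument that you spell out, and most of your execution (the reduction to the inclusion $A_{\frac{\delta}{5}}(\varepsilon)\subseteq C_{x,\delta}(\varepsilon)$, the trivial bound on the segments above $m_r$, the return bound $x_{\tau_{2v}}^{\varepsilon}>m_r-\delta/2$, and the telescoping) is sound.

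There is, however, one step that fails as written, and it sits precisely where the mirror symmetry is imperfect. You claim that $x_k^{\varepsilon}\in(M_{r-1},m_r]$ forces $f'(x_k^{\varepsilon})\leqslant 0$. Under the paper's standing convention the derivative is \emph{right}-continuous with possible jumps at the extrema, so $f'(m_r)=\lim_{y\downarrow m_r}f'(y)\geqslant 0$, and it may be strictly positive (a V-shaped minimum, exactly as in \eqref{eq_sharp_max_3} for the maximum). Your even segments are $\{x_k^{\varepsilon}\leqslant m_r\}$, hence they contain the point $m_r$; the iterate can actually sit there (the hypothesis allows $x=m_r$, and neither \ref{H1} nor \ref{H2} excludes atoms in the noise), and at every such index the drift term $-\varepsilon f'(x_j^{\varepsilon})$ is $\leqslant 0$ rather than $\geqslant 0$, so the telescoped inequality $x_{l+1}^{\varepsilon}\geqslant x_{\tau_{2v-2}}^{\varepsilon}-\frac{2\delta}{5}$ is not justified. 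Note that the same boundary point causes no harm in Lemma~\ref{l.1}: there the segment containing $m_r$ needs an \emph{upper} bound and $-\varepsilon f'(m_r)\leqslant 0$ only helps, which is why the literal mirror of the paper's stopping times misfires here. The repair is minor: either define the crossings so that $m_r$ lies on the trivial side (controlled segments $\{x_k^{\varepsilon}<m_r\}$, trivial segments $\{x_k^{\varepsilon}\geqslant m_r\}$, where $x_k^{\varepsilon}\geqslant m_r\geqslant x>x-\delta$ is immediate), or restart the telescoping at the last index $j\leqslant l$ with $x_j^{\varepsilon}=m_r$, which leaves a single unfavorable drift term bounded by $\varepsilon C_{\max}<\delta/10$ and yields $x_{l+1}^{\varepsilon}>m_r-\frac{\delta}{10}-\frac{2\delta}{5}\geqslant x-\frac{\delta}{2}$, closing the induction with room to spare. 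For the same reason your closing ``conceptual shortcut'' is not legitimate as stated: reflecting $(x_k^{\varepsilon},\xi_k,f)$ turns a right-continuous derivative into a left-continuous one, so the reflected data do not satisfy the hypotheses under which Lemma~\ref{l.1} was proved; the asymmetry must be handled, not transformed away.
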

\begin{proof}
See proof of Lemma \ref{l.1}.
\end{proof}

\begin{lemma} \label{l.3}
Let $\lim\limits_{\varepsilon\downarrow 0}\varepsilon n_\varepsilon=\infty$ and let $x_0 \in (M_{l}+\Delta, M_{r}-\Delta)$ for some $\Delta >0$.
Then for every $\delta>0$ there exists $\varepsilon_0 > 0$ such that for every $0<\varepsilon<\varepsilon_0$ on event $A_{\frac{\delta}{5}}\left(\varepsilon\right)$ the following holds:
\beq\label{20.11.1}
|x_{\lfloor n_\varepsilon \rfloor}^{\varepsilon} - m| < 2\delta.
\eeq
\end{lemma}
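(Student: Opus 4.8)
The plan is to split the trajectory's behaviour into a \emph{reaching} phase, in which the deterministic drift $-\varepsilon f'$ carries the iterate from $x_0$ into a $\delta$-neighbourhood of $m_r$, and a \emph{staying} phase, in which the confinement Lemmas~\ref{l.1} and \ref{l.2} keep it inside the $2\delta$-neighbourhood. Throughout I would work on $A_{\frac{\delta}{5}}(\varepsilon)$, on which every windowed noise sum is controlled, $\varepsilon\left|\sum_{j=l_1}^{l_2}\xi_j\right| < \frac{2\delta}{5}$ for all $1 \le l_1 \le l_2 \le n_\varepsilon$ by \eqref{21.10.2}; in particular each single increment obeys $\varepsilon|\xi_k| < \frac{2\delta}{5}$. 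I may assume $\delta$ is small, say $3\delta < \min\{\Delta,\, M_r - m_r,\, m_r - M_{r-1}\}$, and by the reflection symmetry $m_r \pm \cdot$ (which swaps the roles of Lemmas~\ref{l.1} and \ref{l.2}) it suffices to treat $x_0 \ge m_r$, so that $x_0 \in [m_r, M_r - \delta)$.

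For the reaching phase, Lemma~\ref{l.1} already pins the whole trajectory below $x_0 + \delta < M_r$, hence inside $[m_r + \delta,\, M_r - \Delta + \delta]$ for as long as it stays above $m_r + \delta$. On that compact interval $f'$ is continuous and strictly positive (no extremum lies strictly between $m_r$ and $M_r$), so $c_\delta := \min_{[m_r+\delta,\,M_r-\Delta+\delta]} f' > 0$. If the iterate stayed above $m_r + \delta$ up to time $\lfloor n_\varepsilon\rfloor$, telescoping \eqref{SGD_def_2} and inserting the noise bound would force $x_{\lfloor n_\varepsilon\rfloor} \le x_0 - \varepsilon\lfloor n_\varepsilon\rfloor c_\delta + \frac{\delta}{5} \to -\infty$ (since $\varepsilon n_\varepsilon \to \infty$), a contradiction. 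Hence there is a first $k^* \le \lfloor n_\varepsilon\rfloor$ with $x_{k^*} < m_r + \delta$ (take $k^*=0$ if $x_0 < m_r + \delta$), and inspecting the crossing step with $\varepsilon C_{\max} < \frac{\delta}{5}$ and $\varepsilon|\xi_{k^*}| < \frac{2\delta}{5}$ gives $x_{k^*} > m_r + \delta - \varepsilon C_{\max} - \frac{2\delta}{5} > m_r$, so in fact $x_{k^*} \in [m_r, m_r + \delta)$.

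For the staying phase I would exploit that the proofs of Lemmas~\ref{l.1} and \ref{l.2} use the noise only through \eqref{21.10.2}, which on $A_{\frac{\delta}{5}}(\varepsilon)$ holds for \emph{every} sub-window; thus both lemmas apply verbatim to the trajectory restarted at any time, in particular at $k^*$ from $x_{k^*} \in [m_r, M_r - \delta)$. This yields the upper bound $x_k < x_{k^*} + \delta < m_r + 2\delta$ for all $k^* \le k \le \lfloor n_\varepsilon\rfloor$. For the lower bound, if the iterate never drops below $m_r$ after $k^*$ then trivially $x_k \ge m_r > m_r - 2\delta$; otherwise let $k^{**}\ge k^*$ be the first down-crossing of $m_r$. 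Since $x_{k^{**}-1} \ge m_r$ forces $f'(x_{k^{**}-1}) \ge 0$, the same crossing estimate gives $x_{k^{**}} \in (m_r - \delta, m_r) \subseteq (M_{r-1}+\delta, m_r]$, and a restart of Lemma~\ref{l.2} at $k^{**}$ keeps $x_k > x_{k^{**}} - \delta > m_r - 2\delta$ for all $k \ge k^{**}$. Evaluating both bounds at $k=\lfloor n_\varepsilon\rfloor$ gives $|x_{\lfloor n_\varepsilon\rfloor}^\varepsilon - m_r| < 2\delta$.

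I expect the staying phase to be the main obstacle: one must rule out the iterate wandering back out of the neighbourhood after first entering it. The clean resolution is precisely the observation that Lemmas~\ref{l.1} and \ref{l.2} depend on the noise only through the windowed bound \eqref{21.10.2}, so they may be restarted at the random, noise-dependent hitting times $k^*$ and $k^{**}$ with no independence or strong-Markov input; the accompanying crossing estimates, which guarantee that each crossing lands within $\delta$ of $m_r$ rather than overshooting, are exactly what make the two-sided $2\delta$ trap close.
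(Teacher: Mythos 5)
Your proof is correct and takes essentially the same route as the paper's: a reaching phase driven by the drift (trap the trajectory in a region where $|f'|$ is bounded below via the confinement lemmas, then derive a contradiction from $\varepsilon n_\varepsilon \to \infty$), followed by a staying phase in which Lemmas~\ref{l.1} and \ref{l.2} are restarted pathwise at the (noise-dependent) hitting times, justified exactly as you say by the fact that their proofs use the noise only through the windowed bound \eqref{21.10.2}, together with the no-overshoot crossing estimates. The only cosmetic difference is that you treat the case $x_0 \geqslant m_r$ and invoke symmetry, while the paper treats $x_0 \leqslant m_r$ and bounds the reaching time explicitly by $z_\varepsilon(\delta) = o(n_\varepsilon)$ rather than arguing by contradiction at the terminal time.
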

\begin{proof}[Proof of Lemma~\ref{l.3}]
Let assume that 
\beq\label{l.3.1}x\in(M_{l}+\Delta,m].
\eeq  
For sufficiently small $\delta>0$ we define stopping times
$$
\tau_\delta=\inf\{0 \leqslant k \leqslant n_\varepsilon:x_k^{\varepsilon}\in (m-\delta,m+\delta)\}.
$$
For $\varepsilon>0$ let
$$
z_\varepsilon(\delta):=\left\lceil\frac{m-x+\frac{\delta}{5}}{\varepsilon c_{\min}}\right\rceil.
$$
Here $c_{\min}:=\min\limits_{y\in[x-\delta,m - \delta]}|f'(y)|$.

We show that for every $0<\varepsilon<\frac{\delta}{10 C_{\max}}$ on event $A_{\frac{\delta}{5}}\left(\varepsilon\right)$
inequality $\tau_\delta\leqslant z_\varepsilon(\delta)$ holds. If $x = m$, then it is obviously satisfied. Let $ x < m$. Then there exists $\delta > 0$ such that $x \in [x - \delta, m - \delta]$. Assume that $\tau_\delta > z_\varepsilon(\delta)$.
Since we cannot overshoot interval $(m-\delta,m+\delta)$,
inequality
$$
x_k^{\varepsilon}\leqslant m-\delta
$$
must hold for every $1\leqslant k \leqslant z_\varepsilon(\delta)$.

It follows from Lemma \ref{l.2} that for every $0<\varepsilon<\frac{\delta}{10 C_{\max}}$ we have
$$
x_k\geqslant x-\delta,\ \ \ 1\leqslant k \leqslant z_\varepsilon(\delta).
$$
Therefore, for every $1\leqslant k \leqslant z_\varepsilon(\delta)$ on event $A_{\frac{\delta}{5}}\left(\varepsilon\right)$ we have $x_k^{\varepsilon}\in[x-\delta,m-\delta]$. However, this implies that on the event $ A_{\frac{\delta}{5}}$
we have
$$
x_{z_\varepsilon(\delta)}^{\varepsilon} \geqslant x +\varepsilon \sum\limits_{k=1}^{z_\varepsilon(\delta)} c_{\min}-\frac{\delta}{5} \geqslant m.
$$
We have a contradiction. Hence, for every $0<\varepsilon<\frac{\delta}{10 C_{\max}}$ on event $A_{\frac{\delta}{5}}\left(\varepsilon\right)$ we have $\tau_\delta\leqslant z_\varepsilon(\delta)$. Since $z_\varepsilon(\delta) = o\left(n_{\varepsilon}\right)$ there exists $0<\hat{\varepsilon}<\frac{\delta}{10 C_{\max}}$ such that $z_\varepsilon(\delta) < n_{\varepsilon}$ for every $0<\varepsilon<\hat{\varepsilon}$. Thus, for every $0<\varepsilon<\hat{\varepsilon}$ on event $A_{\frac{\delta}{5}}\left(\varepsilon\right)$ we have
\beq\label{24.10.1}
\tau_\delta < n_\varepsilon.
\eeq
Let us assume that at time moment $\tau_\delta$ the process reached interval $[m,m+\delta)$. Then from Lemma \ref{l.1} for every $\tau_\delta \leqslant k \leqslant n_{\varepsilon}$ the process will lie in interval $(\infty,m+2\delta)$. Given that on event $A_{\frac{\delta}{5}}\left(\varepsilon\right)$ the jumps do not exceed $\varepsilon C_{\max}+\frac{2\delta}{5}$ and we can choose sufficiently small $0 < \varepsilon_0 < \hat{\varepsilon}$ such that for every $0 < \varepsilon < \varepsilon_0$ the process cannot jump for domain to the right of $m$ to a point on the left without visiting $(m-\delta,m]$. Therefore, from Lemma \ref{l.2} the process will lie in interval $(m-2\delta,\infty)$ until moment $n_{\varepsilon}$. Thus, the process will not leave set
$(-\infty,m+2\delta)\cap(m-2\delta,\infty)=(m-2\delta,m+2\delta)$ until moment $n_{\varepsilon}$. This concludes the proof of the theorem.
\end{proof}

\section{Suitable Time Scaling}\label{sec_ap_3}
\begin{proof}[Proof of Theorem~\ref{t.1}]
Let us prove \ref{1_t.1}. For every $\delta > 0$ for sufficiently small $\varepsilon$ we have
    $$
    \begin{aligned}
        & \p{\left| x_{\lfloor n_\varepsilon \rfloor}^{\varepsilon} - m \right| > \delta} = \p{\left| x_{\lfloor n_\varepsilon \rfloor}^{\varepsilon} - m \right| > \delta, A_{\frac{\delta}{10}}\left(\varepsilon\right)} + \p{\left| x_{\lfloor n_\varepsilon \rfloor}^{\varepsilon} - m \right| > \delta, \overline{A}_{\frac{\delta}{10}}\left(\varepsilon\right)} \overset{\text{L.\ref{l.3}}}{=} \\
        & \p{\left| x_{\lfloor n_\varepsilon \rfloor}^{\varepsilon} - m \right| > \delta, \overline{A}_{\frac{\delta}{10}}\left(\varepsilon\right)} \leqslant \p{\varepsilon\max\limits_{1\leqslant k \leqslant \lfloor n_\varepsilon \rfloor}\left|\sum\limits_{j=1}^k\xi_j\right| \geqslant \frac{\delta}{10}} \overset{\text{L.\ref{l.03}}}{\longrightarrow} 0 \text{ for } \varepsilon \downarrow 0.
    \end{aligned}
    $$
Let us prove \ref{2_t.1}.
Monotonicity of $\overline{n}_\varepsilon$ follows from condition~\ref{22.10.1}. Relation $\lim\limits_{\varepsilon\downarrow 0}H\left(1/\varepsilon\right) \overline{n}_\varepsilon = 0$ holds because of Lemma \ref{l.01}. Relation $\lim\limits_{\varepsilon\downarrow 0}\varepsilon \overline{n}_\varepsilon=\infty$ holds because of properties of slowly varying functions. Next, we will use proof by contradiction.

Let us assume that there exist
$\tilde{\delta}>0$, $\gamma>0$ such that for event
$$
B_{\tilde{\delta}}:=\left\{\limsup\limits_{\varepsilon\downarrow 0}|x_{\lfloor n_\varepsilon \rfloor}^{\varepsilon}-m|\geq\tilde{\delta}\right\}
$$
the following inequality holds:
$$
\p{B_{\tilde{\delta}}}\geq\gamma.
$$
Consider event 
$$
G_{\frac{\tilde{\delta}}{10}}\left(\epsilon\right) = \bigcap_{0<\varepsilon < \epsilon} A_{\frac{\tilde{\delta}}{10}}\left(\varepsilon\right).
$$
It follows from Corollary \ref{c.1} that for every $\tilde{\delta} > 0$ we have
$$
\lim_{\varepsilon \downarrow 0} \mathbbm{1}\left(A_{\frac{\tilde{\delta}}{10}}\left(\varepsilon\right)\right) = 1 \text{ a.s.}.
$$
Therefore,
\beq\label{20.11.2}
\lim_{\epsilon \downarrow 0} \mathbbm{1}\left(G_{\frac{\tilde{\delta}}{10}}\left(\epsilon\right)\right) = 1 \text{ a.s.}.
\eeq
It follows from formula (\ref{20.11.2}) that for every $\gamma > 0$ there exists  $\epsilon_\gamma>0$ such that for every $0<\epsilon<\epsilon_\gamma$ we have
$$
\p{G_{\frac{\tilde{\delta}}{10}}\left(\epsilon\right)}\geqslant 1-\frac{\gamma}{2}.
$$
Thus, for $0<\epsilon<\epsilon_\gamma$
we have
$$
\p{G_{\frac{\tilde{\delta}}{10}}\left(\epsilon\right)\cap B_{\tilde{\delta}}}\geq\frac{\gamma}{2}.
$$
From Lemma \ref{l.3} on event $G_{\frac{\tilde{\delta}}{10}}\left(\epsilon\right)\cap B_{\tilde{\delta}}$
for every $0<\varepsilon<\epsilon\wedge\varepsilon_0$ the following holds:
$$
|x_{\lfloor n_\varepsilon \rfloor}^{\varepsilon}-m|<\tilde{\delta},
$$
that contradicts the definition of event $B_{\tilde{\delta}}$. This concludes the proof. 
\end{proof}

\begin{proof}[Proof of Theorem~\ref{t.2}]
The proof is completely analogous to the proof of Theorem \ref{t.1}. So, we omit it from this paper.
\end{proof}

\subsection{Numerical Illustration of Theorems~\ref{t.1} and~\ref{t.2}}\label{sec_additional_sim_for_part_A}

To illustrate Theorems \ref{t.1} and \ref{t.2}, we ran SGD on a one-dimensional cubic-spline target with two wells at $x=-1$ and $x=1$ separated by a peak at $x=0$. We fixed the learning rate at $\varepsilon=10^{-4}$ and compared two noise regimes: (i) centered $\alpha$-stable noise with $\alpha=1.5$ (Hypothesis \ref{H1}), and (ii) standard Gaussian noise (Hypothesis \ref{H2}). On time scales comparable to $n_{\varepsilon}$, the process tends to concentrate near one of the wells. Over sufficiently long time scales, both noise models will eventually induce jumps between the two minima. However, inter-well oscillations are significantly more frequent and pronounced under heavy-tailed $\alpha$-stable perturbations compared to Gaussian noise. Indeed, in our simulations, Gaussian noise did not yield any transitions within the considered time frame; observing such jumps would require simulations extended by orders of magnitude. This difference between the two noise types is illustrated in Figure~\ref{fig:oscilation}.
\begin{figure}[ht]
  \centering
  \begin{subfigure}[t]{0.45\textwidth}
    \includegraphics[width=\linewidth]{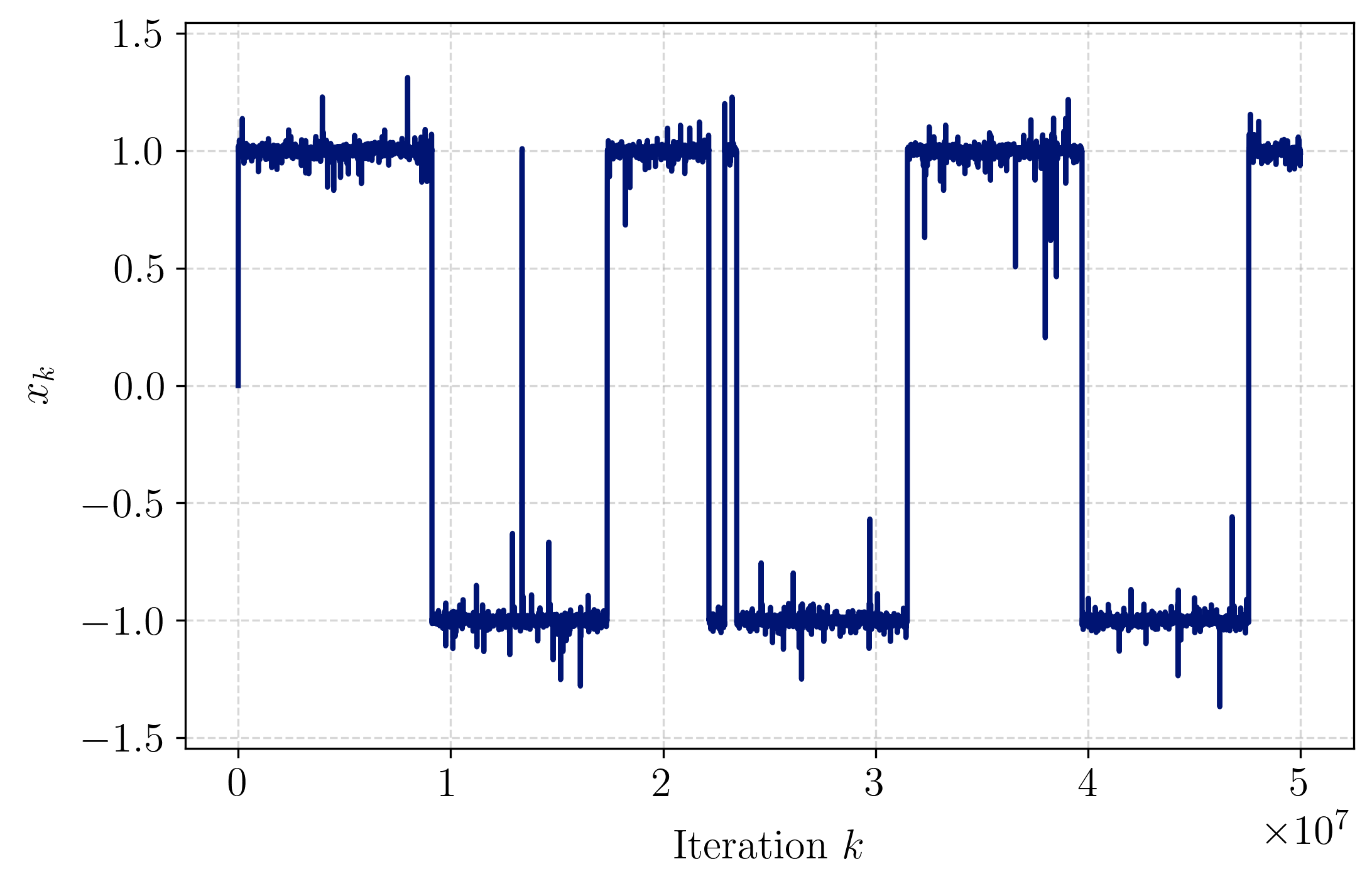}
    \caption{Trajectory of SGD with $\alpha$-stable noise ($\alpha=1.5$, $\beta = 0$)}
    \label{fig:image1}
  \end{subfigure}\hfill
  \begin{subfigure}[t]{0.45\textwidth}
    \includegraphics[width=\linewidth]{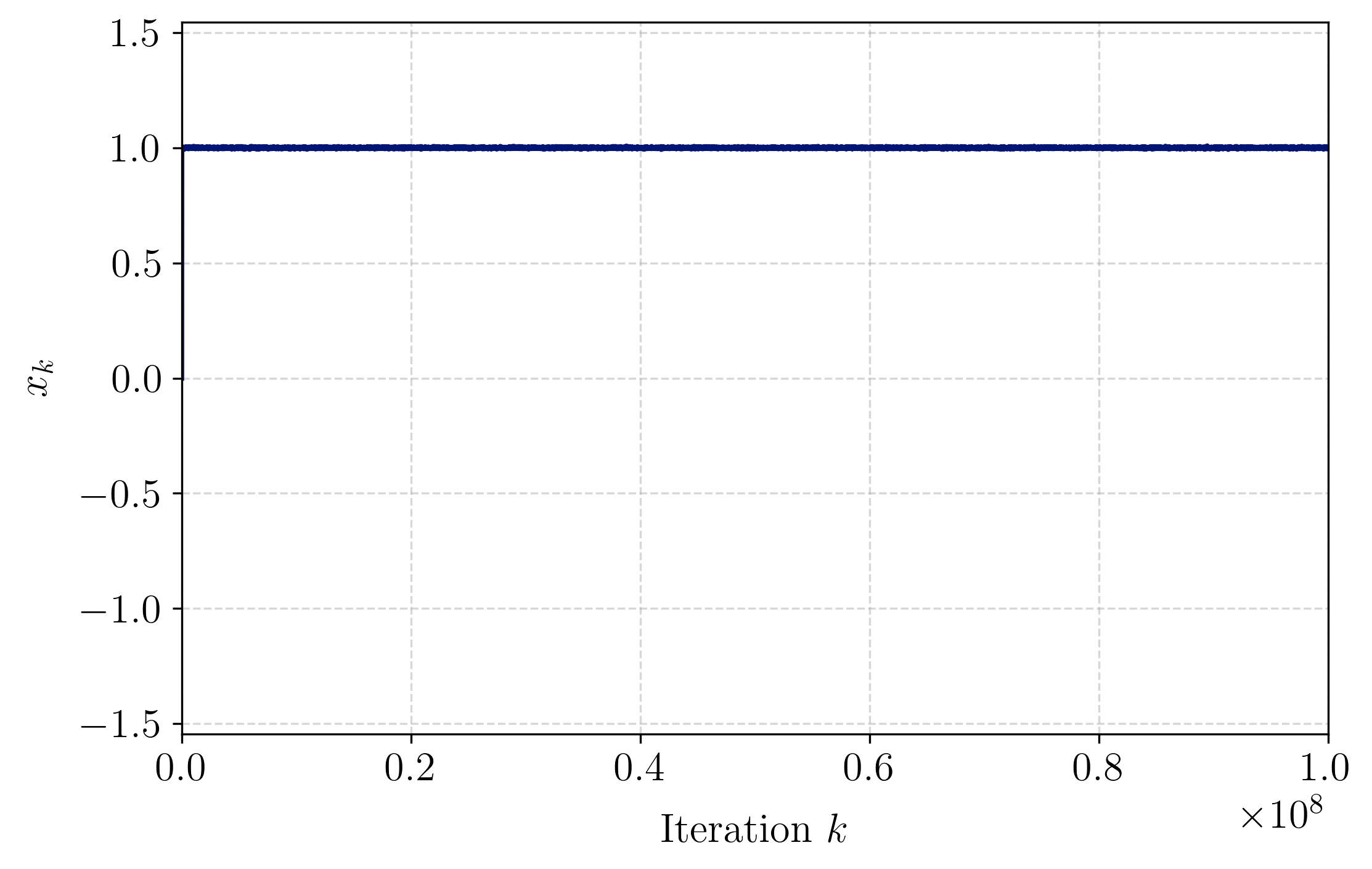}
    \caption{Trajectory of SGD with Gaussian noise}
    \label{fig:image2}
  \end{subfigure}
  \caption{SGD trajectories on a double-well cubic-spline potential under different noise types.}
  \label{fig:oscilation}
\end{figure}
Table \ref{tab:neps} summarizes the estimated convergence thresholds $n_{\varepsilon}$ for different values of $\varepsilon$, as well as the fraction of SGD trajectories that end within the band $(0.75, 1.25)$ after $n_{\varepsilon}$ steps. For each setting, the fraction was estimated based on $1000$ independent runs, with initial points sampled uniformly from the interval $(0.0, 1.9)$. The results show that, in both noise regimes, SGD eventually concentrates around the local minimum, but the required theoretical number of steps differs drastically between the $\alpha$-stable and Gaussian cases.
\begin{table}[ht]
\centering
\small
\begin{tabular}{%
  l  
  c  
  c  
  c  
}
\toprule
& $\varepsilon$ 
& $n_{\varepsilon}$ 
& Fraction of $x_{n_\varepsilon}$ within $(0.75,1.25)$ \\
\midrule
\multicolumn{4}{l}{\textbf{$\alpha$-stable}}\\[-0.3em]
  & $0.1$     & $76$            & $0.36$ \\
  & $0.01$    & $1\,892$        & $0.701$ \\
  & $0.001$   & $47\,514$       & $0.775$ \\
  & $0.0001$  & $1\,193\,478$   & $0.789$ \\
  & $1\times10^{-5}$   & $29\,978\,812$  & $0.835$ \\
\addlinespace
\multicolumn{4}{l}{\textbf{Normal}}\\[-0.3em]
  & $0.1$     & $79$            & $0.939$ \\
  & $0.01$    & $6\,309$        & $1.00$  \\
  & $0.001$   & $501\,187$      & $1.00$  \\
  & $0.0001$  & $39\,810\,717$  & –       \\
  & $1\times10^{-5}$   & $3\,162\,277\,660$ & –  \\
\bottomrule
\end{tabular}
\caption{Estimated sample sizes $n_{\varepsilon}$ and the fraction of SGD iterates falling within $(0.75, 1.25)$ after $n_{\varepsilon}$ steps. Each estimate is based on 1000 simulated trajectories from uniformly random initial points in $(0.0, 1.9)$.}
\label{tab:neps}
\end{table}
To explore behavior beyond the convergence threshold, we extended our Levy-stable simulations to ten times the steps $n_{\varepsilon}$ predicted by Theorem \ref{t.1}. On these much longer time scales, the process under heavy-tailed noise no longer remains confined near the closest well, and indeed the fraction of iterates within the band $(0.75,1.25)$ stops improving and oscillates around $0.5$.
\begin{table}[ht]
  \centering
  \small
  \begin{tabular}{%
    l  
    c  
    c  
    c  
  }
  \toprule
  & $\varepsilon$ 
  & $10 n_{\varepsilon}$
  & Fraction of $x_{n_\varepsilon}$ within $(0.75,1.25)$ \\
  \midrule
  \textbf{$\alpha$–stable}\\[-0.3em]
    & $0.1$     & $760$           & $0.029$ \\
    & $0.01$    & $18,920$        & $0.439$ \\
    & $0.001$   & $475,140$       & $0.508$ \\
    & $0.0001$  & $11,934,780$    & $0.487$ \\
  \bottomrule
  \end{tabular}
  \caption{Estimated sample sizes $10 n_{\varepsilon}$ and the fraction of SGD iterates falling within $(0.75, 1.25)$ after $10 n_{\varepsilon}$ steps. Each estimate is based on 1000 simulated trajectories from uniformly random initial points in $(0.0, 1.9)$.}
  \label{tab:levy_extended}
\end{table}

\section{Sticking to a Critical Point}\label{sec_ap_4}

As mentioned in the main body of this article, in this section we first present precise formulations of the results followed by their proofs. For the reader's convenience, we reiterate the setup and thereby render this section self-contained.

\subsection{Sticking to a Critical Point: precise formulation} 

We establish conditions under which the SGD sequence remains in the vicinity of a critical point $c \in \mathbb{R}$ for an extended period. The point $c$ can be either an extremum or a inflection point. We  require assumptions about the function to hold within a neighborhood of $c \in \mathbb{R}$. Specifically, suppose there is some $K \geqslant 1$ such that $c \in \mathbb{R}$ is a $K$-critical point of $f$. In other words, for all $k = 1, \ldots, K$ we have $f^{(k)}(c) = 0$ and also $f^{(K+1)}(c) \neq 0$. Furthermore, suppose there is a $\Delta > 0$ so that
\[
\sup_{c - \Delta \leqslant x \leqslant c + \Delta} |f^{(K+1)}(x)| < \infty.
\]
To simplify our analysis, we impose condition, which ensures that the asymptotic inverse function of $L(x)$ can be expressed in terms of $L(x)$ itself:
\begin{equation}\label{crit_point_1}
    L(x) \sim L(xL(x)), \quad x \ra \infty. 
\end{equation}
If condition \eqref{crit_point_1} is not satisfied, then in Theorem~\ref{th1_crit_point} (and Theorem~\ref{th2_crit_point}) one can replace $\delta(\varepsilon)$ by $\varepsilon^{a}$, where $0 <a <  \frac{\alpha-1}{K-1+\alpha} \left(\frac{1}{K+1}\right)$ and the statements will still hold.

Under these conditions, the SGD sequence remains within a shrinking neighborhood of $c$. More precisely, the following two theorems apply.
\begin{theorem}\label{th1_crit_point}
    Let \ref{H1} hold and let $L(u) > 0$ be a s.v.f., satisfying \eqref{crit_point_1} and $\e{|\xi_1|^\alpha L^\alpha(|\xi_1|)}<\infty$. For any $o_{\varepsilon}\ra 0$ as $\varepsilon\downarrow 0$, define
    \begin{equation*}
        \delta(\varepsilon) = 
        \varepsilon^{\frac{\alpha-1}{K-1+\alpha}} L^{\frac{-1}{K-1+\alpha}}\left(o_{\varepsilon}\,\varepsilon^{\frac{-K}{K-1+\alpha}} \right).
    \end{equation*}
    Then, for all $t >0$ and as $\varepsilon\downarrow 0$,
    \begin{equation*}
        \sup\limits_{n \leqslant t h(\varepsilon)}|x^{\varepsilon}_{n} - c| \leqslant \delta(\varepsilon) \text{ almost surely (a.s.) }
    \end{equation*}
    holds uniformly over all $x_0 \,:\, |x_0 - c|< \frac{1}{3} \delta(\varepsilon)$. Here, 
    \begin{equation*}
         h(\varepsilon):= o^{\alpha}_{\varepsilon} \varepsilon^{ - 1} \delta^{- (K-1)}(\varepsilon) \text{ and we have } h(\varepsilon) = o(\varepsilon^{-\gamma}) \text{ for }  \gamma >\frac{\alpha K}{K-1+\alpha}.
    \end{equation*}
\end{theorem}

\begin{theorem}\label{th2_crit_point}
    Let \ref{H2} hold and let $L(u) > 0$ be a s.v.f., satisfying \eqref{crit_point_1} holds and $L(u) = o\left( (\ln \ln u)^{-\frac{1}{2}}\right)$.  Define
    \begin{equation*}
        \delta(\varepsilon) = 
        \varepsilon^{\frac{1}{K+1}} L^{\frac{-2}{K +1}}\left(\varepsilon^{\frac{-K}{K+1}} \right).
    \end{equation*}
    Then, for all $t >0$ and as $\varepsilon\downarrow 0$ hold
    \begin{equation*}
        \sup\limits_{n \leqslant t h(\varepsilon)}|x^{\varepsilon}_{n} - c| \leqslant \delta(\varepsilon) \text{ a.s. }
    \end{equation*}
    holds uniformly over all $x_0 \,:\, |x_0 - c|< \frac{1}{3} \delta(\varepsilon)$. For any $o(1) \ra 0$,
    \begin{equation*}
         h(\varepsilon):= o\left(\varepsilon^{ - 1} \delta^{- (K-1)}(\varepsilon)\right)  \text{ and we have } h(\varepsilon) = o\left(   \varepsilon^{-\gamma}\right) \text{ for }   \gamma > \frac{2K}{K+1}.
    \end{equation*}
\end{theorem}

\subsection{Sticking to a Critical Point: proofs }

Under the condition \eqref{crit_point_1} for all $s \in \mathbb{R}$ is satisfied
\begin{equation}\label{crit_point_2}
    L(x) \sim L(xL^s(x)), \quad x \ra \infty. 
\end{equation}

\begin{proof}[Proof of Theorem~\ref{th1_crit_point}]
We use Taylor expansion for $f'(x_n)$ in a neighborhood of point $c:$
\begin{equation*}
    f'(x_n) = \frac{f^{(K+1)}(\widehat{x}_n)}{K!} (x_n-c)^K.
\end{equation*}
Here $\widehat{x}_n  = c + \theta (x_n -c)$, $\theta \in (0,1).$ Then for $n \leqslant t h(\varepsilon)$  SGD process satisfies relation 
\begin{equation}\label{eq_crit_esc_1}
\begin{split}
    |x_{n+1} - c| & = \left| (x_0 - c) + \varepsilon \sum_{m=0}^n \xi_m  - \varepsilon \sum_{m=0}^n \frac{f^{(K+1)}(\widehat{x}_m)}{K!} (x_m-c)^K\right| \\
    & \leqslant |x_0 - c| + \varepsilon 
    \sup_{n\leqslant t\, h(\varepsilon)}\left|\sum_{m=0}^n \xi_m \right| + C t\,\varepsilon h(\varepsilon)\,\sup\limits_{m \leqslant n} |x_m - c |^K.
\end{split}
\end{equation}
Next, it is easy to see that for $\varepsilon \downarrow 0$ functions $\delta(\varepsilon)$ and $ h(\varepsilon)$ satisfy relation
\begin{equation*}
    \varepsilon h(\varepsilon) \delta^{K-1}(\varepsilon) = o_{\varepsilon}^{\alpha} \ra 0, 
\end{equation*}
i.e., for sufficiently small $\varepsilon$ we have 
\begin{equation}\label{eq_crit_esc_2}
   Ct\,\varepsilon h(\varepsilon) \delta^{K}(\varepsilon)  \leqslant \frac{1}{3} \delta(\varepsilon).
\end{equation}
In the remainder of the proof $o_{\varepsilon}(1)$ can denote perhaps different $o(1) \ra 0.$
From Lemma~\ref{anh2021_t31} for some $o_{\varepsilon}(1) \ra 0$ we have
\begin{equation*}
    \varepsilon  \sup_{n\leqslant t\, h(\varepsilon)}\left|\sum_{m=0}^n \xi_m \right|   = \varepsilon \, o_{\varepsilon}(1) b_{th(\varepsilon)} = \varepsilon \, o_{\varepsilon}(1) (t h(\varepsilon))^{\frac{1}{\alpha}} \widetilde{L} ^{\frac{1}{\alpha}} \left( (t h(\varepsilon))^{\frac{1}{\alpha}}\right).
\end{equation*}
From definitions of $h(\varepsilon), \delta(\varepsilon)$ we have
\begin{equation*}
    \varepsilon h^{\frac{1}{\alpha}}(\varepsilon) = o_{\varepsilon}\left( 1\right) \delta(\varepsilon) L^{\frac{1}{\alpha}}\left(o_{\varepsilon}\,\varepsilon^{\frac{-K}{K-1+\alpha}}\right).
\end{equation*}
Therefore, 
\begin{equation}\label{eq_crit_esc_3}
\begin{split}
    \varepsilon  \sup_{n\leqslant t\, h(\varepsilon)}\left|\sum_{m=0}^n \xi_m \right| &  =  o_{\varepsilon}\left( 1\right) \delta(\varepsilon)  L^{\frac{1}{\alpha}}\left(o_{\varepsilon}\,\varepsilon^{\frac{-K}{K-1+\alpha}}\right) \widetilde{L} ^{\frac{1}{\alpha}} \left( o_{\varepsilon} t^{\frac{1}{\alpha}} \varepsilon^{\frac{-K}{K-1+\alpha}} L^{\frac{K-1}{\alpha(K-1+\alpha)}}  ( o_{\varepsilon}\,\varepsilon^{\frac{-K}{K-1+\alpha}})\right) \\ 
    & \leqslant \frac{1}{3} \delta(\varepsilon).
\end{split}
\end{equation}
Here the last inequality follows from \eqref{crit_point_2}.
Thus, for any $t>0$ and sufficiently small $\varepsilon$ we use inequalities $|x_0 - c|< \frac{1}{3} \delta(\varepsilon)$ and \eqref{eq_crit_esc_1}--\eqref{eq_crit_esc_3} to get recursion for $n \leqslant t h(\varepsilon):$
\begin{equation*}
    \text{ if }\sup\limits_{m \leqslant n} |x_m - c | < \delta(\varepsilon), \text{ then } \sup\limits_{m \leqslant n+1} |x_m - c | < \delta(\varepsilon).
\end{equation*}
The proof of the theorem is complete.
\end{proof}

\begin{proof}[Proof of Theorem~\ref{th2_crit_point}]
Similarly to the proof of Theorem~\ref{th1_crit_point} we get recursion with regards to $\sup\limits_{m \leqslant n} |x_m - c | < \delta(\varepsilon)$
for $n \leqslant t h(\varepsilon).$ The only difference lies in the usage of Lemma~\ref{l.02} instead of Lemma~\ref{anh2021_t31} to prove relations
\begin{equation}\label{eq_crit_esc_4}
    \frac{\varepsilon}{\delta(\varepsilon)}  \sup_{n\leqslant t\, h(\varepsilon)}\left|\sum_{m=0}^n \xi_m \right|  \ra 0  \text{ a.s..}
\end{equation}
Given Lemma~\ref{l.02}, it is sufficient for proving \eqref{eq_crit_esc_4} to show that 
\begin{equation}\label{crit_point_3}
t h(\varepsilon) < \check{n}_{\frac{\varepsilon}{\delta(\varepsilon)}}=\left(\frac{\delta(\varepsilon)}{\varepsilon}\right)^2L^2\left(\frac{\delta(\varepsilon)}{\varepsilon}\right).
\end{equation}
Inequality \eqref{crit_point_3} is equivalent to
\begin{equation*}
    t\,o\!\left(\varepsilon\right) < \delta^{K+1}(\varepsilon) L^{2}\left(\frac{\delta(\varepsilon)}{\varepsilon}\right) = \varepsilon L^{-2}\left(\varepsilon^{-\frac{K}{K+1}}\right) L^{2}\left(\varepsilon^{-\frac{K}{K+1}} L^{\frac{-2}{K+1}}\left(\varepsilon^{-\frac{K}{K+1}}\right) \right) \sim \varepsilon.
\end{equation*}
Here the last equivalence follows from \eqref{crit_point_2}. The proof of the theorem is complete.
\end{proof}

\section{Leaving Neighborhood of a Sharp Maximum}\label{sec_ap_5}

\begin{lemma}\label{bound_prob_lemma}
The random variable $N^{\varepsilon}$ is a proper random variable for all $\varepsilon > 0$ and
$$
N^{\eps} \pra \infty \text{ as } \varepsilon \downarrow 0.
$$
\end{lemma}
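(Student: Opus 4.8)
The plan is to prove the two assertions separately: properness of $N^{\varepsilon}$ follows from the transience of the underlying ``runaway'' dynamics, while the relation $N^{\varepsilon} \pra \infty$ follows from a crude one-step displacement bound. For properness I would first pass to the \emph{free} process $\bar{x}_n$ obtained by applying the V-shape drift rule \eqref{eq_sharp_max_3} for \emph{all} $n$, ignoring the boundary $\pm\delta$ (this is just $\varepsilon$ times the RRW of \eqref{eq_RRW}, shifted to start at $x_0^{\varepsilon}$). Since on $[-\delta,\delta]$ the derivative $f'$ depends only on the sign of the current iterate, $\bar{x}_n$ and $x_n^{\varepsilon}$ coincide for every $n \leqslant N^{\varepsilon}$, so that
\[
N^{\varepsilon} = \inf\{n \geqslant 1 : |\bar{x}_n| > \delta\}.
\]
Hence it suffices to show $\bar{x}_n \to +\infty$ or $\bar{x}_n \to -\infty$ almost surely, as this forces $\sup_n |\bar{x}_n| = \infty$ and therefore $N^{\varepsilon} < \infty$ a.s.

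To prove this runaway behaviour, I would use that whenever $\bar{x}_{n-1} \geqslant 0$ the increment equals $\varepsilon(\xi_n + c_r)$, so while the process stays nonnegative it is a random walk with positive drift $\varepsilon c_r$; symmetrically, while it stays negative it has negative drift $-\varepsilon c_l$. Since a random walk with nonzero drift and finite first moment remains on its drift side forever with positive probability, the return probabilities satisfy $p_{\shortdownarrow}, p_{\shortuparrow} \in (0,1)$ (multiplying the walk by $\varepsilon$ does not change the event of crossing zero). Applying the strong Markov property successively at the crossing times $\tau^{\shortdownarrow}, \tau^{\shortuparrow}$ — an overshoot at a crossing can only lower the chance of returning — yields, exactly as in Remark~\ref{rk_sharp_max}\ref{rk_sm_i}, the bound $\p{\tau_{2k}^{\shortuparrow} < \infty} \leqslant (p_{\shortdownarrow} p_{\shortuparrow})^k$. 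As $p_{\shortdownarrow} p_{\shortuparrow} < 1$ these probabilities are summable, so Borel–Cantelli shows that almost surely only finitely many crossings of zero occur. After the last crossing the process stays on one side, where it is a drifting walk, and the strong law of large numbers sends it to $+\infty$ (resp. $-\infty$). This gives $\bar{x}_n \to \pm\infty$ a.s., hence $\p{N^{\varepsilon} < \infty} = 1$ for every $\varepsilon > 0$.

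For the convergence in probability I would fix $M > 0$ and use that $|f'| \leqslant c_{\max} := \max(c_l, c_r)$ on $[-\delta,\delta]$, so every iterate before exit satisfies $|x_n^{\varepsilon} - x_0^{\varepsilon}| \leqslant \varepsilon \sum_{i=1}^{M}\left(|\xi_i| + c_{\max}\right)$. Since an exit before time $M$ requires some such $x_n^{\varepsilon}$ to exceed $\delta$ in modulus, this gives the inclusion
\[
\{N^{\varepsilon} \leqslant M\} \subseteq \Big\{\, \varepsilon \sum_{i=1}^{M}\left(|\xi_i| + c_{\max}\right) > \delta - |x_0^{\varepsilon}| \,\Big\}.
\]
The sum $\sum_{i=1}^{M}\left(|\xi_i| + c_{\max}\right)$ is a fixed integrable random variable, so multiplying it by $\varepsilon$ sends it to $0$ a.s. (hence in probability), while $|x_0^{\varepsilon}| \to 0$ keeps the threshold above $\delta/2$ for small $\varepsilon$; thus $\p{N^{\varepsilon} \leqslant M} \to 0$ for every $M$, which is exactly $N^{\varepsilon} \pra \infty$.

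The main obstacle is the runaway step: the crossing-count argument must be made rigorous via the strong Markov property at the (possibly infinite) crossing times, relying on the strict inequalities $p_{\shortdownarrow}, p_{\shortuparrow} < 1$ and on the monotone effect of overshoot. Everything else — the coupling with the free process, the SLLN on each side, and the first-moment displacement estimate — is routine.
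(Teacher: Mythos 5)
Your proposal is correct, and its two halves relate to the paper's proof differently. The convergence-in-probability part is essentially identical to the paper's: both use a crude fixed-horizon bound (the paper keeps partial sums of the $\xi_i$ and absorbs the drift, you bound by $\varepsilon\sum_{i=1}^{M}(|\xi_i|+c_{\max})$; either way $\p{N^{\varepsilon}\leqslant M}\to 0$ for each fixed $M$). The properness part, however, takes a genuinely different route. The paper never leaves the constrained process: it decomposes $\{N^{\varepsilon}=\infty\}$ over the excursions between successive zero-crossings, bounds the probability that a single excursion stays inside the strip by
\[
p_1(\varepsilon):=\p{\sup_{1\leqslant j<\tau_1^{\shortdownarrow}}\sum_{n=1}^{j}(\xi_n+c_r)<\tfrac{\delta}{\varepsilon}}<1
\]
(and symmetrically $p_2(\varepsilon)<1$), and concludes via independence of the excursion blocks that $\p{N^{\varepsilon}=\infty}\leqslant\prod_{k}p_1(\varepsilon)p_2(\varepsilon)=0$; note that an \emph{infinite} excursion is automatically absorbed into $p_1(\varepsilon)<1$, since a positively drifted walk that never crosses zero has infinite supremum by the SLLN. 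You instead prove a stronger statement about the unconstrained (``free'') dynamics: finitely many sign changes a.s.\ (Borel--Cantelli applied to the geometric bounds of Remark~\ref{rk_sharp_max}\ref{rk_sm_i}), then SLLN divergence after the last crossing, hence $\bar{x}_n\to\pm\infty$ a.s., and properness follows from the coincidence of the free process with SGD up to the exit time. Both arguments rest on the same two random-walk facts (crossing probability $<1$, SLLN divergence), but yours additionally delivers the a.s.\ dichotomy $X_n\to+\infty$ or $X_n\to-\infty$, which is precisely the object appearing in Theorem~\ref{th_sharp_max}, whereas the paper's per-excursion product is shorter and needs no Borel--Cantelli or last-crossing case analysis. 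One cosmetic slip: you claim $p_{\shortdownarrow},p_{\shortuparrow}\in(0,1)$, but positivity can fail (e.g.\ if $\xi_1+c_r\geqslant 0$ a.s.); only the upper bounds $p_{\shortdownarrow}<1$, $p_{\shortuparrow}<1$ are actually used, so nothing in your argument breaks.
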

\begin{proof}
Let's first prove that $\p{N^{\varepsilon} = \infty} = 0$. We have
\begin{equation*}
\begin{split}
    \p{N^{\varepsilon} = \infty} & = \p{\left|x_0^{\varepsilon}\right| \leqslant \delta, \left|x_1^{\varepsilon}\right| \leqslant \delta, \left|x_2^{\varepsilon}\right| \leqslant \delta, \ldots} \\
    &  = \p{\bigcap_{k=0}^{\infty} \skk{ \sup\limits_{{\tau_{2 k}^{\shortuparrow}(\varepsilon)}\leqslant j <   \tau_{2 k+1}^{\shortdownarrow}(\varepsilon)} x_j^{\varepsilon} < \delta }  \cap \skk{\inf\limits_{{\tau_{2 k+1}^{\shortdownarrow}(\varepsilon)}\leqslant j <   \tau_{2 k+2}^{\shortuparrow}(\varepsilon)} x_j^{\varepsilon} > -\delta} }
\end{split}
\end{equation*}
Since the jump through zero has a certain sign $x_{\tau_{2 k}^{\shortuparrow}(\varepsilon)}^{\varepsilon} \geqslant 0,  x_{\tau_{2 k+1}^{\shortdownarrow}(\varepsilon)}^{\varepsilon} < 0, $  then 
\begin{equation*}
\begin{split}
\skk{ \sup\limits_{{\tau_{2 k}^{\shortuparrow}(\varepsilon)}\leqslant j <   \tau_{2 k+1}^{\shortdownarrow}(\varepsilon)} x_j^{\varepsilon}  < \delta } & \subseteq  \skk{ \sup\limits_{{\tau_{2 k}^{\shortuparrow}(\varepsilon) + 1}\leqslant j <   \tau_{2 k+1}^{\shortdownarrow}(\varepsilon)} \varepsilon\left( \sum_{n= \tau_{2 k}^{\shortuparrow}(\varepsilon) + 1}^j (\xi_n + c_r)\right) < \delta} := A_{2k},  \\ 
\skk{ \inf\limits_{{\tau_{2 k+1}^{\shortdownarrow}(\varepsilon)} \leqslant j <   \tau_{2 k+2}^{\shortuparrow}(\varepsilon)} x_j^{\varepsilon}  > -\delta }& \subseteq \skk{ \inf\limits_{{\tau_{2 k+1}^{\shortdownarrow}(\varepsilon)} + 1\leqslant j <   \tau_{2 k+2}^{\shortuparrow}(\varepsilon)}  \varepsilon\left( \sum_{n= \tau_{2 k+1}^{\shortdownarrow}(\varepsilon) + 1}^j (\xi_n - c_l)\right) > -\delta} := A_{2k+1}, 
\end{split}
\end{equation*}
and for all $\varepsilon > 0,$ by virtue of positivity of $c_r, c_l >0,$ we have
\begin{equation*}
    \p{A_{2k}} \leqslant \p{ \sup\limits_{1\leqslant j <   \tau_{1}^{\shortdownarrow}} \left( \sum_{n= 1}^j (\xi_n + c_r)\right) < \frac{\delta}{\varepsilon} } := p_{1}(\varepsilon) < 1, 
\end{equation*}
\begin{equation*}
    \p{A_{2k+1}} \leqslant  \p{ \inf\limits_{1\leqslant j <   \tau_{1}^{\shortuparrow}} \left( \sum_{n= 1}^j (\xi_n - c_l)\right) > -\frac{\delta}{\varepsilon}} := p_{2}(\varepsilon) < 1, 
\end{equation*}
where $\tau_{1}^{\shortdownarrow}:=  \inf \left\{j> 0 : \sum_{n= 1}^j (\xi_n + c_r) <0\right\}, \tau_{1}^{\shortuparrow}:= \inf \left\{j> 0 : \sum_{n= 1}^j (\xi_n - c_l) \geqslant 0\right\}.$
Hence
\begin{equation*}
     \p{N^{\varepsilon} = \infty} \leqslant \p{\bigcap_{k=0}^{\infty} A_k }  = \prod_{k=0}^{\infty}\p{A_k } \leqslant 
 \prod_{k=0}^{\infty} \, p_{1}(\varepsilon) p_{2}(\varepsilon) = 0. 
\end{equation*}

Now we prove that $N^{\eps} \pra \infty, \text{ as } \varepsilon \downarrow 0$. For all fixed $k \geqslant 1$ for sufficiently large enough $\varepsilon>0$ from the equation \eqref{eq_sharp_max_2}   we have
\begin{equation*}
\begin{split}
\p{N^{\varepsilon} > k} =  &  \p{\left|x_0^{\varepsilon}\right| \leqslant \delta, \left|x_1^{\varepsilon}\right| \leqslant \delta, \ldots, \left|x_k^{\varepsilon}\right| \leqslant \delta} \\ 
& \geqslant  \p{\left|x_0^{\varepsilon}\right| \leqslant \frac{\delta}{2}, \left|\xi_1 - f'(x_0^{\varepsilon})\right| \leqslant \frac{\delta}{2\varepsilon}, \ldots, \left| \sum_{i=0}^{k-1} \xi_{i+1} - f'(x_i^{\varepsilon})\right| \leqslant \frac{\delta}{2\varepsilon} }\\
& \geqslant  \p{\left|x_0^{\varepsilon}\right| \leqslant \frac{\delta}{2}, \left|\xi_1\right| \leqslant \frac{\delta}{4\varepsilon}, \ldots, \left| \sum_{i=0}^{k-1} \xi_{i+1} \right| \leqslant \frac{\delta}{4\varepsilon} } \ra 1, \text{ as } \varepsilon \downarrow 0.
\end{split} 
\end{equation*}
\end{proof}

Let's prove Remark~\ref{rk_sharp_max} first, and then Theorem~\ref{th_sharp_max}.
\begin{proof}[Proof of Remark~\ref{rk_sharp_max}]
\ref{rk_sm_i}.
If $\tau_1^{\shortdownarrow}<\infty$,  then until the moment $\tau_1^{\shortdownarrow}$ the jumps of the RRW $\left\{X_n\right\}$ coincide in distribution with $\xi_n + c_r$ and
$$ 
\p{\tau_1^{\shortdownarrow}<\infty}= \p{\inf\limits_{n \geqslant 1} \skk{\sum\limits_{k=1}^n \left(\xi_k +c_r\right)} < 0} = p_{\shortdownarrow}.
$$
For trajectories satisfying the condition $\tau_1^{\shortdownarrow}<\infty$, we will fix the value $X_{\tau_1^{\shortdownarrow}}=-x<0$ and further consider the subsequent part of the trajectory starting from the point $-x$. At the event $\left\{\tau_2^{\shortdownarrow}<\infty\right\}$, the jumps of this trajectory will have the same distribution from $\xi_1 - c_l$ until the moment $\tau_2^{\shortdownarrow},$ and due to the positivity of $ -X_{\tau_1^{\shortdownarrow}} = x > 0$ we have
\begin{equation}\label{eq_loc_new_1}
\begin{split} 
\p{ \tau_2^{\shortuparrow}<\infty } & =  \p{\tau_1^{\shortdownarrow}<\infty, \tau_2^{\shortuparrow}<\infty } = \p{\tau_1^{\shortdownarrow}<\infty,  \ \sup\limits_{n\geqslant 1} \skk{\sum\limits_{k=\tau_1^{\shortdownarrow}+1}^{\tau_1^{\shortdownarrow}+n} \left(\xi_k -c_l\right)} \geqslant -X_{\tau_1^{\shortdownarrow}} } \\
& \leqslant \p{\tau_1^{\shortdownarrow}<\infty} \p{\sup\limits_{n\geqslant 1} \skk{\sum\limits_{k=1}^n \left(\xi_k -c_l\right)} \geqslant 0} = p_{\shortdownarrow} p_{\shortuparrow} .
\end{split}
\end{equation}
Next, we consider the trajectory $\left\{X_n\right\}$ from $\tau_2^{\shortdownarrow}$. Up to and including the moment $\tau_3^{\shortdownarrow}$, the rambling jumps $\left\{X_n\right\}$ again coincide in distribution with $\xi_1 + c_r$ and by virtue of $ - X_{\tau_2^{\shortuparrow}} \leqslant 0$, we get
\begin{equation*}
\begin{split}
\p{\tau_3^{\shortdownarrow}<\infty} & = \p{\tau_2^{\shortuparrow}<\infty, \tau_3^{\shortdownarrow}<\infty}  = \p{\tau_2^{\shortuparrow}<\infty} \p{\inf\limits_{n \geqslant 1} \skk{\sum\limits_{k=\tau_2^{\shortuparrow}+1}^{\tau_2^{\shortuparrow}+n} \left(\xi_k +c_r\right)} < - X_{\tau_2^{\shortuparrow}} } \\ 
&\leq \p{\tau_2^{\shortuparrow}<\infty} \p{\inf\limits_{n \geqslant 1} \skk{\sum\limits_{k=1}^n \left(\xi_k +c_r\right)} < 0 } = p_{\shortdownarrow}^2 p_{\shortuparrow}.
\end{split}
\end{equation*}
Continuing this logic further, we obtain \eqref{eq_sharp_max_1}.

\ref{rk_sm_iii}. Under the current conditions, the densities for random variables $\xi_1 - c_l, \xi_1 + c_r$ have the following form
\begin{equation*}
\begin{split}
& f_{\xi_1 - c_l}(t)=f_{\xi_1}\left(t + c_l\right)=\left\{\begin{array}{lc}
r \beta\, e^{\beta \left(c_l + t\right)}, & t<-c_l \\
q \alpha \,e^{-\alpha (c_l+  t)}, & t \geq-c_l
\end{array}\right. \\
& f_{\xi_1 + c_r}(t)=f_{\xi_1}\left(t - c_r\right)=\left\{\begin{array}{lc}
r \beta \, e^{-\beta (c_r - t)}, & t < c_r \\
q \alpha \, e^{\alpha (c_r - t)}, & t \geqslant c_r
\end{array}\right. 
\end{split}
\end{equation*}
Thus, $\xi_1 - c_l$ has an exponential density on the positive semi-axis, and $\xi_1 + c_r$ has an exponential density on the negative semi-axis. It is known (\citet[Ch.12, Example 5.1]{Borovkov:20013} ) that under these conditions for the probabilities 
\begin{equation*}
 p_{\shortdownarrow}:= \p{\inf\limits_{n \geqslant 1} \skk{\sum\limits_{k=1}^n \left(\xi_k +c_r\right)} < 0}, \ p_{\shortuparrow}:=\p{\sup\limits_{n\geqslant 1} \skk{\sum\limits_{k=1}^n \left(\xi_k -c_l\right)} \geqslant 0}
\end{equation*}
the equation \eqref{eq_sharp_max_5} is true. 

It remains to show that in the double-exponential case the inequalities in~\eqref{eq_sharp_max_1} become equalities. This follows from the fact that, in the exponential case, the random variables $\tau_1^{\downarrow}$ and $X_{\tau_1^{\downarrow}}$ are independent, which immediately implies that equality holds in~\eqref{eq_loc_new_1} rather than an inequality. Remark~\ref{rk_sharp_max} is proved. 
\end{proof}  

\begin{proof}[Proof of Theorem~\ref{th_sharp_max}]
Let us prove the statement of the theorem for the exit through the right boundary. The statement for the left boundary follows from $\p{N^{\varepsilon} < \infty} = 1$ (see Lemma~\ref{bound_prob_lemma}). 

 Since the first exit through the right boundary can occur only in one of the intervals $\left[\tau_{2 k}^{\shortuparrow}(\varepsilon), \tau_{2k+1}^{\shortdownarrow}(\varepsilon)\right),$ then 
\begin{equation}\label{eq_sharp_max_8}
\begin{split}
\p{x_{N^{\varepsilon}}^{\varepsilon} > \delta} & = \sum_{k \geqslant 0} \p{\tau_{2 k}^{\shortuparrow}(\varepsilon) \leqslant N^{\varepsilon} < \tau_{2k+1}^{\shortdownarrow}(\varepsilon)} \\
& = \sum_{k \geqslant 0} \left( \p{\tau_{2 k}^{\shortuparrow}(\varepsilon) \leqslant N^{\varepsilon}}  - \p{ \tau_{2k+1}^{\shortdownarrow}(\varepsilon) \leqslant N^{\varepsilon}}\right).
\end{split}
\end{equation}
Let us show that for all $k \geq 1$ as $\varepsilon \downarrow 0$ we have
\begin{equation}\label{eq_sharp_max_7}
\p{\tau_{2 k}^{\shortuparrow}(\varepsilon) \leqslant N^{\varepsilon}} \ra \p{\tau_{2 k}^{\shortuparrow} < \infty}, \  \p{ \tau_{2k+1}^{\shortdownarrow}(\varepsilon) \leqslant N^{\varepsilon}} \ra \p{\tau_{2 k +1}^{\shortdownarrow} < \infty},
\end{equation}
and, the inequalities similar to \eqref{eq_sharp_max_1} are satisfied 
\begin{equation}\label{eq_sharp_max_6}
   \p{ \tau_{2 k-1}^{\shortdownarrow}(\varepsilon) \leqslant N_{\varepsilon}} \leqslant p_{\shortdownarrow}^k p_{\shortuparrow}^{k-1}, \quad \p{ \tau_{2 k}^{\shortuparrow}(\varepsilon) \leqslant N_{\varepsilon}} \leqslant (p_{\shortdownarrow} p_{\shortuparrow})^k.
\end{equation}
From \eqref{eq_sharp_max_8}--\eqref{eq_sharp_max_6} follows the statement of Theorem~\ref{th_sharp_max}.

To prove \eqref{eq_sharp_max_7} and \eqref{eq_sharp_max_6}, we note that the sequence (by $j$) SGD is equivalent to RRW in the sense that it changes the drift when crossing the boundary $-\frac{x_0^{\varepsilon}}{\varepsilon},$ 
if it starts from zero. Indeed, on the event $\skk{\tau_{1}^{\shortdownarrow}(\varepsilon) \leqslant N_{\varepsilon}}$ from \eqref{eq_sharp_max_2}  we have
\begin{equation*}
\tau_{1}^{\shortdownarrow}(\varepsilon)  =\inf \left\{j>0: x_0^{\varepsilon} + \varepsilon \sum_{i=0}^{n-1}\left(\xi_{i+1} + c_r \right)  <0\right\} = \inf \left\{j>0:  \sum_{i=0}^{n-1}\left(\xi_{i+1} + c_r\right) < - \frac{x_0^{\varepsilon}}{\varepsilon}\right\}
\end{equation*}
Where from $x_0^{\varepsilon} \geq 0$ we have
\begin{equation}\label{eq_sharp_max_10}
\tau_{1}^{\shortdownarrow}(\varepsilon)  \geq \inf \left\{j>0:  \sum_{i=0}^{n-1}\left(\xi_{i+1} + c_r\right) < 0\right\}  = \tau_{1}^{\shortdownarrow}.
\end{equation}
To prove \eqref{eq_sharp_max_7}, we show that for all $k \geq 0$
\begin{equation}\label{eq_sharp_max_11}
    \p{ \tau_{2k+1}^{\shortdownarrow}(\varepsilon) \leqslant N_{\varepsilon}, \tau_{2k+1}^{\shortdownarrow}(\varepsilon) \neq \tau_{2k+1}^{\shortdownarrow} }  \ra 0, \quad \p{ \tau_{2k}^{\shortuparrow}(\varepsilon) \leqslant N^{\varepsilon}, \tau_{2k}^{\shortuparrow}(\varepsilon) \neq \tau_{2k}^{\shortuparrow} }  \ra 0.
\end{equation}
For this purpose we note that by virtue of the conditions $x_{0}^{\varepsilon} \geq 0, x_{0}^{\varepsilon}  = o(\varepsilon) $ as $\varepsilon \downarrow 0$ we have
\begin{equation*}
\begin{split}
\p{ \tau_{1}^{\shortdownarrow}(\varepsilon) \leqslant N_{\varepsilon}, \tau_{1}^{\shortdownarrow}(\varepsilon) \neq \tau_{1}^{\shortdownarrow} }  & \leqslant  \p{ \tau_{1}^{\shortdownarrow}<  \tau_{1}^{\shortdownarrow}(\varepsilon) < \infty } \\  & \leqslant  \p{\tau_{1}^{\shortdownarrow}< \infty,  X_{\tau_{1}^{\shortdownarrow}} \in  \left[-\frac{x_0^{\varepsilon}}{\varepsilon}, 0\right)}  \ra 0.
\end{split}
\end{equation*}
Next, on a event $A_{\varepsilon}:=\skk{ \tau_{1}^{\shortdownarrow}(\varepsilon)  = \tau_{ 1}^{\shortdownarrow}, \tau_{2}^{\shortuparrow}(\varepsilon) \leqslant N^{\varepsilon}}$ we have $\tau_{2}^{\shortuparrow}(\varepsilon) \leqslant \tau_{2}^{\shortuparrow}.$ Hence, 
at $\varepsilon \downarrow 0$ we have
\begin{equation*}
\begin{split}
\p{ \tau_{2}^{\shortuparrow}(\varepsilon) \leqslant N^{\varepsilon}, \tau_{2}^{\shortuparrow}(\varepsilon) \neq \tau_{2}^{\shortuparrow} }  & \leqslant \p{ A_{\varepsilon}, \tau_{2}^{\shortuparrow}(\varepsilon) < \tau_{2}^{\shortuparrow} }  +  \p{ \tau_{1}^{\shortdownarrow}(\varepsilon) \neq \tau_{1}^{\shortdownarrow} } \\
& \leqslant  \p{ A_{\varepsilon}, X_{\tau_{2}^{\shortuparrow}(\varepsilon)} \in  \left[-\frac{x_0^{\varepsilon}}{\varepsilon}, 0\right)}  +  \p{ \tau_{1}^{\shortdownarrow}(\varepsilon) \neq \tau_{1}^{\shortdownarrow} }  \\ 
& \leqslant \p{S_{\tau_{1}^{\shortuparrow}} \in  \left[\eta - \frac{x_0^{\varepsilon}}{\varepsilon}, \eta \right)} + \p{ \tau_{1}^{\shortdownarrow}(\varepsilon) \neq \tau_{1}^{\shortdownarrow} } 
\ra 0,
\end{split}
\end{equation*}
we random variables $S_k := \sum_{n=1}^k (\xi_k - c_l), \tau_{1}^{\shortuparrow}: = \inf\skk{ k \geq 1 \,:\, S_k \geq 0}$ are independent from $ \eta := X_{\tau_1^{\shortdownarrow}} > \frac{x_0^{\varepsilon}}{\varepsilon}.$
Continuing this logic further, we obtain \eqref{eq_sharp_max_11}, 
 from which \eqref{eq_sharp_max_7} follows.

Now let's prove \eqref{eq_sharp_max_6}. From \eqref{eq_sharp_max_10}  and Lemma~\ref{bound_prob_lemma} we have
$$
\p{\tau_{1}^{\shortdownarrow}(\varepsilon) \leqslant N_{\varepsilon}} \leqslant  \p{ \tau_{1}^{\shortdownarrow} \leqslant N_{\varepsilon} } \leqslant \p{ \tau_{1}^{\shortdownarrow} < \infty } = p_{{\shortdownarrow}}.
$$
Similar to the reasoning of the proof of \ref{rk_sm_i} Remarks~\ref{rk_sharp_max}, we note that the value of the jump over level $0$ is negative$ x^{\varepsilon}_{\tau_{1}^{\shortdownarrow}}  < 0.$ Thus 
\begin{equation*}
\begin{split}
\p{\tau_{2}^{\shortuparrow}(\varepsilon) \leqslant N_{\varepsilon}}  & =    \p{\tau_{1}^{\shortdownarrow}(\varepsilon) \leqslant N_{\varepsilon}, \tau_{2}^{\shortuparrow}(\varepsilon) \leqslant N_{\varepsilon} } \\
& \leqslant \p{\tau_{1}^{\shortdownarrow}(\varepsilon) \leqslant N_{\varepsilon} ,  \sup\limits_{1 \leqslant n\leq N_{\varepsilon} } \skk{\sum\limits_{k=\tau_1^{\shortdownarrow}(\varepsilon)+1}^{\tau_1^{\shortdownarrow}(\varepsilon)+n} \left(\xi_k -c_l\right )} \geqslant - x^{\varepsilon}_{\tau_{1}^{\shortdownarrow}}} \\ 
& \leqslant \p{\tau_{1}^{\shortdownarrow} < \infty, \sup\limits_{1 \leqslant n\leq N_{\varepsilon} } \skk{\sum\limits_{k=\tau_1^{\shortdownarrow}(\varepsilon)+1}^{\tau_1^{\shortdownarrow}(\varepsilon)+n} \left(\xi_k -c_l\right )} \geqslant 0} \leqslant p_{{\shortdownarrow}} p_{{\shortuparrow}}.
\end{split}
\end{equation*}
Continuing this logic further, we obtain \eqref{eq_sharp_max_6}.  Theorem~\ref{th_sharp_max} is proved.

\end{proof}

\begin{proof}[Proof of Corollary~\ref{cl_sharp_max}]
\ref{cl_sm_i}. The statement \ref{cl_sm_i} follows from simple bounds on the series on the right-hand side of the equations \eqref{eq_sharp_max_4}, and \ref{rk_sm_i} from Remark~\ref{rk_sharp_max}
\begin{equation*}
\begin{split}
 \sum_{k \geqslant 0} \left( \p{  \tau_{2 k}^{\shortuparrow} < \infty} -\p{\tau_{2 k +1}^{\shortdownarrow} <\infty}\right) & \leqslant  1 - \p{\tau_{1}^{\shortdownarrow} <\infty} + \sum_{k \geqslant 1} \p{  \tau_{2 k}^{\shortuparrow} < \infty}  \\ 
& \leqslant 1 - p_{\shortdownarrow} +  \frac{p_{\shortdownarrow} p_{\shortuparrow} }{1- p_{\shortdownarrow} p_{\shortuparrow}},   \\
\sum_{k \geqslant 0} \left( \p{  \tau_{2 k+1}^{\shortdownarrow} < \infty} -\p{\tau_{2 k +2}^{\shortuparrow} <\infty}\right) & \leqslant  \sum_{k \geqslant 0} \p{  \tau_{2 k+1}^{\shortdownarrow} < \infty}  \leqslant \frac{p_{\shortdownarrow} }{1- p_{\shortdownarrow} p_{\shortuparrow}}.
\end{split}
\end{equation*}

\ref{cl_sm_ii}.
First, we prove that the output of the SGD sequence from the $\delta-$-neighborhood of the maximum satisfies the relation
\begin{equation}\label{eq_sharp_max_12}
    N_{\varepsilon} = o_{p}(n_\varepsilon)
\end{equation}
for all $n_{\varepsilon} \ra \infty \, :\,  \varepsilon n_{\varepsilon} \ra \infty$ as $\varepsilon \downarrow 0.$

Let us denote by $\tau_k(\varepsilon)$ the consecutive moments of zero crossing (regardless of top-down or bottom-up). That is $\tau_{2k}(\varepsilon):= \tau^{\shortuparrow}_{2k}(\varepsilon), \tau_{2k+1}(\varepsilon):= \tau^{\shortdownarrow}_{2k+1}(\varepsilon).$ 
Up to the moment $N^{\varepsilon}$ between the moments $\tau_k(\varepsilon)$, the derivative $f'(x_n^{\varepsilon})$ is constant and equals either $-c_r,$ or $c_l$. In the following, in order to shorten the calculations, we will denote the constant by a single symbol $c_*.$ For us it will be important only that $c_* \neq 0.$

For \eqref{eq_sharp_max_12}, we need only show that for all $t \in (0,1)$ it is $\p{N^{\varepsilon} > t n_{\varepsilon}} \ra 0.$
Consider the probability 
\begin{equation*}
    p(\varepsilon):= \p{ \left| \sum\limits_{i = 1}^{\left[t n_{\varepsilon}/2\right]} \left( \xi_i - c_* \right) \right| \leqslant \frac{2\delta}{\varepsilon} } = \p{ \frac{1}{\left[t n_{\varepsilon}/2\right]}\left| \sum\limits_{i = 1}^{\left[t n_{\varepsilon}/2\right]} \left( \xi_i - c_* \right) \right| \leqslant \frac{4\delta}{t n_{\varepsilon}\varepsilon+1} }. 
\end{equation*}
Since $\varepsilon n_{\varepsilon} \ra \infty, c_* \neq 0,$ then by LLN $p(\varepsilon) \ra 0$ as $\varepsilon \downarrow 0.$
Define 
$$
m_{\varepsilon}:=\begin{cases}
			\frac{1}{\sqrt{p(\varepsilon)}}, & \text{if $p(\varepsilon) \neq 0,$ }\\
            \frac{1}{\varepsilon}, & \text{otherwise.}
		 \end{cases}
$$
We have 
\begin{equation*}
\begin{split}
     \p{N^\varepsilon > t n_\varepsilon} & = \sum_{k=0}^{\infty} \p{N^\varepsilon > t n_\varepsilon, \ \tau_{k}(\varepsilon) \leqslant t n_{\varepsilon} <  \tau_{k+1}(\varepsilon) } \\ 
& \leqslant \sum_{k=0}^{m_{\varepsilon}-1} \p{N^\varepsilon > t n_\varepsilon, \ \tau_{k}(\varepsilon) \leqslant t n_{\varepsilon} <  \tau_{k+1}(\varepsilon) }  + \sum_{k=m_{\varepsilon}}^{\infty} \p{\tau_{k}(\varepsilon)  < N^{\varepsilon}}  =: \Sigma_1 + \Sigma_2. 
\end{split}
\end{equation*}
For $\Sigma_2$ from  \eqref{eq_sharp_max_6} as $\varepsilon \downarrow 0$ we have
\begin{equation*}
    \Sigma_2 \leqslant \sum_{k=m_{\varepsilon}}^{\infty}  (\max\skk{p_{\shortuparrow}, p_{\shortdownarrow}})^{2k} = \frac{(\max\skk{p_{\shortuparrow}, p_{\shortdownarrow}})^{2m_{\varepsilon}}}{1 - \max\skk{p_{\shortuparrow}, p_{\shortdownarrow}}} \ra 0.
\end{equation*}
For $\Sigma_1$, let's write out the sequence of inclusions. On the event $\skk{\tau_{k}(\varepsilon) \leqslant t n_{\varepsilon} <  \tau_{k+1}(\varepsilon)}$ we have 
\begin{equation*}
\begin{split}
     \skk{N^\varepsilon > t n_\varepsilon }  & = \skk{ \sup\limits_{0 \leqslant n \leqslant [t n_{\varepsilon}]} \left|x_n^{\varepsilon}\right| \leqslant \delta} 
     \subseteq \skk{ \sup\limits_{\tau_{k}(\varepsilon) \leqslant n \leqslant [t n_{\varepsilon}] } \left|x_{\tau_{k}(\varepsilon)}^{\varepsilon} +   \varepsilon \left( \sum\limits_{i = \tau_{k}(\varepsilon) + 1}^n \xi_i - c_*\right)\right| \leqslant \delta} \\ 
    & \subseteq \skk{ \sup\limits_{\tau_{k}(\varepsilon) < n \leqslant [t n_{\varepsilon}]} \left| \sum\limits_{i = \tau_{k}(\varepsilon) + 1}^n \xi_i - c_*\right| \leqslant \frac{2\delta}{\varepsilon}} \\
    & \subseteq \skk{ \sup\limits_{\tau_{k}(\varepsilon) < n \leqslant [t n_{\varepsilon}]} \left| \sum\limits_{i = \tau_{k}(\varepsilon) + 1}^n \xi_i - c_*\right| \leqslant \frac{2\delta}{\varepsilon}} \cap   \skk{ \tau_{k}(\varepsilon) \leqslant \frac{t n_{\varepsilon}}{2} }  \bigcup \skk{\tau_{k}(\varepsilon) \in \left( \frac{t n_{\varepsilon}}{2}, t n_{\varepsilon} \right] } \\
    & \subseteq \skk{  \left| \sum\limits_{i = \left[\frac{t n_{\varepsilon}}{2}\right] + 1}^{[t n_{\varepsilon}]} \xi_i - c_*\right| \leqslant \frac{2\delta}{\varepsilon}} \bigcup  \skk{\tau_{k}(\varepsilon) \in \left( \frac{t n_{\varepsilon}}{2}, t n_{\varepsilon} \right] } 
\end{split}
\end{equation*}
Thus, the inequality
\begin{equation*}
    \Sigma_1 \leqslant m_{\varepsilon} p(\varepsilon) + \sum_{k=0}^{m_{\varepsilon}-1} \p{\tau_{k}(\varepsilon) \in \left( \frac{t n_{\varepsilon}}{2}, t n_{\varepsilon} \right], N^\varepsilon > t n_\varepsilon } 
\end{equation*}
is satisfied. 
Using \eqref{eq_sharp_max_11}, we obtain for arbitrary fixed $K>0$ for all $k \leqslant K$
\begin{equation*}
    \p{\tau_{k}(\varepsilon) \in \left( \frac{t n_{\varepsilon}}{2}, t n_{\varepsilon} \right], N^\varepsilon > t n_\varepsilon } \ra 0.
\end{equation*}
By adding the inequalities \eqref{eq_sharp_max_6},  we have
\begin{equation*}
\sum_{k=0}^{m_{\varepsilon}-1} \p{\tau_{k}(\varepsilon) \in \left( \frac{t n_{\varepsilon}}{2}, t n_{\varepsilon} \right], N^\varepsilon > t n_\varepsilon }  \ra 0.
\end{equation*}
Form that $\Sigma_1 \ra 0.$ and hence $\p{N^\varepsilon > t n_\varepsilon} \ra 0.$ The relation \eqref{eq_sharp_max_12} is proved. 

Using \eqref{eq_sharp_max_12} and the strong Markov property we can show that \eqref{eq_sharp_max_13} is satisfied. Indeed, from Theorems~\ref{t.1}, \ref{t.2} we have 
\begin{equation*}
\begin{split}
       & \lim_{\varepsilon \downarrow 0} \p{ |x_{\lfloor n_\varepsilon \rfloor 
       }^{\varepsilon} - m_r | < \delta'\, |\, x_{N^{\varepsilon}}^{\varepsilon} < M_r -\delta } =  1,  \\ & \lim_{\varepsilon \downarrow 0} \p{ |x_{\lfloor n_\varepsilon \rfloor 
       }^{\varepsilon} - m_r | < \delta'\, |\, x_{N^{\varepsilon}}^{\varepsilon} > M_r +\delta } =  0. 
\end{split}
\end{equation*}
It remains to use Theorem~\ref{th_sharp_max}.
 \end{proof}

\end{document}